\documentclass{kais}
\setlength{\paperheight}{11in}
\pdfoutput=1
\interfootnotelinepenalty=10000  

\usepackage{comment}     
\newcommand{\comments}[1]{}
\newcommand{\rem}[1]{}
\newcommand{\REM}[1]{}

\def\useHyperref{1}
\if\useHyperref1
	\usepackage[
				pdfpagemode=UseNone, 	
				pdfstartview=FitH,    
				%
				]{hyperref}    
	
	\urlstyle{same} 
	\usepackage[all]{hypcap}
\fi

\usepackage[pdftex]{graphicx}
\graphicspath{{Images/jpg/120DPI/},{Images/pdf/},{Images/pdf/Expr811/},{Images/pdf/Expr816/},{Images/jpg/},{}}
\def\usePDF{1}
\if\usePDF1
	\DeclareGraphicsExtensions{.pdf,.jpg}
	\pdfminorversion=5
\else
	\DeclareGraphicsExtensions{.jpg}
\fi

\usepackage[caption=false,font=footnotesize]{subfig}

	\usepackage{wrapfig} 
	
    \usepackage[square,sort,comma,numbers]{natbib}

\usepackage{fixltx2e}

\usepackage{url} 

\hyphenation{op-ti-cal net-works sub-matrix} 

\usepackage[linesnumbered,algoruled,vlined]{algorithm2e} 

\usepackage{amsmath}

\usepackage{float}
\restylefloat{table}
\restylefloat{figure}

\def \CCS  {co-cluster}  
\def \CC   {{\CCS}ing}   
\def \UCCS {Co-cluster}  
\def \UCC  {{\UCCS}ing}  

\def \LC  {lagged {\CCS}}    
\def \ULC {Lagged {\CCS}}    
\def \LCP {{\LC}ing problem} 
\def \LCA {\textbf{LC}}      

\def \FLC  {fuzzy {\LC}}    
\def \UFLC {Fuzzy {\LC}}    
\def \FLCP {{\FLC}ing problem} 
\def \FLCANB {FLC}      
\def \FLCA {\textbf{FLC}}      

\def \NPC {NP-complete}      
\def \NPH {NP-hard}      

\def \Dp {p}  

\def \MI {\beta}  
\def \Ds {s}  
\def \DS {S}  

\def \MJ {\gamma}  

\def \MF  {F}  
\def \EF  {f}  

\def \SW {\varepsilon}  
\def \SWTF {\SW_{_{T,\MF}}}


\def \ExprArtificialProbArtifact                {{\EXPERIMENT} I}
\def \ExprArtificialDiscrSetSize                {{\EXPERIMENT} II}
\def \ExprArtificialDiscrProbabilities          {{\EXPERIMENT} III}
\def \ExprArtificialLowDiscrSizeMoreIterations  {{\EXPERIMENT} IV}
\def \ExprArtificialRunTimeHitRate              {{\EXPERIMENT} V}
\def \ExprArtificialErrFuzzEffect               {{\EXPERIMENT} VI}


\def \LOOPS {N}  

\def \ST {spatio-temporal}  
\def \FIGURE {Fig.} 
\def \SECTION {Section} 
\def \SUBSECTION {Subsection} 
\def \DEFINITION {Def.} 
\def \EQUATION {Equation} 
\def \LEMMA {Lemma} 
\def \ALGORITHM {Algorithm} 
\def \THEOREM {Theorem} 
\def \LINE {line} 
\def \LLINE {Line} 
\def \EXPERIMENT {Expt.} 
\def \FOOTNOTE {Footnote} 
\def \OBSERVATION {Observation} 
\def \REMARK {Remark} 
\def \COROLLARY {Corollary} 

\newtheorem{definition}{Definition}
\newtheorem{theorem}{Theorem}
\newtheorem{corollary}{Corollary}
\newtheorem{lemma}{Lemma}

\newtheorem{observation}{Observation}
\newtheorem{remark}{Remark}

\newcommand{\nin}{\noindent}

\def\QEDclosed{\hspace*{1em}\mbox{\rule[-0.1ex]{1.7ex}{1.7ex}}} 

\received{May 28, 2012}
\revised{Mar 25, 2014}
\accepted{May 03, 2014\\The final publication is available at Springer via http://dx.doi.org/10.1007/s10115-014-0758-7}
	
\pubyear{2014}
	\pagerange{\pageref{firstpage}--\pageref{lastpage}}
	\volume{xxx}

\begin{document}
	\label{firstpage}

\title{Co-clustering of Fuzzy Lagged Data}

	\author[E. Shaham et al]{
	Eran~Shaham$^1$, David~Sarne$^1$ and~Boaz~Ben-Moshe$^2$\\
	$^1$Department of Computer Science, Bar-Ilan University, Ramat-Gan, 52900 Israel\\
	$^2$Department of Computer Science, Ariel University, Ariel, 44837 Israel\\
	Email: erans@macs.biu.ac.il, sarned@macs.biu.ac.il, benmo@ariel.ac.il
	}	

\maketitle



\begin{abstract}
The paper focuses on mining patterns that are characterized by a  \emph{fuzzy lagged} relationship between the data objects forming them.
Such a regulatory mechanism is quite common in real-life settings.
It appears in a variety of fields:
finance, gene expression, neuroscience,
crowds and collective movements, are but a limited list of examples.
Mining such patterns not only helps in understanding the relationship between objects in the domain, but assists in forecasting their future behavior.
For most interesting variants of this problem, finding an optimal \FLC\ is an \NPC\ problem.
We present a polynomial-time Monte-Carlo approximation algorithm for mining {\FLC}s.
We prove that for any data matrix, the algorithm mines a \FLC\ with fixed probability,
which encompasses the optimal {\FLC}
by a maximum 2 ratio columns overhead and completely no rows overhead.
Moreover, the algorithm handles noise, anti-correlations, missing values and overlapping patterns.
The algorithm was extensively evaluated using both artificial and real-life datasets.
The results not only corroborate the ability of the algorithm to efficiently mine relevant and accurate {\FLC}s, 
but also illustrate the importance of including fuzziness in the lagged-pattern model.
\end{abstract} 

	\begin{keywords}
	fuzzy lagged data clustering; \ST\ patterns; time-lagged; biclustering; data mining
	\end{keywords}


\section{Introduction} \label{sec:Introduction}

\def \FOne   {{F$_1$}}

A by-product of modern life is the ever growing trace of digital data;
these might be pictures uploaded to the web, 
cellular trajectories collected by mobile providers,  
or the earth's climate monitored by buoys, balloons and satellites. 
The feature common to such data is its temporal nature.
Mining these data can facilitate uncovering the hidden regulatory mechanisms governing the data objects.

\def \HEIGHT    {4.15cm} 
\def \WIDTH	    {6.7cm} 

Early mining techniques used the key concept of clustering to look for patterns formed by a subset of the objects over all attributes, or vice versa  \cite{jain1999data,berkhin2006survey}.
Following seminal work by Cheng and Church \cite{cheng2000biclustering} in the area of gene expression using microarray technology, substantial focus has been placed in recent years on \CC\ \cite{madeira2004bab,jiang2004cag}.
\UCC\ extends clustering by aiming to identify a {\em subset} of objects that exhibit similar behavior across a {\em subset} of attributes, or vice versa.
Very few \CC\ studies have considered the problem of mining patterns that have a {\em lagged} correlation between a subset of the objects over a subset of the attributes
\cite{shaham2011sc,yin2007mining,wang2010efficiently}.
For example, consider the problem of identifying a flock of pigeons from among a large collection of flight trajectories (that is, mining a coordinated movement of a subset of objects across a subset of time attributes) \cite{nagy2010hierarchical}.
The flock's spatial coordinated flight, where each member follows the leader with some {\em lag} (delay), is a lagged pattern comprising the flock members' tempo-spatial locations (trajectories).
The underlying assumption in these works is that the lagged correlation, if it exists, is fixed (i.e., with no noise whatsoever).

In real-life settings, however, lagged patterns are typically \textbf{noisy}.
For example, consider the flock's coordinated flight described above.
Overall the flock maintains a general lagged flight formation (each member follows the leader with some lag). Yet, a closer look will reveal that each member deviates from that lag to some extent (due to wind changes, threats, physical strength, etc).
The flock's flight pattern can, however, be captured by a \CCS\ comprising fuzzy lags. 
\begin{figure}
  \centering
  \includegraphics[clip=true,trim=34 438 32 39, height=\HEIGHT, width=\WIDTH]{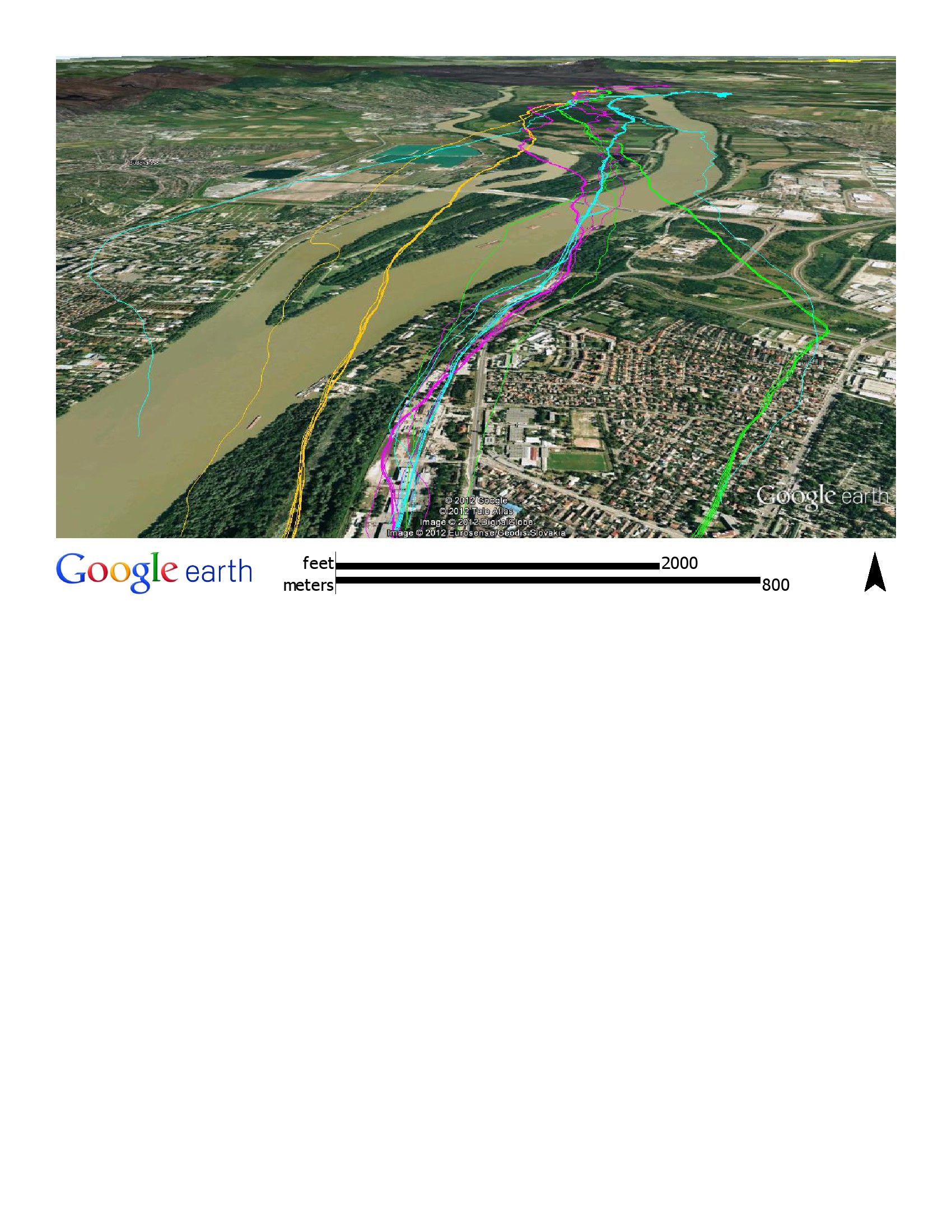} 
  \caption{Snapshot of the flight of flocks of pigeon.}
  \label{intro:examples:pigeon-hf-interleaving}
\end{figure}
\FIGURE~\ref{intro:examples:pigeon-hf-interleaving} presents such real-life flight trajectories, where each line represents a pigeon's trajectory (pigeons belonging to the same flock are denoted by the same color).
The presence of interleaving trajectories presents a serious challenge to mining algorithms (e.g., density-based algorithms \cite{ester1996density}), as well as to humans (see \SUBSECTION~\ref{subsec:Expr:birds}).
We denote a {\LC} which includes a fuzzy correlation between a subset of objects over a subset of lagged attributes as a \emph{\textbf{{\FLC}}}.
The problem, as later proved, is {\NPC} for most interesting cases.

Similar fuzzy lagged behavior can be observed during the mining of a group of people that coordinate their movements within a crowd (e.g., a group of terrorists trying to move from point A to point B). The group would maintain a general lagged formation where each member follows the leader with some lag.
However, due to obstacles, temporary loss of eye contact, and other difficulties, the group's members would probably be compelled to deviate from that fixed lag.
Additional motivation for studying {\FLC}s comes from the field of medicine within the context of disease relationships and causality. 
Given a dataset where an object is a disease, an attribute is an age, and an entry of the matrix is the number of occurrences of a disease in an age (the number of occurrences can be obtained from medical articles, hospital records, etc),
the causality of diseases would be captured by a lagged \CCS.
However, the lag is expected to be of a fuzzy nature due to change in medical treatment, difference in disease development, inaccuracy of the dataset, etc.
Mining such patterns can assist not only in the early detection of diseases, but also in providing better preventive treatment.

\def \HEIGHT    {4.4cm} 
\def \WIDTH	    {3.8cm} 

\begin{figure*}
  	\centering	
  	
  	\subfloat[Dataset]
	  {	
			\includegraphics[clip=true,trim=86 359 82 67, height=\HEIGHT]{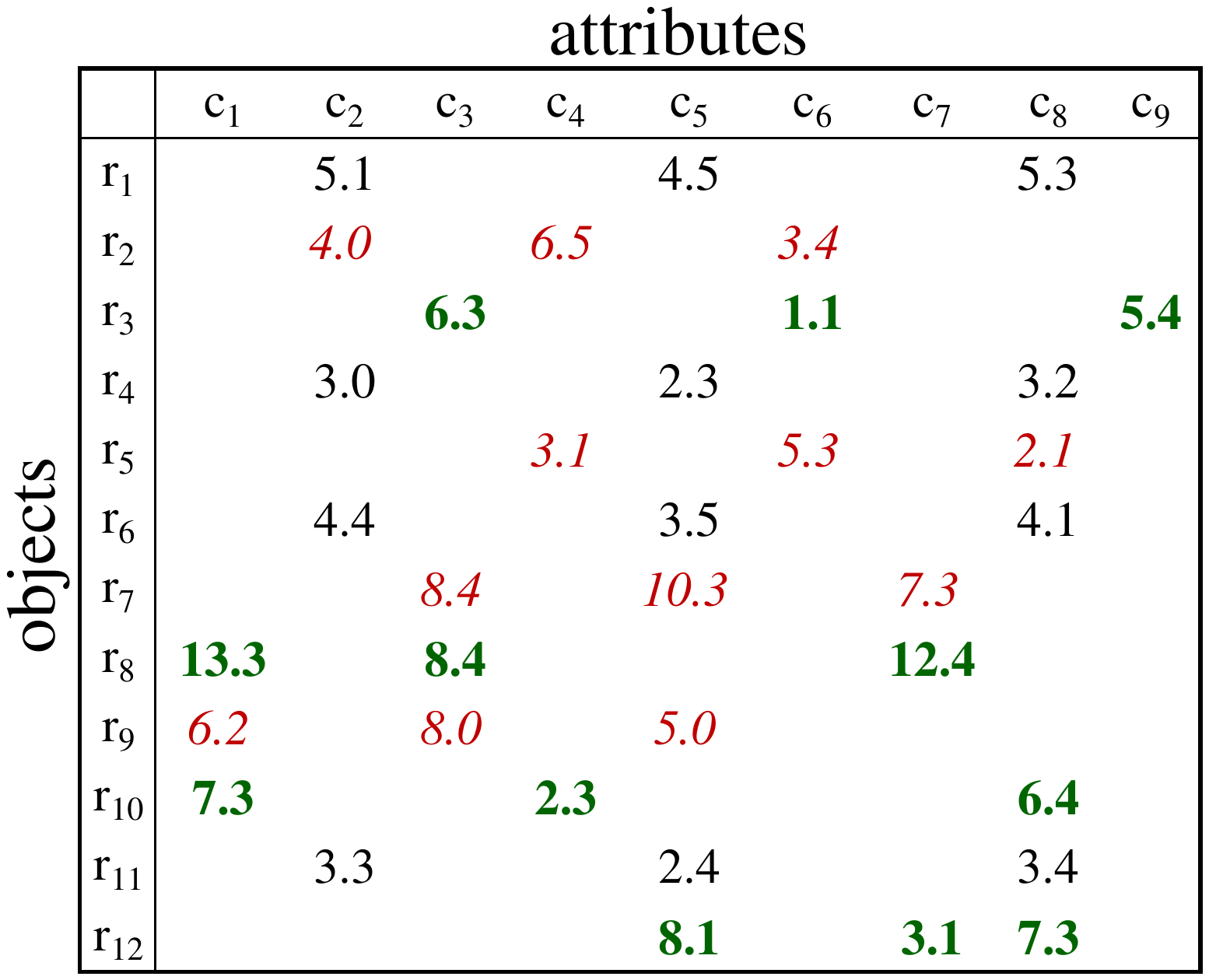}
			\label{intro:dataset}
	  }	
 	  \subfloat[Fuzzy lagged dataset]
	  {
			\includegraphics[clip=true,trim=78 376 96 68, height=\HEIGHT]{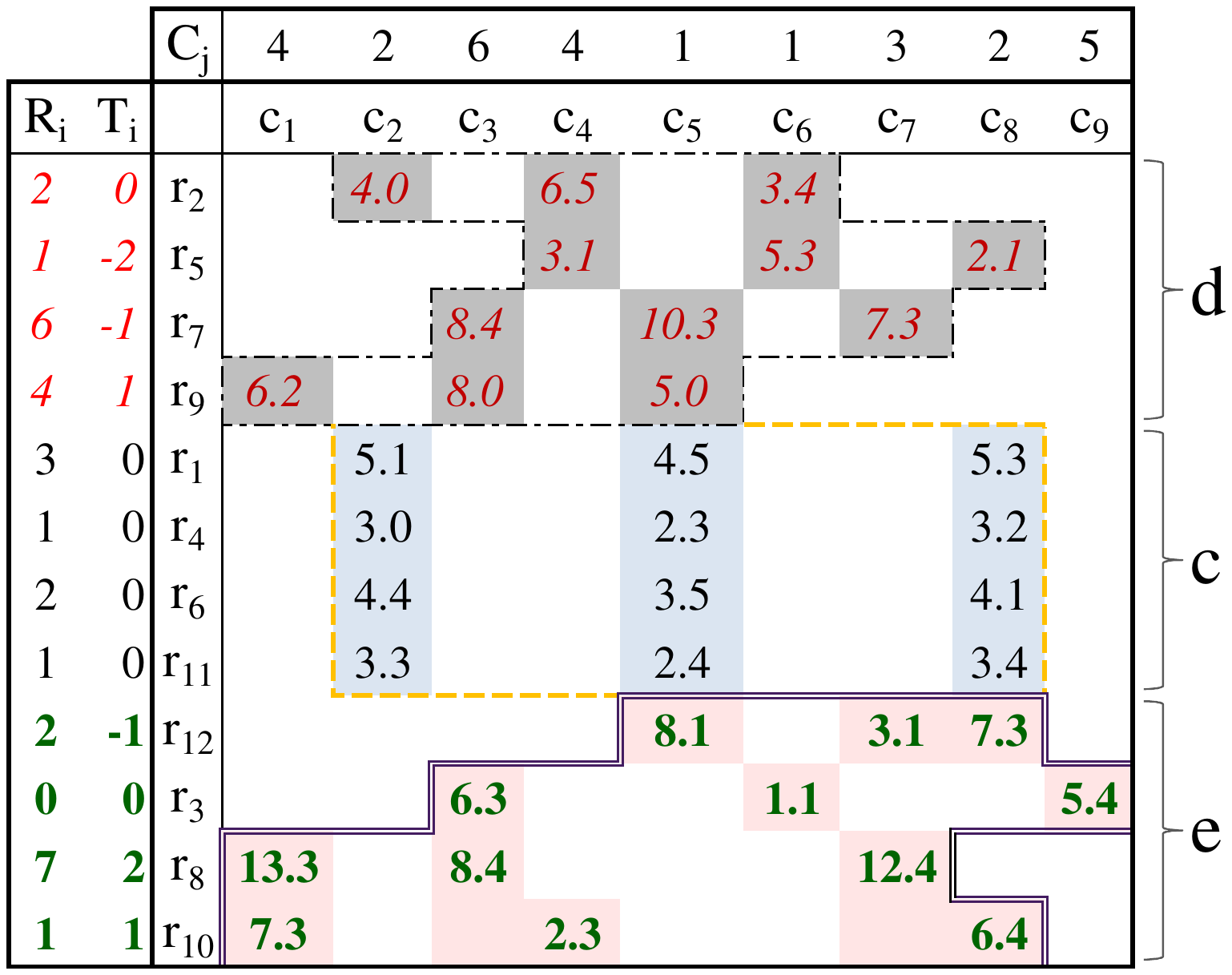}
			\label{intro:fuzzy-lagged-dataset}
	  }

	  \centering	
	  \subfloat[\UCCS]
	  {
			\includegraphics[clip=true,trim=52 75 62 85, width=\WIDTH]{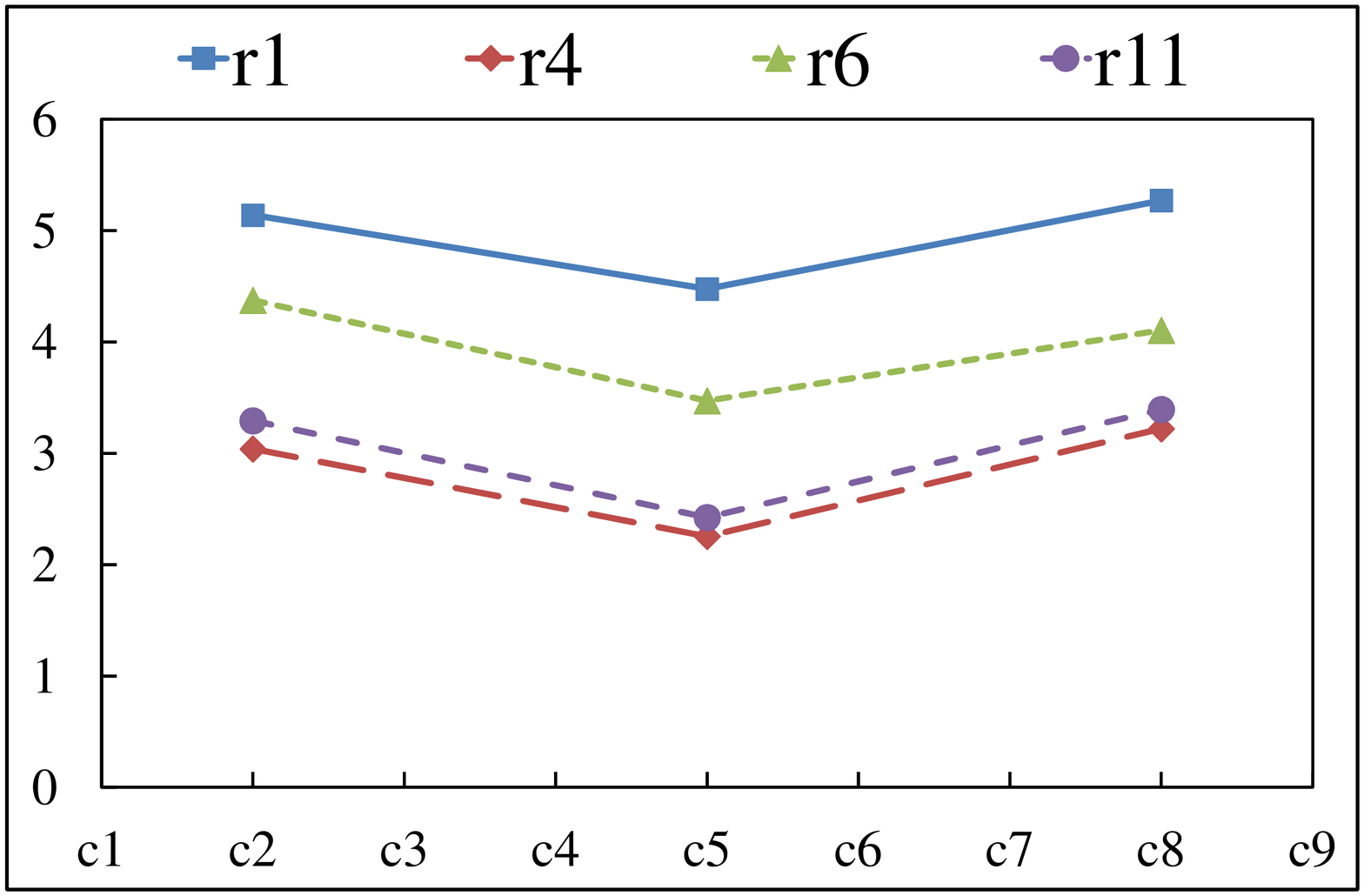}
			\label{intro:coCluster}
	  }
	  \subfloat[\ULC]
	  {
			\includegraphics[clip=true,trim=52 75 62 85, width=\WIDTH]{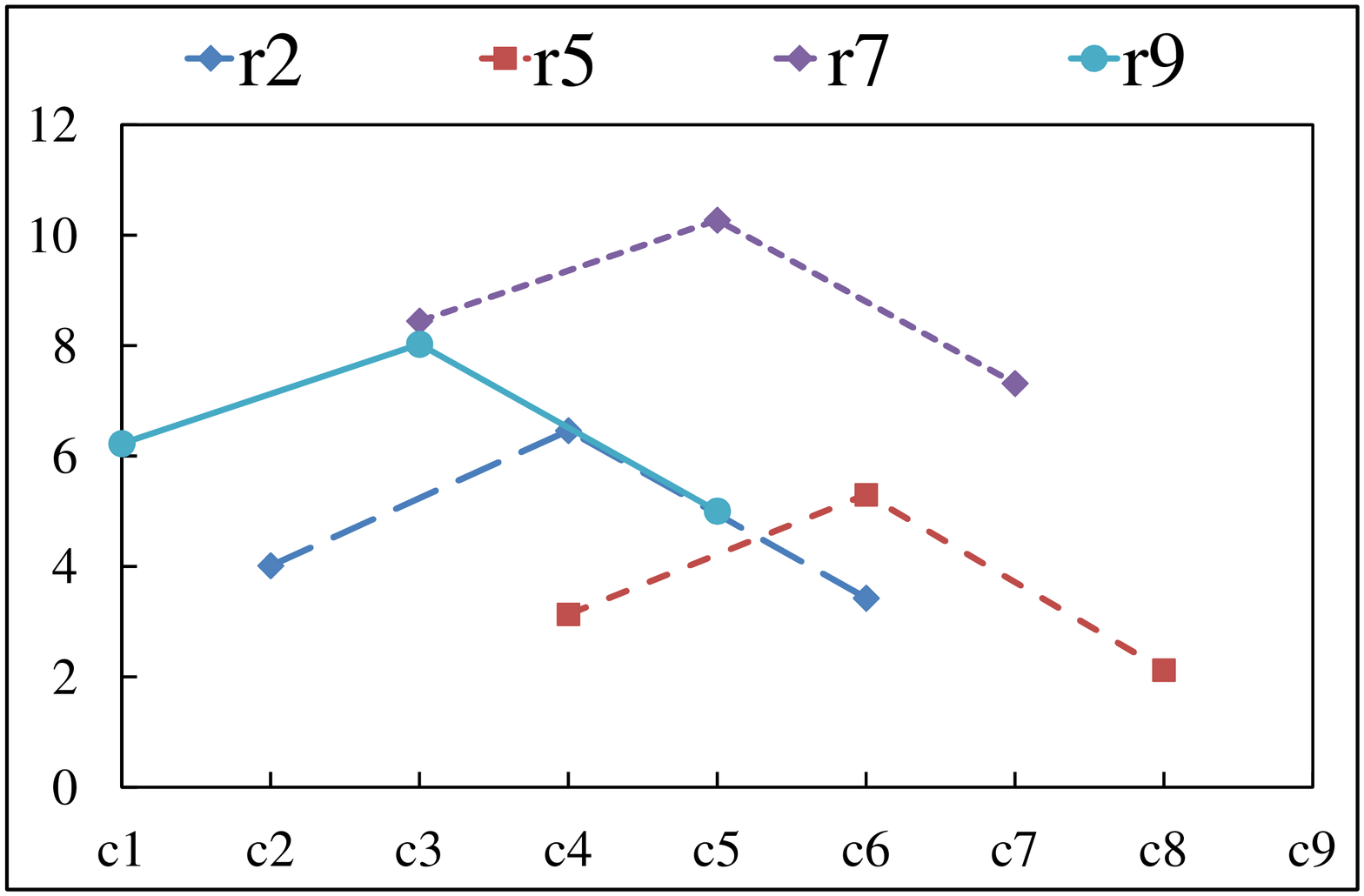}
			\label{intro:lagged-cluster}
	  }
	  \subfloat[\UFLC]
	  {
			\includegraphics[clip=true,trim=52 77 62 85, width=\WIDTH]{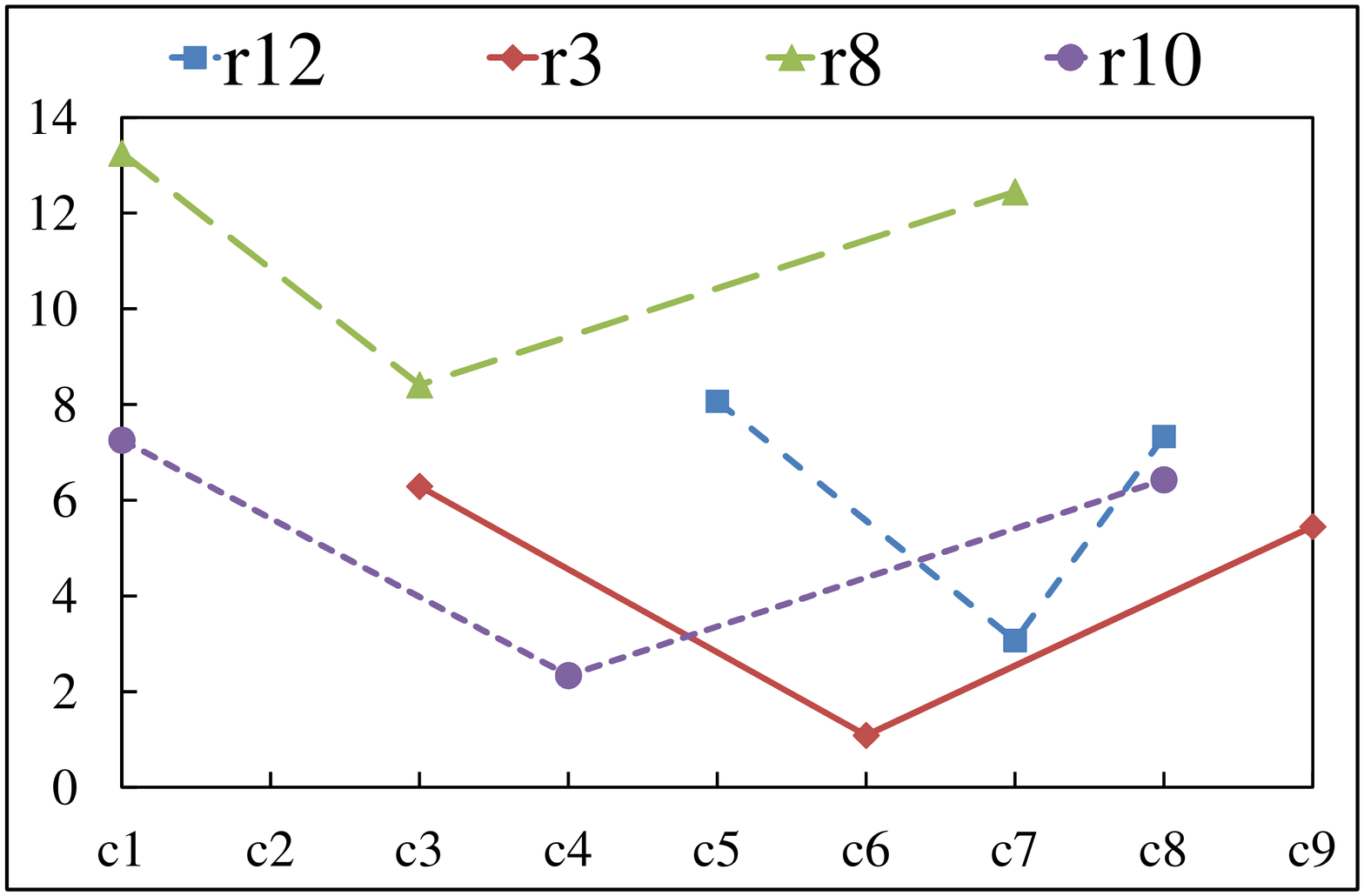}
			\label{intro:fuzzy-lagged-cluster}
	  }
	\caption{Example of a fuzzy lagged dataset (based on \cite{yin2007mining}).}
	\label{intro:examples}
\end{figure*}

\FIGURE~\ref{intro:examples} presents an example of a fuzzy lagged dataset and various clusters within it.
\FIGURE~\ref{intro:dataset} depicts an example of a matrix dataset (for simplicity, certain cells have been left blank). \FIGURE~\ref{intro:fuzzy-lagged-dataset} represents the same matrix after row permutation.
Three clusters emerge, as follows.
%
\FIGURE~\ref{intro:coCluster} (middle part of matrix \ref{intro:fuzzy-lagged-dataset}):
a {\em \CCS} with neither lag nor fuzziness.
The value of a cluster entry, $A_{i,j}$,
may deviates from being expressed as the sum of the column profile,
$R_i$ (=\{3, 1, 2, 1\} for row $i$=\{1, 4, 6, 11\}, respectively),
and the row profile,
$C_j$ (=\{2, 1, 2\} for column $j$=\{2, 5, 8\}, respectively),
by a maximum allowed error $\SW\leq0.5$.
That is: $|(R_i + C_j) - A_{i,j}| \leq \SW = 0.5$.\footnote{
Throughout the example we use the notations of $R_i$ and $C_j$ of the additive model
which are an alternative representation to the notations of $G_i$ and $H_j$ of the multiplicative model.
See more details in the formal
model representation that follows, and in particular the
definitions in {\EQUATION}s~\ref{model:multiplicative}--\ref{model:additive}.
}
Intuitively, the {\em column profile} indicates the regulation strength of the object, while the {\em row profile} indicates the regulatory intensity of the attribute.
For example,
the matrix entry of row $r_4$ and column $c_8$ is 3.2,
which deviates from the expected value of:
$R_i$+$C_j$=$R_4$+$C_8$=1+2=3, by an error of 0.2.

%
\FIGURE~\ref{intro:lagged-cluster} (upper part of matrix \ref{intro:fuzzy-lagged-dataset}) exemplifies a {\em \LC}, with no fuzziness.
Here, the value of a cluster entry, $A_{i,j}$,
may deviate from being expressed as $R_i + C_{j+T_i}$ by a maximum error of 0.5.
That is: $|(R_i + C_{j+T_i}) - A_{i,j}| \leq \SW = 0.5$.
For example,
the matrix entry of row $r_7$ and column $c_3$ is 8.4,
which deviate from the expected value of: ($T_7$=--1) $R_i$+$C_{j+T_i}$=$R_7$+$C_{3+T_7}$=$R_7$+$C_{3-1}$=$R_7$+$C_2$=6+2=8, by an error of 0.4.
%
\FIGURE~\ref{intro:fuzzy-lagged-cluster} (lower part of matrix \ref{intro:fuzzy-lagged-dataset}) exemplifies a {\em \FLC}.
Here, the value of a cluster entry, $A_{i,j}$,
not only may vertically deviate from being expressed as $R_i+C_{j+T_i}$ by a maximum error of 0.5,
but also may horizontally deviate from $C_{j+T_i}$ by a maximum fuzziness, $\MF$, of two.
That is: $|(R_i + C_{j+T_i+\EF_{i,j}}) - A_{i,j}| \leq \SW = 0.5$, for some $\max_{i,j}\{\EF_{i,j}\}\leq 2$.
For example,
the matrix entry of row $r_3$ has a {\em zero} lag ($T_3$=0) and {\em zero} fuzziness over the columns $c_3, c_6$ and $c_9$, i.e., $\EF_{3,j} = \{0, 0, 0\}$ (to ease readability, \FIGURE~\ref{intro:examples} does not present the fuzziness values).
\textbf{Relative} to $r_3$, object $r_8$ has a lag of $T_8$=2
and fuzziness of $\EF_{8,j} = \{0, 1, 0\}$ (relative to $r_3$ columns);
object $r_{10}$ has a lag of $T_{10}$=1 and fuzziness of $\EF_{10,j} = \{1, 1, 0\}$,
and object $r_{12}$ has a lag of $T_{12}$=--1 and fuzziness of $\EF_{12,j} = \{-1, 0, 2\}$.
For example, the matrix entry of row $r_{12}$ and column $c_8$ is 7.3,
which deviate from the expected value of:
($T_{12}$=$-1$, $\EF_{12,c_8}$=2)  
$R_i$+$C_{j+T_i+\EF_{i,j}}$=$R_{12}$+$C_{8+T_{12}+\EF_{{12},8}}$=$R_{12}$+$C_{8-1+2}$=$R_{12}$+$C_9$=2+5=7, by an error of 0.3.

The main contribution of the paper is in introducing a polynomial time approximation algorithm for mining \textit{{\FLC}s}, hereafter denoted as the \textbf{\FLCA} algorithm.
To the best of our knowledge, this is the first attempt to develop such an algorithm.
The input of the \FLCA\ algorithm is a real number matrix (where rows represent objects and columns represent attributes), a maximum error value and a maximum fuzziness degree.
%
The algorithm uses a Monte-Carlo strategy to guarantee, with fixed probability, the mining of a {\FLC} which encompasses the optimal {\FLC} by a maximum 2 ratio columns overhead and completely no rows overhead.
This guarantee holds for any monotonically increasing objective function defined over the cluster dimensions.
Many of the inherent shortcomings common to non-fuzzy, non-lagged data
\cite{tanay2005bas,berkhin2006survey}
are handled by the \FLCA\ algorithm, including noise (due to human or machine inaccuracies); missing values (e.g., equipment malfunction); anti-correlations (down-regulation, to adopt gene expression terminology) and overlapping patterns.
%
The algorithm and its properties were extensively evaluated using both artificial and real-life datasets.
The results not only corroborate the algorithm's ability to efficiently mine relevant and accurate {\FLC}s,
but also illustrate the importance of including fuzziness in the lagged-pattern model.
With this inclusion, a significant improvement is achieved in both \emph{coverage} and \emph{\FOne} measures in comparison to using the regular (non-fuzzy) lagged co-clustering model.
Moreover, the {\FLCA} algorithm presented classification capabilities which were superior to the ones presented by both the non-fuzzy lagged model and those of human subjects.

The remainder of the paper is organized as follows.
\SECTION~\ref{sec:Model} formally introduces the model
and shows that most interesting variants of the problem are \NPC.
In \SECTION~\ref{sec:Algorithm} we present the algorithm followed by a run-time analysis,
proof of the probabilistic guarantee to efficiently mine relevant {\FLC}s and extensions to the algorithm.
\SECTION~\ref{sec:Experiments} presents the experiments that were conducted and their results.
In \SECTION~\ref{sec:Related Work} we review related work.
We conclude with a discussion and suggested directions for future research in \SECTION~\ref{sec:Conclusions}.


\section{Model} \label{sec:Model}

\def \RvsC {{\psi}}
\def \TF {\mu} 


A \LC\ of a real number matrix is a tuple $(I,T,J)$,
representing a submatrix determined by
a subset of the columns $J$ 
over a subset of the rows $I$ 
with their corresponding lags $T$ ($|T|$=$|I|$) \cite{wang2010efficiently} (see example in \FIGURE~\ref{intro:lagged-cluster}).
The {\FLC}ing model augments the \LC\ definition, enabling fuzziness in the lagged pattern.
\begin{definition} \label{model:flc}
  A \FLC\ of an $m \times n$ real number matrix $X$
  is a tuple $(I,T,J,\MF)$, where
  $J$ is a subset of the columns, 
  $I$ is a subset of the rows 
  with their corresponding lags $T$, 
  aligned to some fuzzy lagged mechanism
  by a maximal fuzziness degree of $\MF$
  (see example in \FIGURE~\ref{intro:fuzzy-lagged-cluster}).
  The fuzziness reflects the ability of a column to deviate from its lagged location, by a maximum of $\MF$ columns.
\end{definition}
%

%
A fuzzy lagged regulatory mechanism holds if
for all $j\in J$,
each pair of rows $i_1,i_2 \in I$, their corresponding lags $T_{i_1},T_{i_2}$ and fuzziness $\EF_{i_1,j},\EF_{i_2,j}$,
the proportion between the matrix entries is some constant, $C_{i_1,i_2}$, dependent only on the rows $i_1,i_2$ and independent of the columns $J$:\footnote{
Based on the standard \CC\ model definition, according to which
$\forall j \in J$, $X_{i_1,j}/X_{i_2, j}=C_{i_1,i_2}$ \cite{melkman2004sc,cheng2000biclustering}
and the {\LC}ing model definition, according to which
$\forall j \in J$, $X_{i_1,j+T_{i_1}}/X_{i_2, j+T_{i_2}}=C_{i_1,i_2}$ \cite{shaham2011sc,wang2010efficiently}.
}
$$X_{i_1, j+T_{i_1}+\EF_{i_1,j}}/X_{i_2, j+T_{i_2}+\EF_{i_2,j}}=C_{i_1,i_2}.$$ 
Let
$G_i$ indicate the regulation strength of object $i$;
$T_i$ indicate the influencing-lag of object $i$;
$H_j$ indicate the regulatory intensity of attribute $j$;
and $\EF_{i,j}$ indicate the fuzzy alignment of object $i$ to attribute $j$.
Thus, the submatrix elements of a {\FLC} should comply with the relation:
$X_{i,j} \approx G_i H_{j+T_i+\EF_{i,j}}$ for all $i\in I$ and $j\in J$.
We use the non-{\FLC}ing \textit{relative error} criteria \cite{wang2010efficiently,shaham2011sc}
to express the deviation of $X_{i,j}$ from the approximation of $G_i H_{j+T_i+\EF_{i,j}}$.
Thus, our aim is to mine large submatrices which follow a fuzzy lagged regulatory mechanism, with a relative error below a pre-defined threshold:
\begin{equation}
  \frac{1}{\eta} \leq \frac{G_i H_{j+T_i+\EF_{i,j}}}{X_{i,j}} \leq {\eta}, \ \forall \ i \in I,\ j \in J.
  \label{model:multiplicative}
\end{equation}

To ease analysis, we move from a multiplicative model to an additive model.
We do so by applying a logarithm transformation, setting
$A_{i,j}$ = $\log(X_{i,j})$,
$R_i$ = $\log(G_i)$,
$C_{j+T_i+\EF_{i,j}}$ = $\log(H_{j+T_i+\EF_{i,j}})$ 
and $\SW$ = $\log(\eta)$.
Therefore, our problem turns into finding $R_i$, $T_i$, $C_j$ and $\EF_{i,j}$, such that for all $i,j$:\footnote{\label{model:inverse-correlations}
	For an \textbf{\textit{anti}} fuzzy lagged correlations, i.e.,
	$X_{i,j} \approx G_i / H_{j+T_i+\EF_{i,j}}$, one should apply:
    \newline
	$-\SW \leq R_i - C_{j+T_i+\EF_{i,j}} - A_{i,j} \leq \SW.$  \label{model:additive-inverse}
}
\begin{equation}
    -\SW \leq R_i + C_{j+T_i+\EF_{i,j}} - A_{i,j} \leq \SW.
    \label{model:additive}
\end{equation}

The optimality of a submatrix depends on the objective function $\TF(I,J)$ being used.
Examples of such common functions are:
area: $\TF(I,J) = |I| \cdot |J|$;
perimeter: $\TF(I,J)=|I|+|J|$;
and $\TF(I,J)=|I| / \RvsC^{|J|}, 0 < \RvsC < 1$ \cite{procopiuc2002mca,melkman2004sc}, which favors the inclusion of one column over the exclusion of a relatively large amount of rows. Such preferment of columns over rows appears mostly in biologically-oriented datasets where $m \gg n $ \cite{jiang2004cag}.
Nevertheless, for many fuzzy lagged datasets, assumptions relating to the number of rows vs. the number of columns is usually futile, i.e., a temporal dataset will usually contain thousands of time readings or, in an on-line version, an infinite stream of columns.
Consequently, we allow the use of any monotonically growing objective function $\TF(I,J)$.
Thus, our problem turns into mining an \textit{optimal size} submatrix with a relative error below some given threshold.

\begin{definition} \label{model:def-error}
The \emph{error} of a submatrix $A$, defined by
a subset $J$ of the columns, a subset $I$ of the rows and their corresponding lags $T$ is:
	\begin{equation}
	   \SWTF(I,J)=\min_{R,C} \max_{i\in I,j\in J} | R_i + C_{j+T_i+\EF_{i,j}} - A_{i,j} |.
	\end{equation}
\end{definition}
The error reflects the maximum deviation of a {\FLC}'s entry, from being expressed as $R_i + C_{j+T_i+\EF_{i,j}}$.

At this point, we have all that is required to formally define a \FLC.
As mining small clusters, e.g., $[2 \times 2]$, may not be of interest,
we further extend the model to enable the user to specify the desired minimum dimensions:
(1) minimum number of rows, expressed as a fraction of $m$, denoted $\MI$;
and (2) minimum number of columns, expressed as a fraction of $n$, denoted $\MJ$.

\begin{definition} \label{model:def-flc}
Let $A$ be a matrix of size $m \times  n$, $\MF\geq0$ and
$0 <  \MI, \MJ \leq 1$ constants independent of the matrix dimensions.
A {\FLC} of a matrix $A$ with an error $w \geq 0$ is a tuple $(I,T,J,\MF)$ with
$J$ a subset of the columns,
$I$ a subset of the rows with their corresponding lags $T$,
which satisfies the following:
\begin{itemize}
	\item 
  	\textbf{Size:}
  	   The number of rows is $2 \leq \MI m \leq |I|$
  	   and the number of columns is $2 \leq \MJ n \leq |J|$.

	\item
  	\textbf{Fuzziness:}
       $-\MF \leq \EF_{i,j} \leq \MF$, for all $i\in I$ and $j\in J$.

	\item
  	\textbf{Error:}
		   $\SWTF(I,J) \leq w$.
		   i.e., for all $i \in I$ and $j \in J$ there exists $R_i$, $T_i$ and $C_j$, such that
		   $|R_i+C_{j+T_i+\EF_{i,j}}-A_{i,j}|\leq w$.
		   $R_i,\ i\in I$ will be called a column profile,
		   $T_i,\ i\in I$ will be called a lagged column profile 
		   and $C_j,\ j\in J$ will be called a row profile.
	
\end{itemize}
\end{definition}

As a consequence,
lagging row $i$ by $T_i$ and shifting it by $R_i$,
will place each column $j \in J$, aligned with its fuzziness $\EF_{i,j}$,
within a maximal error of $w$ of the \textit{row profile}.
The specific case of $\EF_{i,j}=0$ for all $i\in I$ and $j\in J$, is equivalent to the non-{\FLC} definition given in the previous chapter.

\subsection{Hardness Results} \label{subsection:NP-completeness}

The complexity of the \FLCP\ depends
on the nature of the cluster being mined, which is reflected by the objective function $\TF$ being used.
Former literature has shown that many such non-fuzzy and non-lagged instances are
\NPC~\cite{lonardi2006fbr,cheng2000biclustering,shaham2011sc,peeters2003meb}.

\pagebreak[3]
\begin{observation} \label{NP-completeness:FLCP-LCP-CCC}
	Any hardness or inapproximability, resulting either from the non-fuzzy or the non-lagged problem,
	implies the same result for the fuzzy lagged problem.
\end{observation}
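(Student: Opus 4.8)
The plan is to observe that the two problems referred to in the statement — the non-fuzzy lagged problem and the non-lagged problem — are both obtained from the \FLCP\ by \emph{fixing} one of its parameters, and then to invoke the elementary fact that such a fixed-parameter restriction is a (trivial, polynomial-time) reduction, so that every \NPH ness or inapproximability lower bound known for the restricted problem is inherited by the more general \FLCP. Concretely, I would spell out the two specializations of Definition~\ref{model:def-flc} and verify in each case that the feasible-solution set and the value of the objective function $\TF$ on every candidate solution are preserved by the specialization, which is exactly what is needed for the reduction to be an equivalence.

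For the \textbf{non-fuzzy} direction: set $\MF = 0$ in Definition~\ref{model:def-flc}. The fuzziness constraint $-\MF \le \EF_{i,j} \le \MF$ then forces $\EF_{i,j} = 0$ for all $i \in I$, $j \in J$, so the error functional $\SWTF(I,J)$ of Definition~\ref{model:def-error} collapses to $\min_{R,C}\max_{i,j}|R_i + C_{j+T_i} - A_{i,j}|$, which is precisely the relative-error criterion of the (non-fuzzy) lagged co-clustering model — exactly as noted in the remark immediately following Definition~\ref{model:def-flc}, and consistent with (\ref{model:additive}) specialized to $\EF \equiv 0$. Hence an arbitrary instance of the lagged problem, together with its size parameters $\MI, \MJ$ and error bound, \emph{is} an instance of the \FLCP\ with $\MF=0$, the two problems have the same feasible tuples $(I,T,J)$ and the same objective values, and the identity map is a valid reduction; any hardness or inapproximability of the lagged problem therefore transfers verbatim. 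The analogous move for the \textbf{non-lagged} direction is to additionally pin the lags to zero ($T_i = 0$ for all $i\in I$), which turns the model into the ordinary co-clustering problem studied in the cited literature \cite{cheng2000biclustering,peeters2003meb,lonardi2006fbr,shaham2011sc}.

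The step I expect to be the only genuine obstacle is making the non-lagged direction into a true \emph{if-and-only-if}: unlike $\MF$, the lag vector $T$ is not an input parameter of the \FLCP\ but part of the solution, so a \FLCP\ solver handed a co-clustering instance is free to exploit nonzero lags (and, if $\MF>0$, nonzero fuzziness) and might thereby return a strictly larger cluster, so that the YES-instances of the two decision problems need not coincide. I would resolve this in one of two ways: (i) route through the non-fuzzy case — the non-lagged co-clustering problem is itself the $T\equiv 0$ specialization of the lagged problem, whose \NPC ness is already established in \cite{shaham2011sc}, and the non-fuzzy reduction above then finishes the job; or (ii) precede the reduction by a padding gadget that interleaves the columns of the original matrix with blocks of mutually incompatible ``sentinel'' columns, chosen so that any window selected with a nonzero lag or nonzero $\EF_{i,j}$ necessarily picks up a sentinel entry and violates the error bound $w$, forcing $T_i=\EF_{i,j}=0$ in every feasible solution and restoring the equivalence of the optima. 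Once this point is discharged, the observation follows immediately from transitivity of reductions.
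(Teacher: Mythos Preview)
Your approach is correct and matches the paper's: both arguments rest on the observation that the \LCP\ is the $\MF=0$ specialization of the \FLCP, and the \CC\ problem is in turn the $T\equiv 0$ specialization of the \LCP, so that hardness transfers upward along the chain. The paper's proof is much terser than yours---three sentences stating this chain of generalizations and concluding ``by negation'' that a polynomial-time \FLCP\ solver would solve the restricted problems.

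The subtlety you flag---that $T$ is part of the solution rather than an input parameter, so a \FLCP\ solver handed a \CC\ instance could exploit nonzero lags---is real, and the paper does not address it explicitly. Your option~(i), routing through the \LCP\ and invoking the $\textup{\CC}\le_p\textup{\LCP}$ reduction already established in \cite{shaham2011sc}, is exactly what the paper's chain $\textup{\FLCP}\supset\textup{\LCP}\supset\textup{\CC}$ is implicitly doing; the $\MF=0$ step is clean precisely because $\MF$ \emph{is} an input parameter, and the remaining step is delegated to prior work. Your option~(ii) is unnecessary here but not wrong.
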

\begin{proof}
	The \FLCP\ extends the \LCP\ ($\EF_{i,j}$=0, $\forall i\in I, j\in J$),
	which in turn extends the \CC\ problem ($T_i$=0, $\forall i\in I$).
	Thus, any valid instance of the non-fuzzy or the non-lagged problems can be seen as an instance
	of the fuzzy lagged problem.
	By negation, a polynomial time algorithm for the \FLCP\
	would allow the \LCP\ or the \CC\ problem to be solved optimally in polynomial time -- contradiction.
\end{proof}

The following observation demonstrates a polynomial reduction between a fuzzy lagged instance and a non-fuzzy, non-lagged instance.
\begin{observation} \label{NP-completeness:convert-FLC-2-regular-matrix}
  Let $A$ be a fuzzy lagged matrix of
  size $[m \times n]$
  and for all ${i\in I}$, $| \{\EF_{i,j} : \forall j \in J,\ \EF_{i,j} \neq 0 \}| \leq \log(mn)$.
	The matrix $A$
  can be presented as a non-fuzzy, non-lagged matrix $A'$,
  of size $[2mn{(2\MF+1)}^{\log(mn)} \times (3n+2\MF)]$.	
\end{observation}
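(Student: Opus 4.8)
The plan is to build $A'$ explicitly by ``unrolling'' the three extra degrees of freedom of a \FLC\ --- the per-row lag $T_i$, the per-entry fuzziness $\EF_{i,j}$, and the shift range --- into extra rows and columns, while padding every auxiliary cell with pairwise-distinct \emph{sentinel} values chosen far enough apart that no sentinel cell can ever lie inside a submatrix of error $\le w$. Once this is in place, an error-$\le w$ \CCS\ of $A'$ is forced to select, for each original row, exactly one faithful ``realization'' of it and to select the same columns across all of them, and such a selection is literally a \FLC\ of $A$. For the columns: index the columns of $A'$ by $\{1,\dots,3n+2\MF\}$, viewing the middle block $\{n+1,\dots,2n\}$ as the home copy of $A$'s $n$ columns and the two flanking blocks of width $n$ as room for an arbitrary feasible lag (every usable lag has magnitude $<n$, since at least $\MJ n\ge 2$ home columns must stay in range), with the $2\MF$ extra columns guaranteeing that an entry displaced by its fuzziness $\EF_{i,j}\in[-\MF,\MF]$ still lands in range even at the ends. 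This accounts for the $3n+2\MF$ columns.

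For the rows: for each original row $i$, each feasible lag $\tau$ (fewer than $2n$ choices), and each admissible fuzziness pattern for that row --- a choice of at most $\log(mn)$ displaced columns together with a displacement in $\{-\MF,\dots,-1,1,\dots,\MF\}$ for each, which by the hypothesis on $A$ is one of at most $(2\MF+1)^{\log(mn)}$ possibilities --- insert one row into $A'$: in column $n+j+\tau+\EF_{i,j}$ write $A_{i,j}$, and fill every remaining column with a fresh sentinel; a reserved factor $2$ absorbs the anti-correlation / overlap bookkeeping exactly as in the non-fuzzy lagged reduction. Multiplying the three counts gives at most $2mn(2\MF+1)^{\log(mn)}$ rows, which is the claimed size.

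It then remains to verify the correspondence: $(I,T,J,\MF)$ is a \FLC\ of $A$ with error $\le w$ if and only if the matching realization-rows, together with the correspondingly shifted home columns, form a \CCS\ of $A'$ with error $\le w$. In the forward direction, a realization of each $i\in I$ under $T_i$ and $\EF_{i,\cdot}$ places the chosen columns into common positions of $A'$, so the non-lagged, non-fuzzy additive criterion of \DEFINITION~\ref{model:def-error} on $A'$ is exactly the same family of inequalities $|R_i+C_{j+T_i+\EF_{i,j}}-A_{i,j}|\le w$ appearing in \DEFINITION~\ref{model:def-flc} for $A$. In the backward direction, the sentinels force any error-$\le w$ \CCS\ of $A'$ to avoid all auxiliary cells, hence to pick a genuine realization of each participating original row --- and no two realizations of the same original row can both be chosen, since they disagree on every non-sentinel column --- and to pick a genuinely consistent column set; reading the lags and fuzziness off the chosen realizations recovers a \FLC\ of $A$ with the same number of rows and columns. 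Consequently neither dimension of $A'$ is exceeded beyond the stated bound.

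The one genuinely delicate point --- and the part I expect to be the main obstacle --- is making the sentinel padding airtight. The sentinel values must be mutually spaced by more than $2w$ and placed far from $\max_{i,j}A_{i,j}$ so that no all-sentinel or mixed sentinel/data $2\times 2$ submatrix can be fitted with error $\le w$, and this must remain true after the inner $\min_{R,C}$ of \DEFINITION~\ref{model:def-error} is taken (so one argues on a fixed bad pair of rows/columns and shows no choice of $R,C$ rescues it). One must also check the boundary cases in which a lagged-and-fuzzed entry threatens to spill off the end of the column range --- which is precisely the role of the extra $2\MF$ columns. The rest of the accounting, including the $(2\MF+1)^{\log(mn)}$ bound on fuzziness patterns per row, is routine given the hypothesis that each row has at most $\log(mn)$ fuzzy columns.
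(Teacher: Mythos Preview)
Your construction is essentially the paper's: duplicate each row of $A$ once for every (lag, fuzziness-pattern) pair, count $m\cdot 2n\cdot(2\MF+1)^{\log(mn)}$ rows and $3n+2\MF$ columns, and read off a \FLC\ of $A$ from a non-fuzzy, non-lagged \CCS\ of $A'$. Two points of divergence are worth flagging.

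First, the factor $2$ in $2mn$ is not a separate ``anti-correlation / overlap'' reserve as you suggest; it is simply the $2n$ in the lag count (the paper, like the non-fuzzy reduction it cites, takes the lag range $-n\le T_i\le n$, hence $2n$ lags, and $m\cdot 2n=2mn$). As written, your accounting multiplies ``fewer than $2n$'' lags by an additional factor $2$, which would overshoot the stated bound. Drop the extra factor and the arithmetic matches.

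Second, the paper sidesteps the entire sentinel issue you identify as ``the main obstacle'': it simply marks the null entries produced by the shift/fuzz alignment as \emph{missing values}, relying on the fact that the co-clustering model it uses already accommodates missing entries (coherence is computed on the non-missing cells). This is less work than engineering pairwise-separated sentinel constants and then arguing no $2\times 2$ block containing a sentinel can be fit with error $\le w$. Your sentinel route can be made to work, but it is strictly more effort for the same conclusion; if you are allowed missing values in $A'$, use them.
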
	
\begin{proof}	
	We duplicate each row $i\in A$, to represent all possible lags and fuzziness for that row.
	Each row $i$ $(1\leq i\leq m)$ can have $2n$ possible lags $(-n \leq$ lag $\leq n)$
	and ${(2\MF+1)}^{\log(mn)}$ possible fuzziness
	(a maximum of $\log(mn)$ columns, each with a possible fuzziness
	assignment of: $-\MF \leq$ fuzziness $\leq \MF$),
	resulting in $(3n+2\MF)$ columns.
	Null entries resulting from such alignments,
	i.e., lag and fuzziness, are marked as missing values.
	The resulting non-fuzzy, non-lagged matrix $A'$, is therefore of size
	$[m(2n){(2\MF+1)}^{\log(mn)} \times (3n+2\MF)]={\cal O}((mn)^c)$
	for some $c={\cal O}(\log(\MF))$.		
	The result complies with a matrix of size
	$[2mn \times 3n]$ for the specific case of $\MF$=0 \cite{shaham2011sc}.
\end{proof}

\begin{corollary} \label{NP-completeness:proof}
	Let $A$ be a fuzzy lagged matrix.
	The problem of finding the largest square
	\FLC\ $(I,T,$ $J,\MF)$	$(|I|$=$|J|)$ in $A$
	is \NPC.	
\end{corollary}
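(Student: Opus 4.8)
The plan is to verify membership of the decision version of this problem in NP and then prove NP-hardness by a reduction from a known hard problem. I would phrase the decision version as: given a real matrix $A$, a maximal fuzziness $\MF\geq 0$, an error bound $w\geq 0$ and an integer $k$, does $A$ admit a square \FLC\ $(I,T,J,\MF)$ with $|I|=|J|\geq k$, meeting the $\MI,\MJ$ size thresholds of \DEFINITION~\ref{model:def-flc}, and with $\SWTF(I,J)\leq w$? For membership, the obvious certificate is the triple $(I,T,J)$ together with a column profile $R$, a row profile $C$ (a sequence whose index set $\{j+T_i+\EF_{i,j}\}$ lies in an interval of length $O(n+\MF)$, as in \OBSERVATION~\ref{NP-completeness:convert-FLC-2-regular-matrix}) and a fuzziness assignment $\{\EF_{i,j}\}_{i\in I,\,j\in J}$. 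Checking that $|I|=|J|\geq k$ and the $\MI,\MJ$ thresholds hold, that $-\MF\leq\EF_{i,j}\leq\MF$, and that $|R_i+C_{j+T_i+\EF_{i,j}}-A_{i,j}|\leq w$ for each of the at most $k^2$ relevant entries is clearly polynomial; hence the problem is in NP.

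For NP-hardness the cleanest route is to reduce from the largest-square \LC\ problem, which is \NPC\ by prior work (see \cite{shaham2011sc} and \OBSERVATION~\ref{NP-completeness:FLCP-LCP-CCC}): setting $\MF=0$ forces every $\EF_{i,j}$ to vanish, so a square \FLC\ is exactly a square \LC, and the reduction is the identity on the matrix together with $\MF:=0$. Should one prefer to bottom out at the non-lagged co-clustering problem instead — essentially a maximum balanced biclique problem, \NPC\ by \cite{peeters2003meb,cheng2000biclustering} — I would additionally have to neutralise the freedom in the lags $T_i$. This I would do with a padding gadget: flank the instance matrix on both sides with $\Theta(n)$ columns of missing values, and on non-edges place entries mutually so widely separated that no error-$\leq w$ submatrix can contain two rows that both read a non-edge in a common column, regardless of how their profile indices are shifted. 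Then any nonzero lag can only lose usable columns rather than gain them, and the padding simultaneously drives the $\MI,\MJ$ thresholds below the target size; an optimal square \FLC\ of the padded matrix therefore restricts to a maximum balanced biclique of the original graph, and conversely.

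The step I expect to be the crux is exactly this neutralisation of lags (and, if $\MF$ is allowed positive, of fuzziness): one must size the padding to dominate the largest admissible $|T_i|$ and argue that the ``poisoned'' non-edge values remain pairwise incompatible under every re-indexing of the profile $C$ — an argument of the same flavour as, but a little more delicate than, the classical biclique reduction, since the lagged model decouples an entry from a fixed profile position. If one is content to invoke the square \LC\ hardness of \cite{shaham2011sc} directly, this obstacle evaporates and the corollary is immediate from \OBSERVATION~\ref{NP-completeness:FLCP-LCP-CCC}; I would present that as the main line and keep the padding gadget as a self-contained fallback. As a complementary remark, \OBSERVATION~\ref{NP-completeness:convert-FLC-2-regular-matrix} supplies the converse polynomial embedding of bounded-fuzziness \FLC s into ordinary co-clustering instances, confirming that the two problems sit in the same complexity class, so the \NPC\ classification is tight.
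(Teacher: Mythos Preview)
Your main line is exactly the paper's proof: NP-hardness is obtained by invoking \OBSERVATION~\ref{NP-completeness:FLCP-LCP-CCC} (the $\MF=0$ special case collapses to the known-hard square \LC\ problem), and NP membership follows from checking each inequality $|R_i+C_{j+T_i+\EF_{i,j}}-A_{i,j}|\leq w$ in polynomial time, with you being more explicit than the paper about what the certificate contains. The padding-gadget fallback you sketch for a self-contained reduction from balanced biclique is not in the paper and is unnecessary once \OBSERVATION~\ref{NP-completeness:FLCP-LCP-CCC} is granted.
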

\begin{proof}		
	Following \OBSERVATION~\ref{NP-completeness:FLCP-LCP-CCC}, the {\FLCP} is \NPH.
	Yet, verifying a submatrix of $A$ to be a \FLC\ can be done in polynomial time by examining whether each
	entry holds the inequality of
	$-\SW \leq R_i + C_{j+T_i+\EF_{i,j}} - A_{i,j} \leq \SW$.	
	Therefore the problem is \NPC.
\end{proof}

The following \NPC\ approximations are worth mentioning \cite{shaham2011sc}:
approximating the size of the largest combinatorial square \CCS\ with an approximation factor of $n^{1-\epsilon}$;
approximating the size of the minimal sequential cluster-set for the {\CC} problem within a constant factor (Max-SNP-Hard);
and, approximating the minimal set of combinatorial squares (\CCS\ set)	with an approximation factor of $n^{1-\epsilon}$.


\section{The \textbf{\FLCA} Algorithm} \label{sec:Algorithm}

We now present the {\FLCA} algorithm.
This section also includes a proof for the algorithm's guarantee to mine with fixed probability, in a polynomial number of iterations, a {\FLC} that encompasses an optimal {\FLC}.
In addition, we supply a run-time analysis and several extensions.


\subsection{The Algorithm} \label{subsection:algorithm:Algorithm}

The input of the algorithm is:
a matrix $A$ of real numbers;
a maximum allowed error value $w$;
a maximum allowed fuzziness degree $\MF$;
a minimum fraction of the rows $\MI$;
and, a minimum fraction of the columns $\MJ$.
The algorithm itself uses a projected clustering approach.
This common technique for mining {\CCS}s \cite{lonardi2006fbr,procopiuc2002mca}
uses iterative random projection (i.e., a Monte-Carlo strategy) to obtain the cluster's seed.
It later grows the seed into a cluster.
The output using this method is guaranteed, with \emph{fixed probability},
to contain {\FLC}s that comply with the specified $\MI$, $\MJ$, $\MF$, and encompass the \emph{optimal} \FLC.
Each mined cluster precisely obtains the rows and lags of the optimal cluster,
with a maximum 2 ratio of its columns (i.e., a maximum addition of $J$ columns) and a maximum 2 ratio of its error.

\ALGORITHM~\ref{algorithm:FLC} presents the \textbf{\FLCA} algorithm.
Generally, the algorithm can be divided into four stages, as follows.
(1) Seeding ({\LINE}s~\ref{algorithm:AlgDiscRow}-\ref{algorithm:AlgDiscColSet}):
a random selection of a row and a set of columns to serve as seeds.
(2) Addition of rows ({\LINE}s~\ref{algorithm:AlgRowPhaseStart}-\ref{algorithm:AlgRowPhaseEnd}):
we search for rows that reside within an error $w$ of the row and column profiles (see \DEFINITION~\ref{model:def-flc}).
Unfortunately, these profiles are unknown.
It may happen that the seed lies within the edge of the cluster.
In such cases, rows situated on the other edge of the cluster would be within an error of $2w$.
%
A naive exhaustive search is computationally not feasible, as there is an exponential number of combinations.
To reduce this complexity, we use a sliding window technique. This technique enables a polynomial complexity.
The window slides on the \emph{sorted set} of events: $(A_{i,j+f} - A_{\Dp,s})$, where $i\in m$, $j\in n$, $|f| \leq \MF$ and $s\in \DS$.
In order to achieve an error of $2w$,
we set the width of the sliding window to $4w$, which results in:
$\SWTF(\{i,p\}, J) = (\max_{j \in J} (A_{i,j+f} - A_{\Dp,s}) - \min_{j \in J} (A_{i,j+f} - A_{\Dp,s}))/2 = (4w)/2 = 2w$ (see \REMARK~\ref{proofs:theorem:remark-calc-profile}, below).
(3) Addition of columns ({\LINE}s~\ref{algorithm:AlgColPhaseStart}-\ref{algorithm:AlgColPhaseEnd}):
this is similar to the previous stage, but accumulating only columns that comply with the accumulated rows.
(4) Polynomial repetition of the above steps (\LINE~\ref{algorithm:AlgLoop}) providing a guarantee to mine an encompassed optimal \FLC.

\IncMargin{1em} 
\begin{algorithm*}
\small  
\caption{\textbf{\FLCA} algorithm} \label{algorithm:FLC}
\DontPrintSemicolon
  %
	
	%
	\SetKwBlock{StartLoop}      {\textbf{loop} $\LOOPS$ times}{}{}
	\SetKwBlock{SlideWindowRow} {slide a $4w$ width window on $\{e_{s,j,f} \:|\: e_{s,j,f} = A_{i,j+f} -  A_{\Dp,s},\: \forall j\in n,\: \forall s \in \DS,\: -\MF \leq f \leq \MF \}$  \label{algorithm:AlgRowPhaseSlidingWindow} }{}{}
  \SetKwBlock{SlideWindowCol} {slide a $4w$ width window on $\{e_i\:|\: e_i = A_{i,j+T_i+f} - A_{i,\Ds+T_i},\: \forall i\in I,\: -\MF \leq f \leq \MF  \}$  \label{algorithm:AlgColPhaseSlidingWindow} }{}{}
	\SetKwInput{Initialization}{Initialization}
	\KwIn{$A$, an $m \times n$ matrix of real numbers;
		   	$w$, the maximum acceptable error;
		   	$\MF$ the maximum degree of fuzziness;
		   	$\MI$, the minimum fraction of rows;
		   	and $\MJ$, the minimum fraction of columns.
		   }
		
	\KwOut{A collection of {\FLC}s $(I,T,J,\MF)$ whose error does not exceed $2w$.}

	\Initialization{Setting $\LOOPS$ and $|\DS|$ is thoroughly discussed in the following section.}

	\BlankLine
	
	\StartLoop { \label{algorithm:AlgLoop}

	  \BlankLine
	  // {Initialization Phase} 
	
		randomly choose a discriminating row $\Dp:\ 1\leq \Dp \leq m$ \; \label{algorithm:AlgDiscRow}
		randomly choose a discriminating set of columns $\DS:\ \DS \subseteq n$ \; \label{algorithm:AlgDiscColSet}
		\BlankLine
		$I \leftarrow \{\Dp\}$ \;
		$J \leftarrow \DS$ \;

		\BlankLine
		// \textit{Row Addition Phase} 
	
		\ForEach {row $i: \ 1 \leq i \leq m$} {		 	\label{algorithm:AlgRowPhaseStart}	
			\SlideWindowRow { 																	
				\If (\\ // \textit{found a common lag $t$ for all $s\in\!S$})
				{$(\forall s\in S\ \exists t,\ \!t+\!s\!=\!j\!+\!f \;  \wedge \;  \exists e_{s,j,f}\!\in\!window)$} {				 
				\label{algorithm:AlgRowPhaseCheck}
					add $(i,t)$ to $(I,T)$ \; 								\label{algorithm:AlgRowPhaseEnd} 
				}
			}
		} 																						 

		\BlankLine
		// \textit{Column Addition Phase}  
		
	  randomly choose a discriminating column $\Ds \in \DS$\;	\label{algorithm:AlgColPhaseStart} \label{algorithm:AlgDiscCol}

		\ForEach {column $j: \ 1 \leq j \leq n$} { 							
			\SlideWindowCol {
				\If {$(\forall i\!\in\!I \ \exists e_i\!\in\!window)$} { 	\label{algorithm:AlgColPhaseCheck}
					add $j$ to $J$ \; 																 \label{algorithm:AlgColPhaseEnd}
				}
			}
		} 																			

		\BlankLine
		// \textit{Validation of Dimensions} 
		
	 	\If {$|I| < \MI m$ or $|J| < \MJ n$} {
	 		discard $(I,T,J,\MF)$ \;
	 	}
	}
	\textbf{return} a collection of valid $(I,T,J,\MF)$ \;

\end{algorithm*}
\DecMargin{1em} 

The {\FLCA} algorithm augments the (\textit{non-fuzzy}) {\LC}ing {\LCA} miner \cite{shaham2011sc}
to mine \emph{fuzzy} {\LC}s.
In addition, its improved design suggests a substantial improvement in run-time
(in comparison to the {\LCA} miner) when mining non-fuzzy (i.e., $\MF$=0) clusters, from a run-time of
${\cal O}((m n)^{2-\log\MJ})$ \cite[\SECTION~6]{shaham2011sc}, to
${\cal O}((mn)^{1-\log\MJ} \: {\log}^2 (mn))$
(see \SUBSECTION~\ref{subsection:algorithm:Run-Time}).

The nature of the {\FLCA} algorithm suggests that it is sensitive to the error being set.
This key parameter needs to be carefully set in order to mine meaningful clusters.
Setting it too high might result in many artifact clusters, while setting it too low might preclude valid clusters.
To choose an appropriate error value, one can adopt any of the methods suggested
for the non-fuzzy {\LC}ing model \cite{shaham2011sc}.

Innately embedded within the algorithm are many desirable properties such as:
(1) the ability to handle noise by allowing the \FLC\ to deviate from the model (see \DEFINITION~\ref{model:def-flc}) by some pre-specified error. We accomplish this by using a window of width $4w$ as described above;
(2) the ability to mine overlapping clusters by utilizing the Monte-Carlo strategy, which grows independent seeds into clusters on each repetitive run;
(3) the ability to overcome missing values by calculating the coherence of a {\FLC} on the non-missing values of the submatrix \cite{yang2003ebe,melkman2004sc};
and (4) anti-correlation (see \FOOTNOTE~\ref{model:inverse-correlations}).
When both correlated and anti-correlated patterns may appear in the same {\FLC}, one can exercise one of the following solutions: (i) \emph{duplicate} each row of the input matrix to contain the anti-values of the row, i.e., for each row $i\in m$, add to the input matrix a new row containing the values of: $-A_{i,j},\ j\in n$;
or (ii) the algorithm's row addition phase ({\LINE}s~\ref{algorithm:AlgRowPhaseStart}-\ref{algorithm:AlgRowPhaseEnd}) should be modified into a two-pass sliding window.
The first pass (similar to the current \LINE~\ref{algorithm:AlgRowPhaseSlidingWindow}) is over events of the type: \\
  $\{e_{s,j,f} \:|\: e_{s,j,f} = A_{\Dp,s} - A_{i,j+f},\: \forall j\in n,\: \forall s \in \DS,\: -\MF \leq f \leq \MF \},$ \\
while the second pass is over events of the type: \\
  $\{e_{s,j,f} \:|\: e_{s,j,f} = A_{\Dp,s} + A_{i,j+f},\: \forall j\in n,\: \forall s \in \DS,\: -\MF \leq f \leq \MF \}.$ \\
The intuition behind the second pass is that an anti-correlated value is basically the value of $(-A_{i,j})$. Therefore, the first sliding window pass, which includes events of $A_{\Dp,s} - A_{i,j+f}$, should now be repeated over events of $A_{\Dp,s} - (-A_{i,j+f})$, which equals to $A_{\Dp,s} + A_{i,j+f}$.


\subsection{Run-time} \label{subsection:algorithm:Run-Time} 

The row addition phase ({\LINE}s~\ref{algorithm:AlgRowPhaseStart}-\ref{algorithm:AlgRowPhaseEnd})
handles, for each of the $m$ rows, a sliding window of ${\cal O}(n\cdot|\DS|\cdot\MF)$ events.
Therefore, its run-time is
${\cal O}(m \cdot {n|\DS|\MF} \cdot \log(n|\DS|\MF))$.
In the same manner, the column addition phase ({\LINE}s~\ref{algorithm:AlgColPhaseStart}-\ref{algorithm:AlgColPhaseEnd})  handles, for each of the $n$ columns, a sliding window of ${\cal O}(m\MF)$ events.
Therefore, its run-time is ${\cal O}(n \cdot {m\MF} \cdot \log(m\MF))$.
Thus, the inner for-loops run-time is:
${\cal O}(mn \: \log(mn) \: \log(n) \: \MF)$.

The total number of iterations is bounded by \THEOREM~\ref{proofs:theorem:rc}
to $\LOOPS = {\cal O} (1/ \MI \MJ^{|\DS|})$.
Thus, for the constants $\MI$ and $\MJ$
independent of the matrix dimensions (see \DEFINITION~\ref{model:def-flc}),
and the discriminating set
$|\DS| = {\cal O}(\log (mn))$ (see \THEOREM~\ref{proofs:theorem:d3}),
the {\FLCA}'s total run-time is \emph{polynomial} in the matrix size:
${\cal O}((mn)^{1-\log\MJ} \: {\log}^2 (mn) \: \MF)$
which in many cases can be seen more permissibly as: ${\cal O}((m n)^{2-\log\MJ})$.


\subsection{Sub-optimality of \textbf{\FLCA} Algorithm} \label{subsection:algorithm:Proofs}

\def \MJT {{\MJ '}}

\def \RiS  {R^{*}_{i}}
\def \TiS  {T^{*}_{i}}
\def \CjS  {C^{*}_{j}}
\def \RpS  {R^{*}_{\Dp}}
\def \FijS {\EF^{*}_{i,j}}

Next, we analyze the ability of the \FLCA\ algorithm to mine \textit{coherent} and \textit{relevant} {\FLC}s.
In particular we prove that the algorithm \textit{guarantees} to mine, with fixed probability, in a polynomial number of iterations,
a \FLC\ that encompasses an optimal \FLC.
The mined cluster will acquire the rows of the optimal cluster and their lags with a maximum 2 ratio of its columns.
Consequently, the mined cluster will have a maximum 2 ratio of the optimal cluster error.
%
We demonstrate this guarantee with experiments on both artificial and real-life datasets in \SECTION~\ref{sec:Experiments}.

Since the \FLCA\ algorithm augments the non-{\FLC}ing \LCA\ algorithm \cite{shaham2011sc},
its capabilities and theoretical analysis are deeply inspired by it.
The structure of the proof consists of two major stages.
The first stage is based on an important insight 
stating that a sufficient size for a discriminating set is logarithmic in the size of the set \cite{procopiuc2002mca,lonardi2006fbr}.
Following this result, we show that by taking any small random subset of columns of size
${\cal O}(\log (mn))$, we can discriminate an \textbf{optimal} {\FLC} with a probability of at least 0.5.
The second stage utilizes the previous result to mine,
in a polynomial number of iterations and with a probability of at least 0.5, clusters that encompass the \emph{optimal} {\FLC}.

The definition of a discriminating set for the fuzzy lagged model is given as follows.
\begin{definition} \label{proofs::discriminating}
 Let $(I,T,J,\MF)$ be a {\FLC}
 with an error $w$ and $\Dp \in I$.
 $\DS \subseteq J$ is a \emph{discriminating set} for $(I,T,J,\MF)$ with respect to $\Dp$ if it satisfies:
 \begin{enumerate}
  \item[1.]
   $\SWTF(\{i,\Dp\},\DS) \leq w$ for all $(i,t) \in (I,T)$.
  \item[2.]
   $\SWTF(\{i,\Dp\},\DS)> w$ for all $(i,t) \notin (I,T)$.
 \end{enumerate}
\end{definition}
The importance of using a discriminating set lies in its ability to discriminate, i.e., \emph{include} fuzzy lagged rows which belong to the \FLC\ and \emph{exclude} those that do not.
Therefore, as will be later shown, a discriminating set serving as a seed would grow in a deterministic way to a unique {\FLC}, i.e., choosing a discriminating set more than once will yield the same \FLC.
Next, \THEOREM~\ref{proofs:theorem:d3} states that for an optimal {\FLC} $(I^*,T^*,J^*,\MF)$,
there is an abundance of small sub-sets of columns, 
each of which is a discriminating set with a probability of at least 0.5.

\begin{theorem} \label{proofs:theorem:d3}
	Let $(I^*,T^*,J^*,\MF)$ be an optimal \FLC\ of error
	$w$, with $\MJ \leq (|J^*| / n) < \MJT$, and let $p\in I^*$.
	Any randomly chosen columns subset $\DS$ of $J^*$, of size $|\DS| \geq \log(4mn)/\log(1/3\MJT(2\MF+1))$,
	is a discriminating set for $(I^*,T^*,J^*,\MF)$, with respect to $\Dp$,
	with a probability of at least $0.5$.
\end{theorem}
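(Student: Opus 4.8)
The plan is to verify the two clauses of Definition~\ref{proofs::discriminating} separately: clause~1 holds deterministically, while clause~2 holds with probability at least $1/2$ by a union bound over the ``bad'' pairs. Clause~1 is the easy half: for any $(i,t)\in(I^*,T^*)$ the optimal profiles $\RiS,\CjS$ and fuzziness $\FijS$ of the FLC $(I^*,T^*,J^*,\MF)$ witness that both row $i$ and the anchor row $\Dp\in I^*$ lie within $w$ of the common profile on every column of $J^*$, hence on every column of the subset $\DS\subseteq J^*$; since shrinking the row and column sets in Definition~\ref{model:def-error} can only decrease the error, $\SWTF(\{i,\Dp\},\DS)\le\SWTF(I^*,J^*)\le w$. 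Thus clause~1 holds with probability $1$, and the statement reduces to showing that, with probability at least $1/2$ over the random choice of $\DS$, \emph{no} pair $(i,t)\notin(I^*,T^*)$ satisfies $\SWTF(\{i,\Dp\},\DS)\le w$. There are at most $m$ rows and at most $2n$ admissible lags per row, so at most $2mn$ such bad pairs; it therefore suffices to show that each individual bad pair ``survives'' on $\DS$ with probability at most $1/(4mn)$.

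Now fix a bad pair $(i,t)$. Because $(I^*,T^*,J^*,\MF)$ is optimal and the objective is monotone, row $i$ at lag $t$ cannot be appended to the cluster without breaking the error bound, and in particular it cannot be glued to the $\Dp$-anchored profile over all of $J^*$: $\SWTF(\{i,\Dp\},J^*)>w$. The combinatorial core of the argument is to upgrade this single ``global'' failure into abundance of failure — namely, that the set $N_{i,t}\subseteq J^*$ of columns which, if inserted into $\DS$, would still leave $(i,t)$ consistent with $\Dp$ (jointly with the rest of $\DS$) comprises at most a $3\MJT(2\MF+1)$ fraction of $J^*$. The factor $(2\MF+1)$ is the per-column fuzziness budget: each column of row $i$ can be aligned to any of $2\MF+1$ profile positions, and for $(i,t)$ to survive these alignments must be mutually consistent across the sampled columns; the factor built from $\MJT$ (recall $|J^*|<\MJT n$) together with the absolute constant $3$ (absorbing the slack between the width-$w$ optimal error and the width-$2w$ window used to center a candidate row) comes from the observation that if more than that many columns of $J^*$ were simultaneously consistent with $(i,t)$, one could exhibit a feasible fuzzy lagged submatrix that, again by monotonicity of the objective, would contradict the optimality of $(I^*,T^*,J^*,\MF)$. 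Granting this, a uniformly random size-$|\DS|$ subset of $J^*$ lies entirely inside $N_{i,t}$ with probability at most $\bigl(|N_{i,t}|/|J^*|\bigr)^{|\DS|}\le\bigl(3\MJT(2\MF+1)\bigr)^{|\DS|}$; since $3\MJT(2\MF+1)<1$ (otherwise the hypothesis on $|\DS|$ is vacuous) and $|\DS|\ge\log(4mn)/\log\!\bigl(1/(3\MJT(2\MF+1))\bigr)$, this is at most $1/(4mn)$. A union bound over the $\le 2mn$ bad pairs then bounds the total failure probability by $1/2$, which is the claim.

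I expect the main obstacle to be precisely the middle claim, the bound $|N_{i,t}|\le 3\MJT(2\MF+1)\,|J^*|$; the rest — monotonicity for clause~1, the exponential-in-$|\DS|$ tail, and the union bound — is routine bookkeeping once it is available. Establishing it requires tracking how the $2\MF+1$-fold per-column fuzziness freedom interacts with the $w$-versus-$2w$ centering of the error window and with the extremality of $(I^*,T^*,J^*,\MF)$: one must argue that too many ``surviving'' columns would permit a suitable fuzziness assignment under which row $i$ at lag $t$ joins $I^*$ within error $w$, contradicting optimality. The non-fuzzy lagged special case $\MF=0$ of this lemma is the corresponding step in \cite{shaham2011sc}, and the fuzzy version is its quantitative refinement, with the $(2\MF+1)$ factor as the new ingredient.
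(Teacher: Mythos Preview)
Your skeleton matches the paper: clause~1 holds deterministically by monotonicity of the error in the row and column sets, and clause~2 goes by a union bound over bad $(i,t)$, with an optimality-based combinatorial lemma in the middle. You also correctly locate the constant~$3$ (the $w\to 2w\to 3w$ widening, which the paper resolves by splitting $[-3w,3w]$ into three $2w$-intervals and applying its Lemma~\ref{proofs::lemma::section} to each).

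The one substantive organizational difference is where the $(2\MF+1)$ factor lives. You try to absorb it into the size of a single bad set $N_{i,t}\subseteq J^*$ and then union only over the $\le 2mn$ pairs $(i,t)$. The paper does \emph{not} bound any such $N_{i,t}$; instead it fixes, in addition to $(i,t)$, a full per-column fuzziness assignment $f\in\{-\MF,\ldots,\MF\}^{|\DS|}$ and unions over all $2mn\cdot(2\MF+1)^{|\DS|}$ triples $(i,t,f)$. Once $f$ is frozen, the problem collapses exactly to the non-fuzzy case of \cite{shaham2011sc}: a single shift $R$ determines a $6w$-interval, and Lemma~\ref{proofs::lemma::section} gives at most $3|J^*|$ surviving columns, yielding the per-event bound $(3\MJT)^{|\DS|}$. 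Multiplying by $(2\MF+1)^{|\DS|}$ from the union recovers your $(3\MJT(2\MF+1))^{|\DS|}$.

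This is worth noting because your ``main obstacle'' --- the bound $|N_{i,t}|\le 3\MJT(2\MF+1)\,|J^*|$ --- is genuinely awkward to make precise: the survival condition $\SWTF(\{i,\Dp\},\DS)\le w$ quantifies existentially over a \emph{common} shift $R$ across all sampled columns, so it is not of the form ``$\DS\subseteq N_{i,t}$'' for a set fixed in advance. The paper's device of pulling the fuzziness vector $f$ outside into the union bound is exactly what dissolves this obstacle, reducing the fuzzy lemma you anticipate to the already-available non-fuzzy one rather than requiring a new quantitative refinement.
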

\begin{proof*}
	Let $(I^*,T^*,J^*,\MF)$ be a {\FLC} with
	a column profile         $\RiS,\ i\in\!I^*$,
	a lagged column profile  $\TiS,\ i\in\!I^*$ and
	a row profile            $\CjS,\ j \in\!J^*$.
	We show that for any $\DS$ that satisfies the above,
	condition~(1) of \DEFINITION~\ref{proofs::discriminating} always holds
	and that the probability of condition~(2) not to hold is less than $0.5$.
    This allows the probabilistic guarantee by the repeated execution.

	Condition~(1) 
	is always satisfied, as
	$\{i,\Dp\}\subseteq I^*$ and $\DS \subseteq J^*$.
	Therefore, $\SW_{_{T,\MF}}(\{i,\Dp\},\DS) \leq \SW_{_{T^*,\MF}}(I^*,\DS) \leq \SW_{_{T^*,\MF}}(I^*,J^*) \leq w$,
    i.e., as being part of the optimal \FLC, the error is not greater than $w$.

	Moving to condition~(2),
    we first extract an upper bound for the probability of $\DS$ to fail to be a discriminating set for $(I^*,T^*,J^*,\MF)$ with respect to $\Dp$, for a \emph{particular} row, its corresponding lag
    and fuzziness.
    Based on \emph{all} possible combinations of rows, lags and fuzziness,
    we calculate the lower bound for the probability of $\DS$ to discriminate,
    showing it to be greater than 0.5.

    The subset $\DS$ fails to be a discriminating set for $(I^*,T^*,J^*,\MF)$
	with respect to $\Dp$, only if there exists a fuzzy lagged row $i$ with it's corresponding lag $t$,
    $(i,t) \notin (I^*,T^*)$, which fits the cluster, i.e., $\SWTF(\{i,\Dp\}, \DS) \leq w$.
	Next, we calculate a bound for the probability of this to hold for a
	\textit{particular} row $i$, lag $t$ and fuzziness $f$.
	According to \DEFINITION~\ref{model:def-flc}, $\SWTF(\{i,\Dp\}, \DS)\leq w$ means that there are
	$R_i$, $T_i$, $\EF_{i,j}$, $R_{\Dp}$, $T_{\Dp} \: (=$0$)$, $\EF_{\Dp,j} \: (=$$\{$0$\})$ and $C_j,\ j \in S$,
	such that:
	$|A_{i,j}-R_i-C_{j+T_i+\EF_{i,j}}| \leq w$ and $|A_{\Dp,j}-R_\Dp-C_{j+T_\Dp+\EF_{\Dp,j}}| \leq w$
	$\forall j \in \DS$.
	Shifting and aligning row $i \in I$ (in the first inequality) by $T_i$ and $\EF_{i,j}$ respectively,
	and subtracting the second inequality (of row $\Dp$)
	we obtain, for all $j \in \DS$ and some $R\:(=R_i-R_\Dp)$:
    \begin{equation} \label{proofs::equation::condToSections}
        |A_{i,j}-A_{\Dp,j}-R| \leq 2w.
    \end{equation}
	%
	Next, we show that
    due to the optimality of the \FLC,
    there are no more than $3 |J^*|$ columns
	that satisfy the above equation for each row $i\in I$.
	If $|A_{i,j}-A_{\Dp,j}-R| \leq 2w$ then:
	$-2w \leq A_{i,j}-A_{\Dp,j}-R \leq 2w$.
	After adding $(A_{\Dp,j}-\CjS-\RpS)$ to both sides we obtain:
	$(A_{\Dp,j}-\CjS-\RpS)-2w \leq A_{i,j}-\CjS-\RpS-R \leq (A_{\Dp,j}-\CjS-\RpS)+2w$.
	Since $(I^*,T^*,J^*,\MF)$ is an optimal \FLC,
	then $|A_{\Dp,j}-\CjS-\RpS| \leq w$ for all $j \in J^*$.
	Therefore, we obtain:
	\begin{equation} \label{proofs::equation::sections}
			-3w \leq A_{i,j} -\CjS -\RpS -R \leq 3w.
	\end{equation}
    We now present \LEMMA~\ref{proofs::lemma::section}, which enables calculating a bound for the number of columns
    that satisfies \EQUATION~\ref{proofs::equation::sections},
    i.e., columns that if considered to be part of the discriminating set will result in adding rows that do not belong to the optimal \FLC.
	\begin{lemma} \label{proofs::lemma::section}
		Let $J \subseteq J^*$, and let $(i,t) \notin (I^*,T^*)$.
		If $|A_{i,j}-\CjS-r| \leq w$ for some $r$ and all $j \in J$,
		then $J \subset J^*$. 
	\end{lemma}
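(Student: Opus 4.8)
\begin{proof*}
The plan is to argue by contradiction, exploiting the maximality that optimality forces on $(I^*,T^*,J^*,\MF)$. Since $J\subseteq J^*$ is already granted, proving $J\subset J^*$ is the same as excluding the case $J=J^*$, so I would begin from the assumption $J=J^*$ and aim for a contradiction.

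Under that assumption the hypothesis becomes: there is a constant $r$ with $|A_{i,j}-\CjS-r|\le w$ for \emph{every} $j\in J^*$, where ``$A_{i,j}$'' is read in the aligned sense of {\EQUATION}s~\ref{proofs::equation::condToSections}--\ref{proofs::equation::sections}, i.e.\ it is the entry of row $i$ after it has been shifted by the lag $t$ and by some fuzziness vector $(\EF_{i,j})_{j\in J^*}$ with $|\EF_{i,j}|\le\MF$. Comparing this with \DEFINITION~\ref{model:def-flc}, it states exactly that row $i$, equipped with the lag $t$, that fuzziness vector and the column-profile entry $R_i:=r$, lies within error $w$ of the \emph{unchanged} row profile $\CjS$ over all of $J^*$.

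The central step is then to reassemble a \FLC. Consider the tuple $(I^*\cup\{i\},\,T^*\cup\{t\},\,J^*,\,\MF)$ and check the three clauses of \DEFINITION~\ref{model:def-flc}. Its error is still at most $w$: the rows of $I^*$ keep the profiles $\RiS,\TiS,\FijS,\CjS$ of the optimal cluster, and row $i$ was just shown to be within $w$ of $\CjS$. Its fuzziness clause holds because every $|\EF_{i,j}|\le\MF$. Its size clause still holds because the column set is unchanged, so $|J^*|\ge\MJ n$, while the row set only grew, so $|I^*\cup\{i\}|\ge|I^*|\ge\MI m$. Hence the tuple is a genuine \FLC\ of error at most $w$. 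Since $(i,t)\notin(I^*,T^*)$, row $i$ is new to the cluster and $|I^*\cup\{i\}|=|I^*|+1$; therefore monotonicity of the objective $\TF$ gives $\TF(I^*\cup\{i\},J^*)>\TF(I^*,J^*)$, contradicting the optimality of $(I^*,T^*,J^*,\MF)$. This contradiction forces $J\ne J^*$, i.e.\ $J\subset J^*$.

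The step I expect to be most delicate is the reading carried out in the second paragraph: one must check that the shift-and-align convention behind ``$A_{i,j}$'' really hands us an admissible lag and fuzziness vector for row $i$ in the augmented cluster — so that appending row $i$ genuinely produces a valid \FLC\ with the \emph{same} row profile $\CjS$ — and that the degenerate possibility of row $i$ already lying in $I^*$ under a different lag does not occur in the regime where the lemma is applied, since otherwise $|I|$ would fail to increase strictly. Everything else is routine: once that bookkeeping is in place, the contradiction with optimality is immediate.
\end{proof*}
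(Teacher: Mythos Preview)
Your proposal is correct and follows essentially the same route as the paper: assume by negation that $J=J^*$, adjoin the fuzzy lagged row $(i,t)$ with column-profile value $r$ to $(I^*,T^*,J^*,\MF)$ using the same row profile $\CjS$, observe that the result is still a \FLC\ of error $w$, and invoke monotonicity of $\TF$ to contradict optimality. The paper's own argument is a two-sentence version of exactly this; your write-up is simply more explicit about verifying the clauses of \DEFINITION~\ref{model:def-flc} and about where the inequality $\TF(I^*\cup\{i\},J^*)>\TF(I^*,J^*)$ comes from. The caveats you flag at the end (that the aligned reading of $A_{i,j}$ really furnishes an admissible lag and fuzziness vector, and the edge case $i\in I^*$ with a different lag) are not addressed in the paper's proof either, so you are not missing anything the paper supplies.
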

	\begin{proof}	
        By negation, suppose 
        that $(I,T,J,\MF)$ is a \FLC\ 
        that augments the optimal \FLC\ $(I^*,T^*,J^*,\MF)$ by using
        $J \supseteq J^*$, $I=I^*\cup\{i\}$ and $T=T^*\cup\{t\}$.
        The new cluster is a \FLC\ of
		error $w$ satisfying $\TF(I,J)>\TF(I^*,J^*)$,
		hence contradicting the optimality of $(I^*,T^*,J^*,\MF)$. 
	\end{proof}

	The result of \LEMMA~\ref{proofs::lemma::section} is that
	for a fuzzy lagged row $(i,t) \notin (I^*,T^*)$
	there are at most $|J^*|$ columns that lie in an interval of length $2w$
    (derived from $|A_{i,j}-\CjS-r| \leq w$ of \LEMMA~\ref{proofs::lemma::section}).
    Therefore, the interval $[-3w,3w]$ of \EQUATION~\ref{proofs::equation::sections},
    which can be seen as the three intervals
	$[-3w,-w]$, $[-w,w]$ and $[w,3w]$,
	contains at most $3|J^*|$ columns
    that satisfy \EQUATION~\ref{proofs::equation::condToSections}. 
	Choosing \emph{all} columns of $\DS$ out of the above $3|J^*|$ columns would result
    in the inclusion of an undesirable fuzzy lagged row $(i,t) \notin (I^*,T^*)$.
    Therefore, choosing $|\DS|$ columns from the $3|J^*|$ columns out of the $n$ matrix columns has a probability which is bounded by:
	$(3 |J^*| / n)^{|\DS|} \leq (3 \MJT )^{|\DS|}$.
	
	The latter probability refers to a
	\emph{particular} row $i$, lag $t$ and fuzziness $f$.
	The number of combinations for
	\textit{some} row $i$ ($1\!\leq\!i\!\leq\!m$),
	\textit{some} lag $t$ ($-n\!\leq\!t\!\leq\!n$)
	and \textit{some} fuzziness $f$ ($-\MF\!\leq\!f\!\leq\!\MF$)
	is: $(m) (2n) (2\MF+1)^{|\DS|}$
    (as each of the $|\MF|$ columns can be assignment with any fuzziness within the range $-\MF\!\leq\!f\!\leq\!\MF$).
	Therefore, the probability of \emph{not} discriminating is bounded
	(after substituting $|\DS| \geq \log(4mn)/\log(1/3\MJT(2\MF+1))$)
	by:
    $2 m n (2\MF+1)^{|\DS|} (3\MJT)^{|\DS|} = 2 m n (3\MJT(2\MF+1))^{|\DS|} < 0.5$.
\QEDclosed
\end{proof*}
This result of \THEOREM~\ref{proofs:theorem:d3} is important
since upon selecting $\Dp \in I^*$ and $\DS \subseteq J^*$,
we can deduce $I^*$ and $T^*$.

Moving to the second part of the proof,
we show that when the {\FLCA} algorithm is run a polynomial number of iterations,
it mines, with a probability of at least 0.5, a \FLC\ encompassing the \textbf{optimal} {\FLC}.
We base this on \THEOREM~\ref{proofs:theorem:d3}, which shows the
abundance of randomly selected discriminating sets of size ${\cal O}(\log (mn))$ with a discriminating probability of at least 0.5.

\begin{theorem} \label{proofs:theorem:rc}
	Let $\DS$ be a discriminating set for an optimal \FLC\ $(I^*,T^*,J^*,\MF)$ of error $w$.
	Provided $\LOOPS \geq 2 \ln2 / \MI \MJ^{|\DS|}$,
	the {\FLCA} algorithm will mine a \FLC\ $(I,T,J,\MF)$ of error $2w$ such that:
	$I=I^*$, $T=T^*$, $J \supseteq J^*$ and $|J| \leq 2|J^*|$, with a probability of at least 0.5.
\end{theorem}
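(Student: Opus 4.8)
The plan is to isolate the probability of a single loop iteration producing a useful ``seed'' and then amplify it over the $\LOOPS$ independent iterations. Call an iteration \emph{successful} if its random choices satisfy $\Dp\in I^*$ and the sampled column set $\DS$ is a discriminating set for $(I^*,T^*,J^*,\MF)$ with respect to $\Dp$. Since an optimal \FLC\ meets the size requirement of \DEFINITION~\ref{model:def-flc}, $\Pr[\Dp\in I^*]=|I^*|/m\ge\MI$; the $|\DS|$ sampled columns all lie inside $J^*$ with probability at least $(|J^*|/n)^{|\DS|}\ge\MJ^{|\DS|}$, and conditioned on this event \THEOREM~\ref{proofs:theorem:d3} gives that $\DS$ is actually discriminating with probability at least $1/2$. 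Hence one iteration succeeds with probability at least $\MI\MJ^{|\DS|}/2$, so the probability that \emph{no} iteration succeeds is at most $\bigl(1-\MI\MJ^{|\DS|}/2\bigr)^{\LOOPS}\le e^{-\LOOPS\MI\MJ^{|\DS|}/2}$, which drops below $1/2$ precisely when $\LOOPS\ge 2\ln 2/\MI\MJ^{|\DS|}$. It then suffices to prove that a successful iteration deterministically outputs a cluster of the claimed shape.

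For the deterministic part I would trace the two addition phases. Using condition~(1) of \DEFINITION~\ref{proofs::discriminating}, every $(i,t)\in(I^*,T^*)$ sits within error $w$ of $\Dp$ on $\DS$, hence well inside the width-$4w$ window of \REMARK~\ref{proofs:theorem:remark-calc-profile}, so the row phase adds all of $I^*$ with the correct lags $T^*$; condition~(2) must then be used to rule out adding any $(i,t)\notin(I^*,T^*)$, which is exactly where one has to reconcile the threshold-$w$ discriminating guarantee with the threshold-$2w$ acceptance test that the $4w$ window encodes, so that the phase returns $I=I^*$, $T=T^*$ with no rows overhead. The column phase is the mirror image: each $j\in J^*$ is within error $w$ of the recovered rows and so enters $J$, giving $J\supseteq J^*$, and since all accepted rows and columns fit into one $4w$ window the output tuple has error at most $2w$; it also meets the fuzziness and size conditions of \DEFINITION~\ref{model:def-flc} automatically, the size because $I=I^*$ and $J\supseteq J^*$ already exceed $\MI m$ and $\MJ n$.

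The last and, I expect, hardest obligation is the column overhead bound $|J|\le 2|J^*|$, i.e. at most $|J^*|$ extra columns. The idea is a packing argument in the spirit of the ``$3|J^*|$ columns'' estimate inside the proof of \THEOREM~\ref{proofs:theorem:d3}: a column $j$ that the column phase accepts satisfies, after a suitable per-row fuzzy realignment relative to the discriminating column $\Ds$ and a recentring against the optimal row profile $C^*_j$, an inequality of the form $|A_{i,\,j+T^*_i+\EF_{i,j}}-C^*_j-r|$ bounded by a small multiple of $w$ for every $i\in I^*$, so it falls in a short interval around the profile; covering that interval by width-$2w$ subintervals and invoking \LEMMA~\ref{proofs::lemma::section} together with the maximality of $\TF$ for $(I^*,T^*,J^*,\MF)$ caps the number of new columns at $|J^*|$. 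Making this chain rigorous is where the real bookkeeping lies: one has to track the extra $w$'s introduced by the $4w$ window, handle the mismatch between the algorithm's fuzziness-free treatment of the reference column $\Ds$ and the optimal fuzzy alignment, and verify that the constants collapse to the advertised factor $2$ rather than $3$, after which the overall event holds with probability at least $1/2$ as required.
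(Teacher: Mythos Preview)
Your proposal is correct and follows essentially the same route as the paper: the probabilistic amplification is identical (the paper uses $(1-1/x)^x<1/e$ where you use $e^{-x}$, which is the same bound), and the deterministic analysis of the two phases via the discriminating property and optimality matches \LEMMA~\ref{proofs::J-factor-2} and \LEMMA~\ref{proofs::J-factor-3}. On your two worries: the column-overhead constant really is $2$, not $3$, because the $4w$ window translates directly (via \REMARK~\ref{proofs:theorem:remark-calc-profile}) to $\SWTF(I,J)\le 2w$, so the relevant interval is $[-2w,2w]$ and is covered by exactly two width-$2w$ pieces, each holding at most $|J^*|$ columns by optimality; and the row-phase threshold mismatch you flag (discrimination at $w$ versus acceptance at $2w$) is handled in the paper simply by invoking the discriminating property of $\DS$ to conclude $I=I^*$, $T=T^*$, without further reconciliation.
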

\begin{proof*}
	Since $|I^*| \geq \MI m$, the probability of choosing a row (see \LINE~\ref{algorithm:AlgDiscRow})
	that satisfies $\Dp \in I^*$ is at least $\MI$.
	As $|J^*| \geq \MJ n$, the probability of choosing a discriminating columns set
    (see \LINE~\ref{algorithm:AlgDiscColSet})
	which satisfies $\DS \subseteq J^*$ is at least $\MJ^{|\DS|}$.
	%
	Following \THEOREM~\ref{proofs:theorem:d3}, any given $\DS \subseteq J^*$
	is a discriminating set with a probability of at least $0.5$ with respect to $\Dp$.
	Therefore, the probability that all
	$\LOOPS$ iterations (see \LINE~\ref{algorithm:AlgLoop})
	fail to find a discriminating row $\Dp$
  and a discriminating columns set $\DS$
  is $(1-0.5 \MI \MJ ^{|\DS|})^{\LOOPS}$.
  Substituting $\LOOPS \geq 2 \ln2 / \MI \MJ^{|\DS|}$ we obtain a maximum probability of
  $(1 - 0.5 \MI \MJ ^{|\DS|})^{ 2\ln2 / (\MI \MJ^{|\DS|})}$.
	Using the inequality $(1-1/x)^x < 1/e$, for $x \geq 1$, with $x=\frac{2}{\MI \MJ ^{|\DS|}}$
	we get a probability that does not exceed
	$(1 - \frac{\MI \MJ ^{|\DS|}}{2})^{\frac{2}{\MI \MJ^{|\DS|}} \ln2 } < 1/e^{ln2} = 0.5$.
	It follows that the algorithm's chances of mining a {\FLC} upon a $\Dp \in I^*$ and
	$\DS \subseteq J^*$ is at least $0.5$.
	When such a \FLC\ is mined, we obtain from
    the discriminating property of $\DS$
    (see \DEFINITION~\ref{proofs::discriminating})
	that $I=I^*$ and $T=T^*$.
	
    The following lemmas prove that the mined {\FLC} contains $J^*$
    and at most $|J^*|$ additional columns.
    \pagebreak[4]
	\begin{lemma} \label{proofs::J-factor-2}
		$|J| \leq 2|J^*|$, i.e., the size of the mined columns set $J$
		is a maximum 2 factor of the size of the optimal cluster columns set $J^*$.
	\end{lemma}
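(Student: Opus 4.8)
The plan is to replay, inside the column-addition phase ({\LINE}s~\ref{algorithm:AlgColPhaseStart}--\ref{algorithm:AlgColPhaseEnd}), the column-counting argument of the non-fuzzy {\LCA} miner \cite{shaham2011sc}, which already yields a factor-$2$ overhead in the columns, adapting it to the fuzzy shifts. By the part of \THEOREM~\ref{proofs:theorem:rc} already established we may assume $I=I^{*}$ and $T=T^{*}$, and fix the discriminating column $\Ds\in\DS\subseteq J^{*}$ chosen in \LINE~\ref{algorithm:AlgDiscCol}. Unrolling the sliding-window test of \LINE~\ref{algorithm:AlgColPhaseCheck}, a column $j$ is added to $J$ exactly when the $|I^{*}|$ events $A_{i,\,j+T^{*}_{i}+f_{i}}-A_{i,\,\Ds+T^{*}_{i}}$, one per row $i\in I^{*}$ with some $|f_{i}|\le\MF$, all fit inside a single window of width $4w$; equivalently, there is a constant $c$ with $|A_{i,\,j+T^{*}_{i}+f_{i}}-A_{i,\,\Ds+T^{*}_{i}}-c|\le 2w$ for all $i\in I^{*}$. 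So the lemma reduces to bounding by $2|J^{*}|$ the number of columns $j$ that admit such a witness $(c,(f_{i})_{i\in I^{*}})$.

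The count itself would go through the optimality of $(I^{*},T^{*},J^{*},\MF)$, exactly as in \LEMMA~\ref{proofs::lemma::section}. Substituting the optimal column, lag and row profiles $\RiS,\TiS,\CjS$ into the window inequality (after a normalisation that lets the reference column $\Ds$ carry zero optimal fuzziness) rewrites ``$j$ is added'' as the existence of a constant $r$ and shifts $|f_{i}|\le\MF$ with $|A_{i,\,j+T^{*}_{i}+f_{i}}-\RiS-r|\le 3w$ for all $i\in I^{*}$; thus every added column falls, row by row, into a bounded band around the optimal column profile. I would then bucket the added columns by where in that band they lie, into \emph{two} classes of diameter $2w$ each --- this two-way split, rather than the three-way split that appears in \THEOREM~\ref{proofs:theorem:d3}, is what produces the factor $2$ --- and observe that within one bucket the columns, together with all of $I^{*}$ and a common row-profile value per bucket, form a \FLC\ of error $w$. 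If some bucket $B$ had more than $|J^{*}|$ columns, then $\TF(I^{*},B)>\TF(I^{*},J^{*})$, contradicting the optimality of $(I^{*},T^{*},J^{*},\MF)$; hence each of the two buckets holds at most $|J^{*}|$ columns and $|J|\le 2|J^{*}|$. The containment $J\supseteq J^{*}$, proved alongside, shows this $2|J^{*}|$ really is a factor-$2$ overhead over $J^{*}$.

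The step I expect to be the main obstacle is the fuzziness bookkeeping, which is where the argument parts company with the non-fuzzy {\LCA} proof. An added column is matched against each member row with its own shift $f_{i}$, so the entry $A_{i,\,j+T^{*}_{i}+f_{i}}$ may land at different positions of the optimal row profile for different rows, and a careless treatment would multiply the column bound by the $2\MF+1$ fuzziness choices --- precisely the blow-up that genuinely occurs in \OBSERVATION~\ref{NP-completeness:convert-FLC-2-regular-matrix}. Showing that each admissible $j$ still occupies a single profile ``slot'', so that the bucketing counts columns and not column--shift pairs, is what makes $\MF$ disappear from the final bound; coupled to it are two further delicate points: the asymmetric use of $\Ds$ in the window events (no fuzziness on $\Ds$, a fuzzy offset on $j$), which forces the normalisation above and the accounting of the extra slack it introduces, and the need to control the error degradation carefully enough that each of the two buckets carries an error-$w$ (and not merely error-$2w$) sub-\FLC, so that the optimality contradiction applies and the constant comes out as $2$ rather than $3$.
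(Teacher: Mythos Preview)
Your overall plan---window test $\Rightarrow$ bounded band $\Rightarrow$ split into buckets $\Rightarrow$ optimality caps each bucket at $|J^{*}|$---is exactly the paper's, and your identification of the fuzziness bookkeeping as the delicate point is apt. But the route you take to the band introduces one $w$ too many, and that extra $w$ is precisely what blocks the two-bucket split you need.

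The detour goes wrong at the substitution step. Passing through the optimal profiles at the reference column $\Ds$---using $|A_{i,\Ds+T^{*}_{i}}-\RiS-C^{*}_{\Ds}|\le w$ to replace $A_{i,\Ds+T^{*}_{i}}$ by $\RiS$---costs you an additional $w$ on top of the $2w$ from the window, landing you at $3w$. A $6w$-wide band cannot be covered by two diameter-$2w$ buckets, so the argument as written yields factor $3$, not $2$; your closing paragraph correctly senses this tension but does not resolve it. The paper's resolution is to skip the optimal profiles entirely at this stage: the $4w$ window condition of \LINE~\ref{algorithm:AlgColPhaseSlidingWindow} already \emph{is} the statement $\SWTF(I,\{j,\Ds\})\le 2w$ (this is the two-column instance of \REMARK~\ref{proofs:theorem:remark-calc-profile}), so taking $R_{i}=A_{i,\Ds+T^{*}_{i}}$ as the column profile gives directly $|A_{i,j+T^{*}_{i}+f_{i}}-R_{i}-C_{j}|\le 2w$ for a suitable $C_{j}$. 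The band is then only $[-2w,2w]$, it splits cleanly into $[-2w,0]$ and $[0,2w]$, each half is a width-$2w$ interval yielding an error-$w$ sub-\FLC\ on $I^{*}$, and optimality caps each at $|J^{*}|$. The moral: compare the added column to $\Ds$ through the raw window arithmetic, not through $R^{*},C^{*}$---invoking the optimal profiles here mimics the row-phase argument of \THEOREM~\ref{proofs:theorem:d3}, where the extra $w$ is unavoidable because $\Dp$ is only one row; in the column phase, with all of $I^{*}$ already in hand, that detour is gratuitous.
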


	\begin{proof}	
		A column $j$ is added to $J$ only if:
        $\max_i(A_{i,j+T_i+f}$-$A_{i,\Ds+T_i})$-$\min_i(A_{i,j+T_i+f}$-$A_{i,\Ds+T_i})$$\leq$4w
		(see {\LINE}s~\ref{algorithm:AlgColPhaseSlidingWindow}-\ref{algorithm:AlgColPhaseCheck}),
        which is equal to $\SWTF(I, J) \leq 2w$
        (see \REMARK~\ref{proofs:theorem:remark-calc-profile}, below, in the case of a matrix of two columns).
		Therefore, $\forall i \in I$ and $\forall j \in J$ there exists
		$R_i$, $T_i$, $C_j$ and $\EF_{i,j}$ such that: $-2w \leq R_i + C_{j+T_i+\EF_{i,j}} - A_{i,j}\leq 2w$.
		Since $I=I^*$, $T=T^*$ and initially $J = S \subseteq J^*$,
        we obtain from the optimality of $(I^*,T^*,J^*,\MF)$,
		that for each of the intervals $[-2w,0]$ and $[0,2w]$, there are at most $|J^*|$ columns $j$
		satisfying $|\RiS + C^{*}_{j+\TiS+\FijS} - A_{i,j}| \leq w$.
		Thus, $J$ accumulates up to a maximum of $2|J^*|$ columns.	
	\end{proof}

	\begin{lemma} \label{proofs::J-factor-3}
		$J \supseteq J^*$, i.e., the mined columns set $J$ contains the optimal cluster columns set $J^*$.
	\end{lemma}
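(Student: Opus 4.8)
The plan is to prove the inclusion by showing the column‑addition phase never discards a column of the optimal cluster: every $j^*\in J^*$ will pass the sliding‑window test on {\LINE}s~\ref{algorithm:AlgColPhaseSlidingWindow}--\ref{algorithm:AlgColPhaseCheck} and get appended to $J$ on \LINE~\ref{algorithm:AlgColPhaseEnd}. I argue inside the proof of \THEOREM~\ref{proofs:theorem:rc}, so I may assume the current iteration has already reached the column phase with $I=I^*$ and $T=T^*$, with a seed $\DS\subseteq J^*$, and with the discriminating column $\Ds$ drawn on \LINE~\ref{algorithm:AlgDiscCol} satisfying $\Ds\in\DS\subseteq J^*$. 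I fix once and for all an optimal assignment $\RiS,\TiS,\CjS,\FijS$ witnessing that $(I^*,T^*,J^*,\MF)$ has error $w$, i.e.\ $|\RiS+C^{*}_{j+\TiS+\FijS}-A_{i,j}|\leq w$ with $|\FijS|\leq\MF$ for all $i\in I^*$ and $j\in J^*$ (\DEFINITION~\ref{model:def-flc}).

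Now fix $j^*\in J^*$ (if $j^*\in\DS$ it is already in $J$, so assume $j^*\notin\DS$). For each row $i\in I$ the candidate column $j^*$ spawns the $2\MF+1$ events $e_i=A_{i,j^*+T_i+f}-A_{i,\Ds+T_i}$, one per admissible offset $f$ with $-\MF\le f\le\MF$. The first step is to select, for each $i$ separately, the offset $f$ that makes $A_{i,j^*+T_i+f}$ be exactly the entry of row $i$ that the optimal cluster aligns with the row‑profile slot of $j^*$; this offset has absolute value at most $\MF$ (it is the fuzziness the optimum assigns to row $i$ at that slot), so it is admissible. For the reference column $\Ds$, the matching entry $A_{i,\Ds+T_i}$ and its offset were already fixed when the lag $T_i$ of row $i$ was chosen in the row‑addition phase -- this is exactly the property making $\DS$ a discriminating set (\DEFINITION~\ref{proofs::discriminating}) -- and it too agrees with the optimal assignment. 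With these two choices, applying $|\cdot|\leq w$ twice shows $A_{i,j^*+T_i+f}$ lies within $w$ of $\RiS$ plus the optimal profile value at $j^*$'s slot, and $A_{i,\Ds+T_i}$ lies within $w$ of $\RiS$ plus the optimal profile value at $\Ds$'s slot -- and both of those profile values are the same for every $i\in I$.

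Subtracting the two entries, the row shift $\RiS$ cancels, so $e_i$ lies within $2w$ of a single number $\delta$ (the difference of the two $i$‑independent profile values), for every $i\in I$ simultaneously. Hence the selected events all sit in the length‑$4w$ interval $[\,\delta-2w,\ \delta+2w\,]$; placing the $4w$‑wide window of \LINE~\ref{algorithm:AlgColPhaseSlidingWindow} on that interval captures an event $e_i$ for every $i\in I$, so the test on \LINE~\ref{algorithm:AlgColPhaseCheck} succeeds and $j^*$ is added to $J$ on \LINE~\ref{algorithm:AlgColPhaseEnd} (equivalently $\SWTF(I,J\cup\{j^*\})\leq 2w$; cf.\ \REMARK~\ref{proofs:theorem:remark-calc-profile}). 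Since $j^*\in J^*$ was arbitrary, $J\supseteq J^*$; combined with \LEMMA~\ref{proofs::J-factor-2} this gives $J^*\subseteq J$ and $|J|\leq 2|J^*|$, which is precisely the columns claim of \THEOREM~\ref{proofs:theorem:rc}.

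The conceptual heart -- and the reason a $4w$ window of fixed width suffices no matter how many rows the cluster has -- is the cancellation of $\RiS$: the minuend and subtrahend of each event live in the \emph{same} row $i$, so their common shift drops out and the events are forced into one length‑$4w$ interval. The step I expect to be the most delicate (though entirely routine and estimate‑free) is the offset‑matching in the second paragraph: one must check that the implicit lag/fuzziness offsets hidden in $A_{i,j^*+T_i+f}$ and in $A_{i,\Ds+T_i}$ can genuinely be tied to $(\TiS,\FijS)$ so that each of the two entries collapses to $\RiS$ plus a row‑independent profile value, and that the offset needed for $j^*$ never leaves $[-\MF,\MF]$. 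Crucially this matching reuses the alignment the row‑addition phase has already fixed through the columns of $\DS$, so no fresh fuzziness budget is spent in the column phase -- it merely has to be bookkept carefully.
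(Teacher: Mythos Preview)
Your proposal is correct and follows essentially the same approach as the paper. The paper's own proof is a single sentence: from the optimality of $(I^*,T^*,J^*,\MF)$ one has $|A_{i,j}-\RiS-C^{*}_{j+\TiS+\FijS}|\le w$ for every $j\in J^*$, hence $j$ is added to $J$. You unpack exactly this inference by explicitly constructing, for each $i$, the admissible offset $f$ that aligns $A_{i,j^*+T_i+f}$ with the optimal profile, exhibiting the cancellation of $\RiS$, and concluding that all events land in a single $4w$ window---details the paper leaves implicit.
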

	\begin{proof}	
		For each $j \in J^*$, we obtain from the optimality of ($I^*, T^*, J^*, \MF)$ that
		$|A_{i,j} -  \RiS  - C^{*}_{j+\TiS+\FijS}| \leq w$.
		Thus, $j$ will be added to $J$, namely $j \in J$.
	\end{proof}

	As a consequence of \LEMMA~\ref{proofs::J-factor-2},
	additional columns that are not in $J^*$ might be added.
	Yet the maximum number of added columns is $|J^*|$ (see \LEMMA~\ref{proofs::J-factor-3}) and
    the mined cluster will have a maximal error of $2w$.
\QEDclosed
\end{proof*}

\begin{remark} \label{proofs:theorem:remark-setting-S}
    The bound of 	
	$|\DS|\geq$ $\log(4mn)/$ $\log(1/3\MJT(2\MF+1))$
	includes the parameter $\MJT$ whose value is not given as part of the problem input.
	In \SUBSECTION~\ref{Expr:artificial:discr-set-size}, we show experimentally that
	a random subset of size $0.6 \log_2(4mn) -1$ will suffice,
	freeing the user from the burden of specifying the $\MJT$-related trade-off.
\end{remark}

\begin{remark} \label{proofs:theorem:remark-S-discriminating-probability}
  \THEOREM~\ref{proofs:theorem:d3} describes a discriminating set
  $\DS$ with a minimum discriminating probability of 0.5.
	In \SUBSECTION~\ref{Expr:artificial:discr-probabilities}, we illustrate the
	relation between various magnitudes of $|\DS|$ and their discriminating probability.	
\end{remark}

\def \MN  {k}
\def \IMN  {j}
\begin{remark} \label{proofs:theorem:remark-calc-profile}
    \THEOREM~\ref{proofs:theorem:d3}, \THEOREM~\ref{proofs:theorem:rc} and \ALGORITHM~\ref{algorithm:FLC} all
    assume only the \emph{existence} of the profiles $\RiS$, $\TiS$ and $\CjS$.
    Although the actual values of the profile are \emph{not} calculated in practice, an explicit calculation can be computed.
    %
    In the case of a matrix $A$ of size [2$\times$\MN]
    (equivalent to [\{i,\Dp\}$\times$\DS], see \DEFINITION~\ref{proofs::discriminating}),
    one can use the following polynomial technique~\cite[\SUBSECTION~4.1]{melkman2004sc}.
    %
    First, permute the columns of the matrix $A$ so that:
    $$A_{1,1}-A_{2,1}\leq A_{1,2}-A_{2,2}\leq \cdots \leq A_{1,\MN}-A_{2,\MN}.$$
    Next, set
    $w = [(A_{1,\MN}-A_{2,\MN}) - (A_{1,1}-A_{2,1})]/2$,
    $h=[(A_{1,\MN}-A_{2,\MN}) + (A_{1,1}-A_{2,1})]/2$
    and let $\ell$ be
    such that: $A_{1,\ell}-A_{2,\ell} \leq h \leq A_{1,\ell+1}-A_{2,\ell+1}$.
    Then $R=<\!0,-h\!>$ and $C=<\!A_{1,1}+w,A_{1,2}+w,\ldots, A_{1,\ell}+w,A_{1,\ell+1}-w,\ldots, A_{1,\MN}-w\!>$.
    Therefore, we get $\SWTF(I, J)$=$[\max_{\IMN \in J} (A_{1,\IMN}-A_{2,\IMN})-\min_{\IMN  \in J} (A_{1,\IMN}-A_{2,\IMN})]/2$, where $|I|$=2 and $|J|$=\MN.

    \nin
    For cases of a general matrix size,
    we refer the reader to Melkman et al.~\cite[\SUBSECTION~4.2]{melkman2004sc},
    which is a discrete version of the Diliberto-Straus algorithm~\cite{diliberto1951approximation}.

\end{remark}

\begin{remark} \label{proofs:theorem:remark-generic-distribution}
    Neither \THEOREM~\ref{proofs:theorem:d3} nor \THEOREM~\ref{proofs:theorem:rc}
    make any assumption whatsoever on the distribution of the data in the matrix nor on the distribution of the data in the \FLC\ to be mined. The {\FLCA} algorithm is completely generic.
\end{remark}


\subsection{Extensions} \label{subsection:algorithm:model-extensions}

Next, we present several extensions to the {\FLCA} algorithm and the resulting algorithmic modifications supporting these extensions.

\subsubsection{Varying Fuzziness} \label{algorithm:impl-note-MF-foreach-row}
	A major characteristic of the {\FLCA} algorithm is the maximum allowed fuzziness $\MF$.
	This fuzziness is assumed to be common to all entries of the matrix.
	The algorithm can be extended to include a \emph{different} maximum fuzziness
    for each row of the matrix, denoted ${\MF}_i,\ i\in m$.
	To achieve this, the algorithm needs to be modified in the events which are later used by the sliding window
    ({\LINE}s~\ref{algorithm:AlgRowPhaseSlidingWindow} and \ref{algorithm:AlgColPhaseSlidingWindow} of \ALGORITHM~\ref{algorithm:FLC}).
    Essentially, the modification is in using the row's maximum allowed fuzziness ${\MF}_i$ instead of the global fuzziness $\MF$.
    The modifications are as follows.
    \begin{itemize}

    \item[$\circ$]
    \LLINE~\ref{algorithm:AlgRowPhaseSlidingWindow}, which accumulates rows, should be modified to: \\
    \textbf{Slide a $4w$ width window on} \\
    $\{e_{s,j,f} \:|\: e_{s,j,f} = A_{i,j+f} -  A_{\Dp,s},\: \forall j\in n,\: \forall s \in \DS,\: -{\MF}_i \leq f \leq {\MF}_i \}$.

    \item[$\circ$]
    \LLINE~\ref{algorithm:AlgColPhaseSlidingWindow}, which accumulates columns, should be modified to: \\
    \textbf{Slide a $4w$ width window on} \\
    $\{e_i\:|\: e_i = A_{i,j+T_i+f} - A_{i,\Ds+T_i},\: \forall i\in I,\: -{\MF}_i \leq f \leq {\MF}_i  \}$.
    \end{itemize}
    Similarly,
    the algorithm can also be extended to
    include a different maximum allowed fuzziness for each column of the matrix, denoted ${\MF}_j,\ j\in n$.

\subsubsection{Reduction in Size of the Discriminating Set} \label{algorithm:impl-note-SFZ}
	\def \SFZ {S^0}

	The discriminating set $\DS$, as shown by \THEOREM~\ref{proofs:theorem:d3},
	is a small subset of size ${\cal O}(\log (mn))$.
	Nevertheless, the number of iterations $\LOOPS$ required for achieving the probabilistic guarantee
    as shown by \THEOREM~\ref{proofs:theorem:rc},
	is \emph{exponentially} proportional to $|\DS|$.
    To improve the algorithm's run-time, we propose to take
   	a \textit{subset} of the discriminating columns set $\DS$, denoted $\SFZ$,
    and assume it has zero fuzziness over all of the cluster's rows,
	i.e., let $(I,T,J,\MF)$ be a {\FLC} with a discriminating set of columns $\DS$,
    with the assumption that $\EF_{i,j}$=0, for all $i\in I,\ j\in \SFZ$.
	The assumption reduces the combinatorial number of rows that needs to be filtered by the discriminating set,
	and thus reduces the set size needed for the task
    (\LINE~\ref{algorithm:AlgDiscColSet} of \ALGORITHM~\ref{algorithm:FLC}).
    However, this comes with the cost of limiting the nature of the clusters mined,
    i.e., {\FLC}s which do not have a minimum of $|\SFZ|$ columns of zero fuzziness would not be mined.
    To achieve that, we modify the {\FLCA} algorithm in the following way.
	\LLINE~\ref{algorithm:AlgDiscColSet}, in addition to randomly choosing $\DS$,
	also randomly chooses $\SFZ \subseteq \DS$.
	Next, we modify \LINE~\ref{algorithm:AlgRowPhaseSlidingWindow}
    to set the fuzziness $f$
    such that
    if $s\in\SFZ$ then $f$=0, or otherwise $-\MF\!\leq\!f\!\leq\!\MF$.
    The results of an experiment to evaluate the effectiveness of this approach are reported in
    \SUBSECTION~\ref{subsec:Expr:artificial} (see \ExprArtificialDiscrSetSize),
    revealing that even a moderated subset of $\DS$ for which a zero fuzziness is assumed,
    e.g., $|\SFZ|$=3, supplies a good balance
	between the gain in run-time and the constraint it implies on the model.
    {\ExprArtificialLowDiscrSizeMoreIterations}
    suggests a technique which enables the use of an even lower discriminating set size.

\subsubsection{Finding the Maximal Columns Set} \label{algorithm:impl-note-Japaneses-Bridges-problem-solution}

	\def \IG {\widehat{G}}

	As part of the process of column addition
    ({\LINE}s~\ref{algorithm:AlgColPhaseStart}-\ref{algorithm:AlgColPhaseEnd} of \ALGORITHM~\ref{algorithm:FLC}),
    each added column has its fuzziness setting.
    Nevertheless, when considering those settings in the context of a cluster, it may well happen that they do not co-exist.
	Take for example the following simple scenario:
    column $j_1$ has a fuzziness of $\EF_{i,j_1}$=$1$ and column $j_2$ has a fuzziness of
    $\EF_{i,j_2}$=$-1$, $\forall i \in I$.
    In the case of zero lag ($T_i$=0) and $j_2$=$j_1+1$,
	the cluster's fuzziness setting would not be valid
    as although $j_1<j_2$,
    the actual matrix columns for $j_1$ and $j_2$ would be $j_2$ ($=j_1+1=j_1+\EF_{i,j_1}$) and $j_1$ ($=j_2-1=j_2+\EF_{i,j_2}$), respectively.
    In the case where $j_1<j_2$ are time points,
    we expect that their actual matrix entries will also maintain the same ordering relations (and not $j_2<j_1$).

    \def \HEIGHT    {0.26\textheight} 
    \def \WIDTH	    {0.40\textwidth} 
    \def \HSPACE    {20pt}
	\begin{figure}
	  \centering   	
		  \subfloat [A bridge matrix] {
			\includegraphics[clip=true,trim=50 122 60 90,
                                height=\HEIGHT, width=\WIDTH]{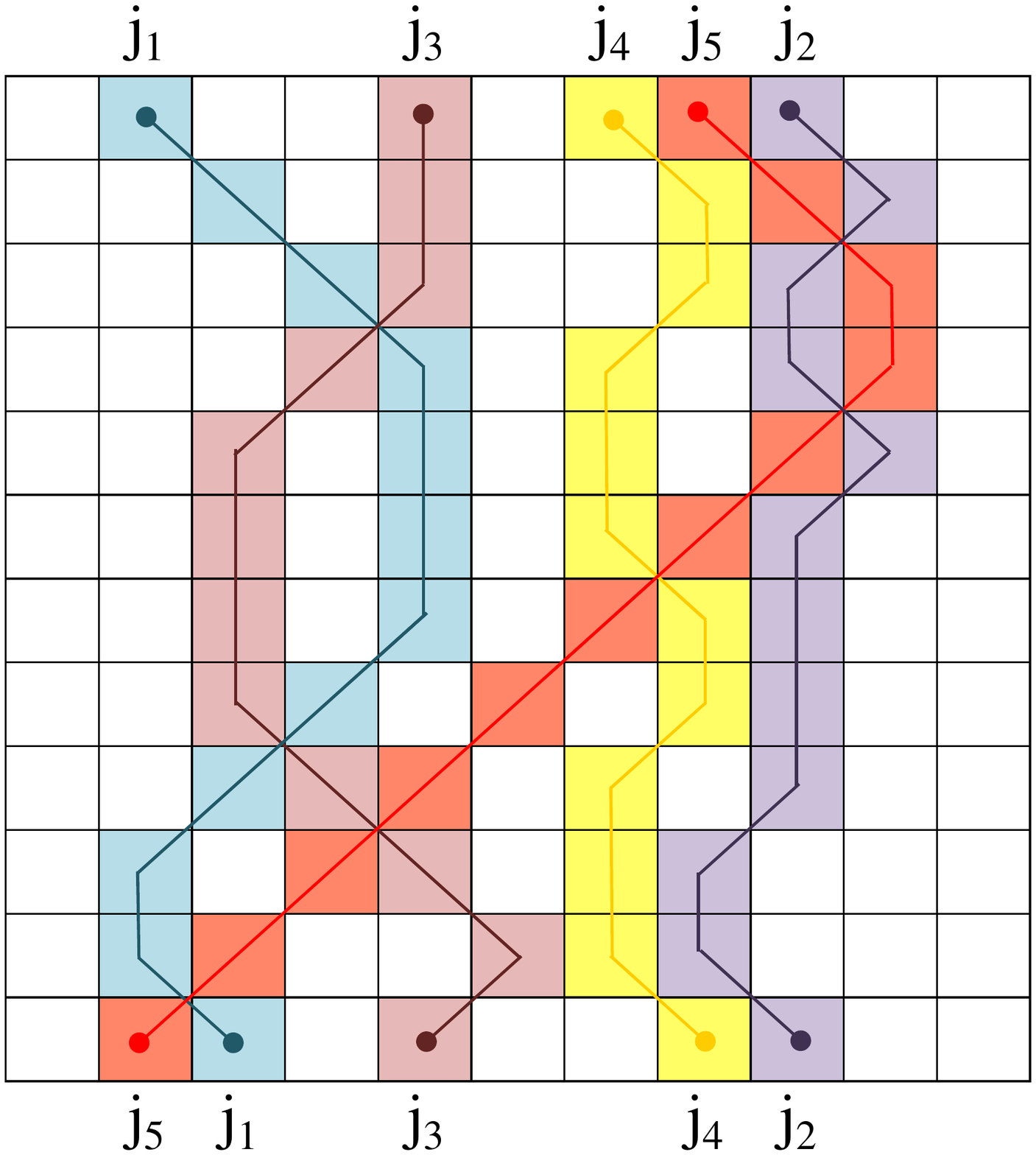}
			\label{Algo:japaneses-bridges-example-matrix}
		  }	
	  	  \subfloat [Intersection graph] {
            \hspace{\HSPACE}
            \includegraphics[clip=true,trim=170 242 310 34,
                                height=\HEIGHT, width=\WIDTH]{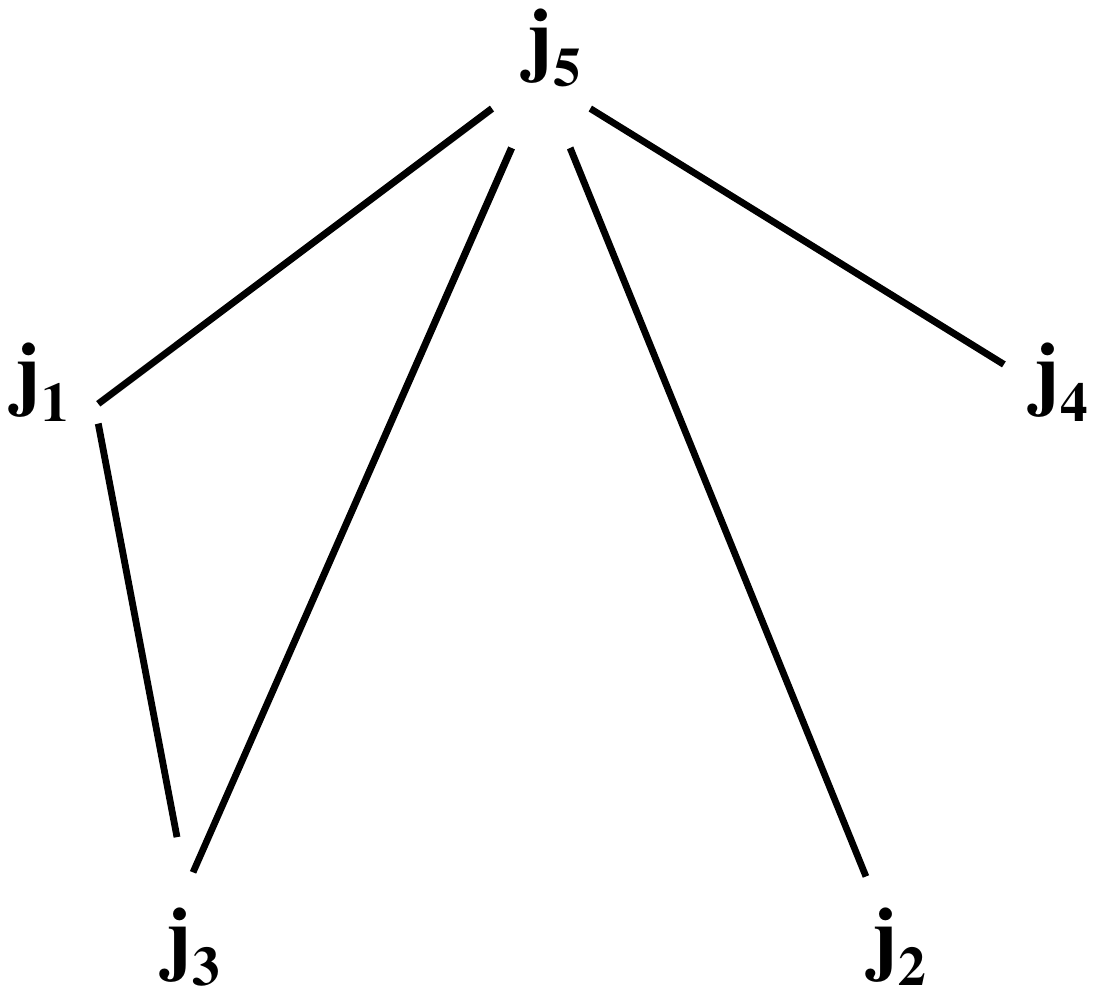}
			\label{Algo:japaneses-bridges-example-intersection}
		  }	
		  \caption{
           \protect\subref{Algo:japaneses-bridges-example-matrix} Example of a bridge matrix.
           Each color represents a different bridge.
           The lines related to each color represent the bridge graph.
           Take for example column $j_2$ and $j_4$. Assuming a zero lag (i.e., $T_i=0,\ \forall i \in I$),
           the column's fuzziness is $\{0, 1, 0, 0, 1, 0, 0, 0, 0, -1, -1, 0\}$
           and $\{0, 1, 1, 0, 0, 0, 1, 1, 0, 0, 0, 1\}$, respectively.
           \protect\subref{Algo:japaneses-bridges-example-intersection} The intersection graph of the bridge graph.
           The bridge of $j_5$ (red color) intersects all other bridges
           ($j_1$, $j_2$, $j_3$ and $j_4$).
            The bridges of $j_1$ (blue) and $j_3$ (wine) also intersect.
           All other bridges do not intersect
           (e.g., the bridges of $j_2$ (purple) and $j_4$ (yellow)).
           Therefore, the maximum non-intersecting set of bridges is either
           \{$j_1$, $j_2$, $j_4$\} or \{$j_3$, $j_2$, $j_4$\}.
		  }
		\label{Algo:japaneses-bridges-example}
	\end{figure}
	One solution to this problem can be a post-processing step.
	We denote each of the columns' fuzziness setting as a \emph{bridge},
	where the bridges are drawn on the discrete entries of the matrix $A$
    (see example in \FIGURE~\ref{Algo:japaneses-bridges-example-matrix}).
    We therefore wish to find the maximum non-intersecting set of bridges.
	To do so, consider the following problem:
	let $G$=$(V,E)$ be a bi-partite bridge graph with $|V|$=$n$ vertices on each side, and each
	edge $e$$\in$$E$ is a monotonic path between the upper and the lower side,
    i.e., a monotonous path of $\langle A_{i_1,j_1}, A_{i_2,j_2}, \ldots, A_{i_k,j_k}\rangle$, where $i_1<i_2<\ldots<i_k$.
	The goal is to find the maximal set of non-intersecting edges in graph $G$.
    We do so by first showing in \LEMMA~\ref{algorithm:impl-note-Japaneses-Bridges-perfect-graph}
    that the intersection graph $\IG$
    of the bridge graph $G$
    (see example in \FIGURE~\ref{Algo:japaneses-bridges-example-intersection} and \ref{Algo:japaneses-bridges-example-matrix}, respectively)
    is a perfect graph.
    Next, we conclude in \COROLLARY~\ref{algorithm:impl-note-Japaneses-Bridges-perfect-corollary}
    that the graph $G$ is also a perfect graph. As such, polynomial algorithms for finding a maximum clique can be applied, which results in finding the maximum set of non-intersecting columns' bridges.

    \def \HEIGHT    {0.21\textheight} 
    \def \WIDTH	    {0.35\textwidth} 
    \def \HSPACE    {20pt}
	\begin{figure}
	  \centering   	
		  \subfloat [Intersection Graph $\IG$] {
			\includegraphics[clip=true,trim=174 272 311 64,
                                height=\HEIGHT, width=\WIDTH]{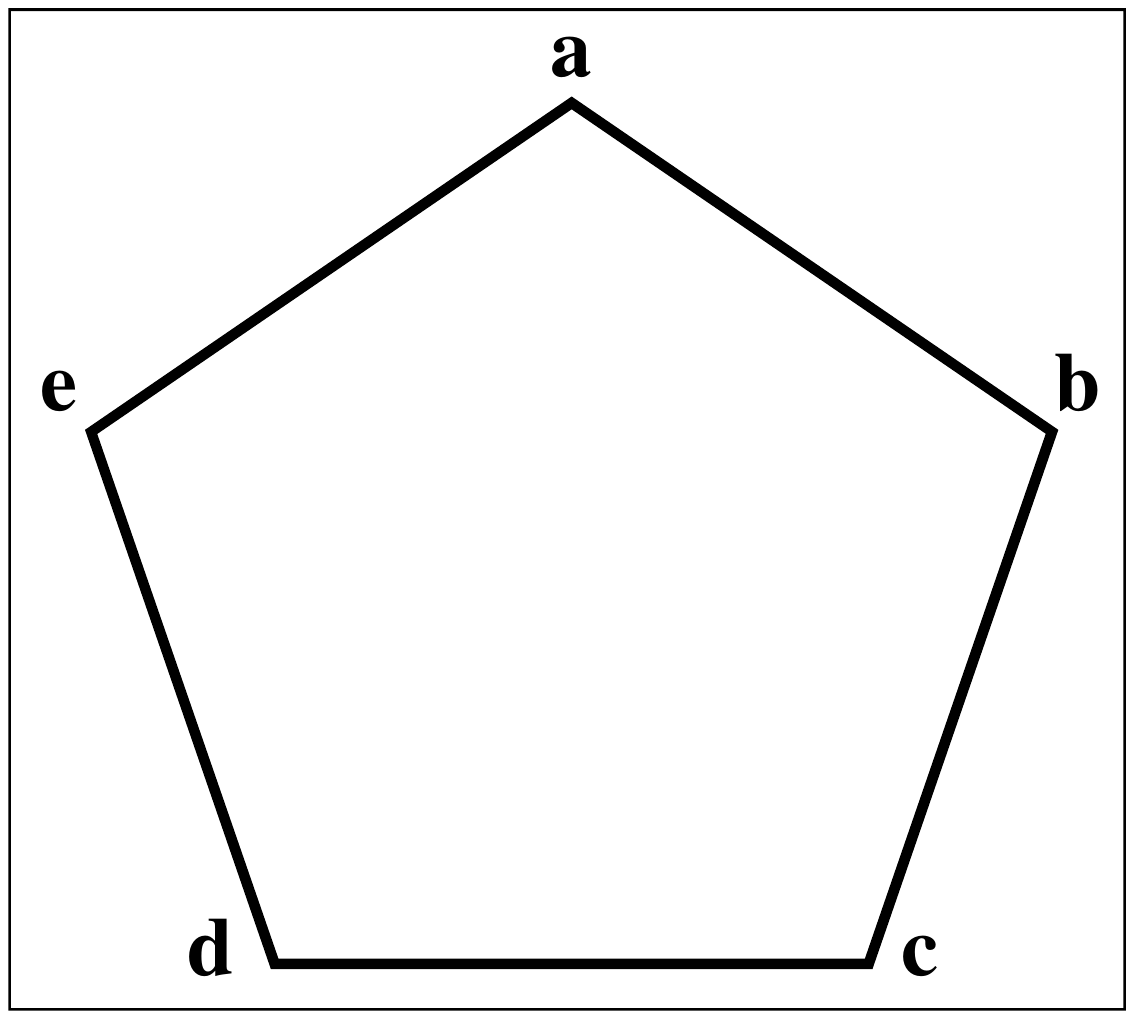}
			\label{Algo:japaneses-bridges-problem-intersection}
		  }	
	  	  \subfloat [Bridge Graph $G$] {
            \hspace{\HSPACE}
            \includegraphics[clip=true,trim=174 140 300 114,
                                height=\HEIGHT, width=\WIDTH]{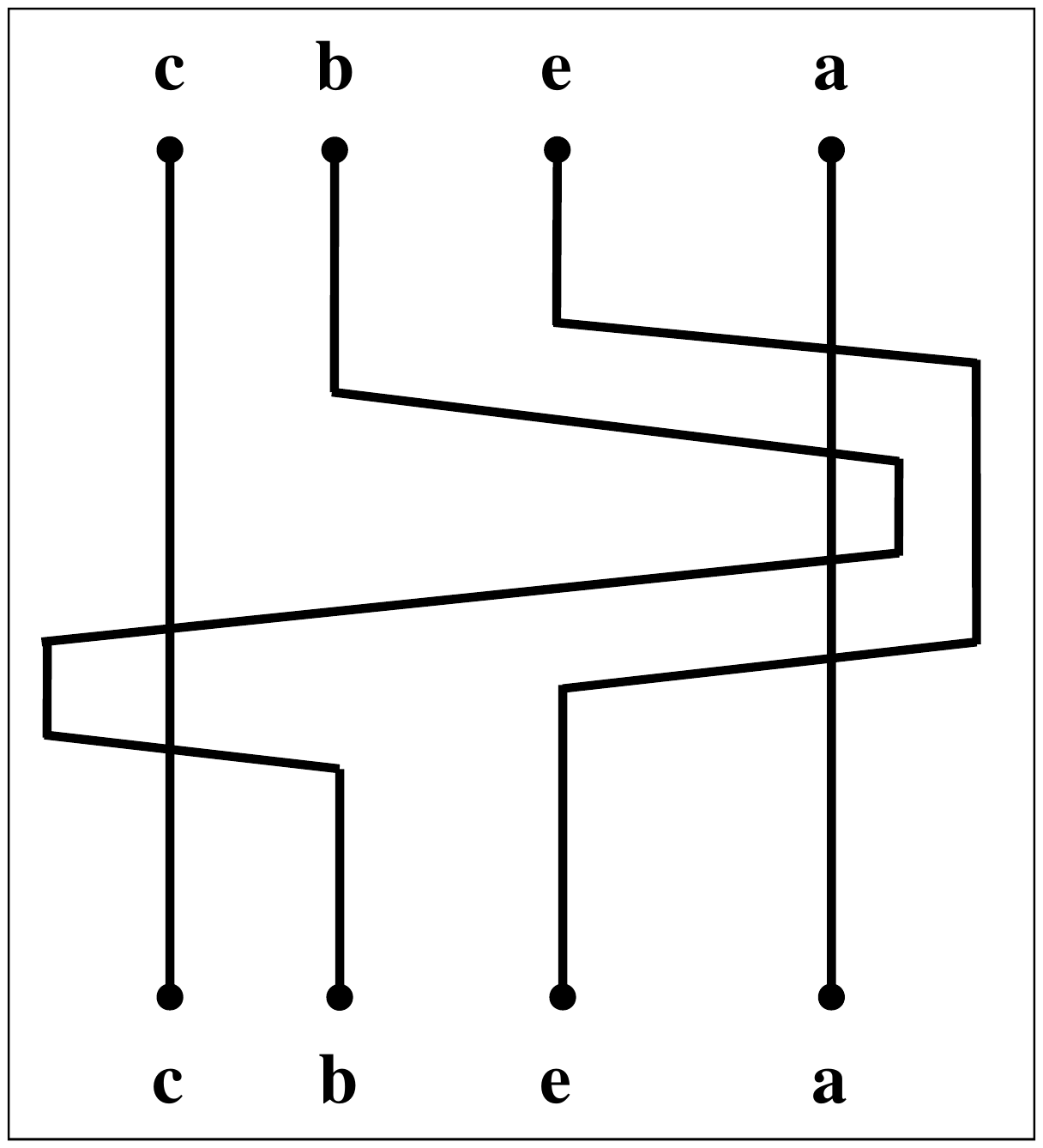}
			\label{Algo:japaneses-bridges-problem-5-cycle}
		  }	
		  \caption{
		  	A graph is perfect if it cannot have a cycle of a minimum length of $5$ \cite{robertson2006strong}.
		  	\FIGURE~\ref{Algo:japaneses-bridges-problem-intersection} presents a $5$-cycle intersection graph $\IG$.
		  	\FIGURE~\ref{Algo:japaneses-bridges-problem-5-cycle} illustrates the \emph{failure} to draw
		  	the correlating $5$-cycle bridge graph $G$.
		  }		 	
		\label{Algo:japaneses-bridges-problem}
	\end{figure}
	\begin{lemma} \label{algorithm:impl-note-Japaneses-Bridges-perfect-graph}
		A bridge graph is a perfect graph.
	\end{lemma}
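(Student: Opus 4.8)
It is worth noting first that, read literally, the bipartite graph $G$ of the construction is already perfect for a trivial reason: a bipartite graph has no odd cycle at all, hence no induced odd hole and no induced odd antihole. The substance of the claim --- as the surrounding prose and \FIGURE~\ref{Algo:japaneses-bridges-problem} indicate --- is that the \emph{intersection graph} $\IG$ of the bridges is perfect, and that is what I would prove. The plan is to avoid arguing perfectness from scratch and instead identify $\overline{\IG}$ with a classical perfect class. The key structural observation is that, once we fix (or close downward) the set of rows over which the bridges are drawn, a bridge is nothing but a map sending each such row to the actual matrix column its attribute occupies there, and two bridges fail to intersect exactly when one of them sits strictly to the left of the other in \emph{every} row. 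Write $b \prec b'$ for this relation. First I would verify that $\prec$ is a strict partial order --- irreflexivity is immediate and transitivity is a row-by-row comparison --- and that $\{b,b'\}$ is a non-edge of $\IG$ if and only if $b,b'$ are $\prec$-comparable. Thus $\overline{\IG}$ is precisely the comparability graph of the poset of bridges ordered by $\prec$.

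With that in hand the conclusion is one step away: comparability graphs are perfect (classical; it follows from Dilworth's theorem applied to each induced subposet), and a graph is perfect if and only if its complement is (the weak perfect graph theorem). Hence $\IG$ is perfect, which is \LEMMA~\ref{algorithm:impl-note-Japaneses-Bridges-perfect-graph}. This route also delivers the corollary the paper is after at essentially no cost: a maximum set of non-intersecting bridges is a maximum independent set of $\IG$, equivalently a maximum clique of the perfect graph $\overline{\IG}$, and concretely it is just a longest $\prec$-chain, computable by dynamic programming in polynomial time.

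Should one prefer a self-contained argument along the lines suggested by \FIGURE~\ref{Algo:japaneses-bridges-problem} --- proving directly that $\IG$ contains no long induced odd hole and no long induced odd antihole, and then quoting the Strong Perfect Graph Theorem \cite{robertson2006strong} --- the crux is the odd-hole case. Suppose bridges $b_1,\dots,b_{2k+1}$ with $k\ge 2$ induce a cycle of $\IG$, so that $b_i$ intersects $b_{i+1}$ cyclically while all non-consecutive pairs are disjoint, hence $\prec$-comparable. Reading the orientations of $\prec$ on those non-consecutive pairs around the cycle, transitivity of $\prec$ forces a strict linear order on a suitable subfamily, and this is incompatible with the $(2k+1)$-cycle pattern of incomparabilities among the consecutive pairs --- a contradiction. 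The odd-antihole case needs no separate treatment: passing to $\overline{\IG}$ (again by the weak perfect graph theorem) turns an induced odd antihole of $\IG$ into an induced odd hole of the comparability graph $\overline{\IG}$, which the same argument excludes.

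The one genuinely delicate point, and where I expect the real care to go, is the bookkeeping behind the equivalence ``non-intersecting $\iff$ one bridge strictly left of the other in every row'': one must settle how bridges that share a matrix cell are treated (they should count as intersecting) and must ensure all bridges are compared over a common row set (restrict to the cluster rows, or close the row set downward before comparing). Once that equivalence is nailed down, no new combinatorics is needed --- everything collapses to the perfectness of comparability graphs --- and the \FIGURE~\ref{Algo:japaneses-bridges-problem}-style odd-cycle analysis, while available, is strictly the longer road.
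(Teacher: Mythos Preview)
Your argument is correct and takes a genuinely different route from the paper. The paper argues directly on $\IG$: it assumes an induced $5$-cycle among bridges $a,b,c,d,e$ and, by a left/right placement case analysis (essentially the picture in \FIGURE~\ref{Algo:japaneses-bridges-problem}), derives a contradiction; it then invokes the Strong Perfect Graph Theorem \cite{robertson2006strong}. Strictly speaking that proof only spells out the $5$-cycle case and says nothing about longer odd holes or about odd antiholes, so as written it leans rather heavily on the reader to extrapolate.

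Your approach sidesteps all of that. By observing that ``non-intersecting'' is exactly ``$\prec$-comparable'' for the row-wise strictly-left-of order, you identify $\overline{\IG}$ with a comparability graph in one stroke; perfectness of $\IG$ then follows from the classical perfectness of comparability graphs together with the \emph{weak} perfect graph theorem \cite{lovasz1972normal}, which the paper already has on hand for the corollary. This is cleaner, needs no SPGT, and as you note it makes \COROLLARY~\ref{algorithm:impl-note-Japaneses-Bridges-perfect-corollary} immediate and even sharper: a maximum non-intersecting set is a longest $\prec$-chain, found by elementary DP rather than by general perfect-graph machinery. The paper's approach buys a concrete, pictorial argument one can draw; yours buys completeness, brevity, and a direct algorithm. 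The caveat you flag --- fixing a common row set and treating shared cells as intersections so that $\prec$ is a genuine strict partial order --- is exactly the right place to be careful, and is the only thing standing between your outline and a fully rigorous proof.
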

	\begin{proof}
		Let $\IG$ be the intersection graph of the bridge graph $G$.
		We show by negation that the intersection graph $\IG$ cannot have a cycle of a minimum length of $5$,
		and thus $\IG$ is a perfect graph \cite{robertson2006strong}.
		\FIGURE~\ref{Algo:japaneses-bridges-problem} depicts the following steps.
		\begin{enumerate}
			\item
			By negation, let us assume that there is an intersection graph $\IG$
			with a cycle of a minimum length $5$
            (see \FIGURE~\ref{Algo:japaneses-bridges-problem-intersection}).
			
			\item
			Let us denote the $5$ vertices of $\IG$ as $a$, $b$, $c$, $d$ and $e$.

			\item
			Because $a$ and $c$ do not intersect, assume (w.l.o.g.) that $a$ is to the right of $c$
            (see \FIGURE~\ref{Algo:japaneses-bridges-problem-5-cycle}).
			
			\item
			Observe that because $c$ and $e$ do not intersect, $e$ cannot be to the left of $c$
            as it should intersect $a$.
			Therefore $e$ is to the right of $c$.
			
			\item
			$b$ intersects $c$ but does not intersect $e$. Therefore $e$ is to the right of $b$.
			
			\item
			$d$ does not intersect $b$:
			
			\nin
			(a) if $d$ is to the left of $b$ it cannot intersect $e$ -- negation.
			
			\nin
			(b)	if $d$ is to the right of $b$ it must also be to the right of $a$
            but then it cannot intersect $c$ -- negation.
		\end{enumerate}
	\end{proof}

	\begin{corollary} \label{algorithm:impl-note-Japaneses-Bridges-perfect-corollary}
		The algorithm's column addition phase,
        which results in a maximum set of non-intersecting columns,
        has a polynomial run-time.
	\end{corollary}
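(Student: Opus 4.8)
The plan is to reduce the task of the post-processing step to a maximum-weight clique (equivalently maximum independent set) problem on a perfect graph, invoke the known polynomial solvability of that problem, and then fold the cost into the polynomial run-time already established for the rest of the column addition phase in \SUBSECTION~\ref{subsection:algorithm:Run-Time}.

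First I would make the reduction explicit. After the loop of {\LINE}s~\ref{algorithm:AlgColPhaseStart}--\ref{algorithm:AlgColPhaseEnd} has produced at most one candidate bridge per accumulated column, a subset of columns whose fuzziness assignments mutually co-exist is exactly a set of pairwise non-intersecting bridges in the bridge graph $G$; so a \emph{maximum} such column set is a maximum independent set of the intersection graph $\IG$, i.e.\ a maximum clique of its complement $\overline{\IG}$. Since $G$ has at most $n$ bridges (one per column), $\IG$ has at most $n$ vertices and is constructible in polynomial time: for each of the ${\cal O}(n^2)$ pairs of monotone paths one tests in ${\cal O}(n)$ time whether they cross (as in \FIGURE~\ref{Algo:japaneses-bridges-example}).

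Next I would invoke the structural result. By \LEMMA~\ref{algorithm:impl-note-Japaneses-Bridges-perfect-graph}, $\IG$ has no induced odd hole — the $5$-cycle argument given there applies verbatim to any odd cycle, and $\overline{C_5}=C_5$ disposes of the antihole — so by the Strong Perfect Graph Theorem \cite{robertson2006strong} $\IG$ is perfect, hence by the (weak) perfect graph theorem so is $\overline{\IG}$; this is the sense in which "$G$ is also a perfect graph". On perfect graphs a maximum clique, and therefore a maximum independent set of $\IG$, is computable in time polynomial in $|V(\IG)|\le n$ by the Gr\"otschel--Lov\'asz--Schrijver algorithm. Adding this polynomial post-processing cost to the ${\cal O}((mn)^{1-\log\MJ}\,{\log}^2(mn)\,\MF)$ bound for the remainder of the phase leaves the whole column addition phase polynomial, which is the claim of \COROLLARY~\ref{algorithm:impl-note-Japaneses-Bridges-perfect-corollary}.

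The step I expect to be the main obstacle is making the reduction airtight rather than merely plausible from the zero-lag picture: one must verify that "co-existence of the chosen fuzziness values" coincides with "non-intersection of the corresponding monotone paths" for every lag $T_i$, and that taking a maximum (not just maximal) non-intersecting bridge set genuinely yields the largest valid columns set. A lesser point worth stating cleanly is that the perfect-graph optimisation oracle runs in time polynomial in the at-most-$n$ vertices of $\IG$, so the factor it contributes is dominated by the run-time bound already in hand; the rest is routine bookkeeping.
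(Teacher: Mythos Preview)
Your proposal is correct and follows essentially the same route as the paper: use \LEMMA~\ref{algorithm:impl-note-Japaneses-Bridges-perfect-graph} to conclude $\IG$ is perfect, pass to the complement via Lov\'asz's perfect graph theorem, and invoke the Gr\"otschel--Lov\'asz--Schrijver polynomial maximum-clique algorithm. The paper's own proof is three sentences long and does exactly this, without your additional bookkeeping about building $\IG$ in ${\cal O}(n^3)$ time or folding the cost into the overall run-time bound; your version is more explicit and arguably more careful (the paper somewhat loosely says it applies the clique procedure ``to the graph $G$''), but the argument is the same.
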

	\begin{proof}
		Following \LEMMA~\ref{algorithm:impl-note-Japaneses-Bridges-perfect-graph},
		the intersection graph $\IG$ is a perfect graph.
		As the complement graph of a perfect graph is also a perfect graph \cite{lovasz1972normal},
		we can apply a maximum clique polynomial run-time procedure
        \cite{grotschel1989Geometric,grotschel1984polynomial}
		to the graph $G$
		in order to acquire a maximum columns set.
	\end{proof}



\section{Experiments} \label{sec:Experiments}

Next, we present an extensive evaluation of the {\FLCA} algorithm, using both artificial and real-life data.



\subsection{Experiments with Artificial Data} \label{subsec:Expr:artificial}

In comparison to real-life data, the use of artificial data enables maximum control over the algorithm's input and parameter  settings, which in turn enables
the verification and validation of the algorithm's output.
Specifically, the contributions of the experimentation used for the {\FLCA} algorithm with artificial data are threefold.
First, it establishes default values for the various parameters.
Second, it enables the verification of theoretical bounds.
Finally, it demonstrates a feasible actual run-time.\footnote{While the number of iterations is proved to be polynomial, we want to ensure that the actual performance for large inputs is feasible.}

\subsubsection*{{\ExprArtificialProbArtifact}: Probability of Artifacts}
\label{Expr:artificial:prob-artifact}

\def \CLIFF {{phase transition}} 

\nin
An interesting question in the context of the fuzzy lagged model is
how frequently artifacts are mined.
An artifact is a submatrix that was formed not as a result of some hidden regulatory mechanism, but as a mere aggregation of noise. Such artifacts are undesirable as they add irrelevant output.

Given a matrix with randomly generated values (from a uniform distribution),
the probability of mining an artifact \FLC\ $(I,T,J,\MF)$ depends on several parameters:
(1) the matrix dimensions, $[m \times n]$;
(2) the \FLC\ dimensions, $[|I| \times |J|]$;
(3) the error $\SW$, $0\% \leq \SW \leq 100\%$;
and (4) the fuzziness $\MF$.
Intuitively, the larger the error $\SW$, fuzziness $\MF$, and matrix size $m$ and $n$, and the smaller the requested cluster dimensions $I$ and $J$,
the greater the chance of mining artifact clusters with an increasing probability of smaller clusters.
To examine the correlation between these parameters, we present the following upper bound probability analysis.

\def \FormulaCell 			{1-(1-\min(2\SW, 1))^{2\MF+1}}
\def \FormulaRow  			{[\FormulaCell]^{|J|}}
\def \FormulaCluster			{[\FormulaCell]^{|I||J|}}
\def \FormulaNoCluster			{1-[\FormulaCell]^{|I||J|}}
\def \FormulaNoClusterBinom		{\{\FormulaNoCluster\}^{\binom{2mn}{|I|} \binom{3n}{|J|}}}
\def \FormulaClusterBinom		{1-\FormulaNoClusterBinom}

Assume we know the column profile $p$.
The probability of a column $j\in J$ of a fuzzy lagged row $i\in I$ to
be within a surrounding of fuzziness $\MF$ and error $w$, encircling $p$ is: $\FormulaCell$.
Hence, the probability of all columns $j\in J$ of a fuzzy lagged row $i\in I$ to
be within a surrounding of fuzziness $\MF$ and error $w$, encircling $p$ is: $\FormulaRow$.
Thus, the probability of all rows $I$ to form a \FLC\ is: $\FormulaCluster$.
The probability of not having such a \FLC\ is therefore: $\FormulaNoCluster$.
The representation of a fuzzy lagged matrix of size $[m \times n]$  as a non-lagged matrix, results in a matrix of size $[2mn \times 3n]$ (see \SUBSECTION~\ref{subsection:NP-completeness}).
Thus, choosing a set size $|I|$ out of $(2mn)$ rows has $\binom{2mn}{|I|}$ combinations.
Similarly, choosing a set size $|J|$ out of $(3n)$ columns has $\binom{3n}{|J|}$ combinations.
Therefore, the probability that none of the possible sub-matrices of this size in the matrix forms a {\FLC} is:  $\FormulaNoClusterBinom$.
Hence, an upper bound for the probability of at least one artifact \FLC\ to exist is:
\begin{equation} \label{Expr:artificial:probability-eq}
		\FormulaClusterBinom.
\end{equation}
As $\SW$, $\MF$, $m$ and $n$ increase, the above probability will increase.
As $I$ and $J$ increase the above probability will decrease.

\def \HEIGHT    {0.167\textheight} 
\def \WIDTH	    {0.27555\textwidth} 
\def \HSPACE    {-8pt}
\def \FHeight   {3.5cm}

To facilitate understanding of the formula, we present \FIGURE~\ref{Expr:artificial:prob-cliff-change-I-J} and \FIGURE~\ref{Expr:artificial:prob-cliff-change-f-e} (generated by Wolfram$|$Alpha~\cite{WolframAlpha}).
\begin{figure*}
  \centering

  \subfloat[$\MI$=1\%, $\MJ$=1\%.]
  {
    \hspace{\HSPACE}
  	\includegraphics[clip=true,trim=100 380 220 140,height=\HEIGHT, width=\WIDTH]{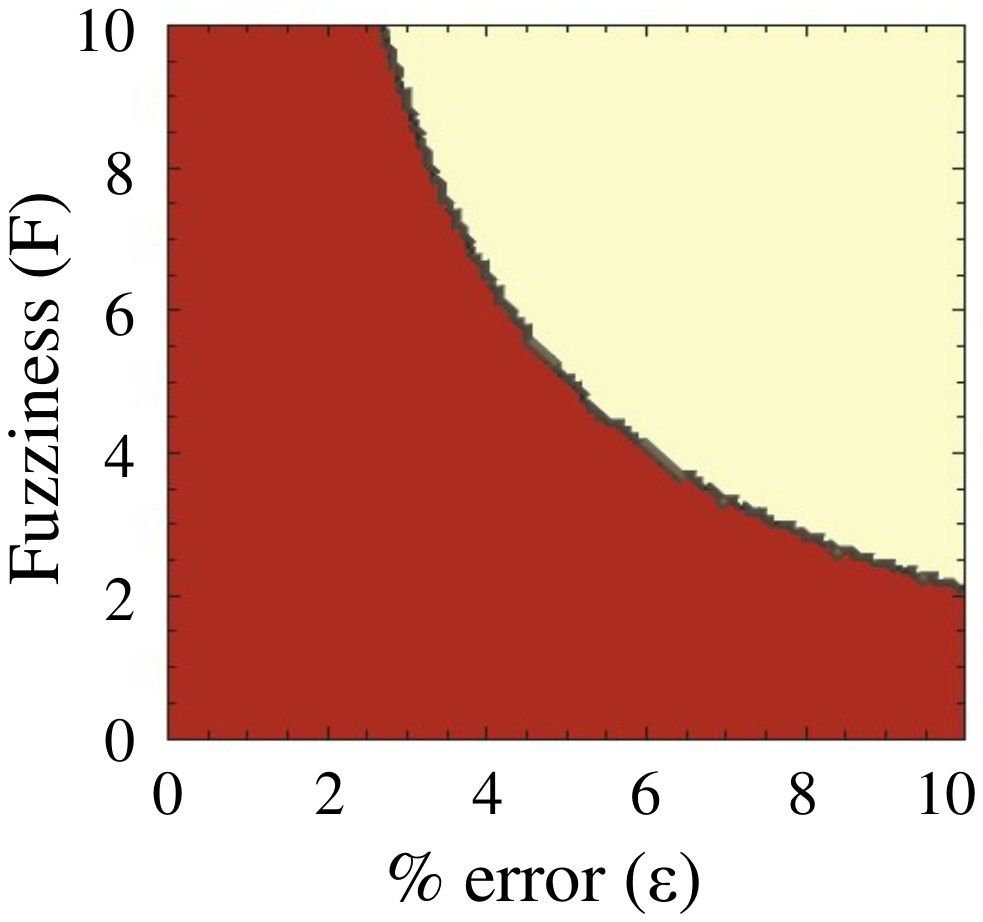}
  	\label{Expr:artificial:prob-cliff-0.01x0.01}
  }
  \subfloat[$\MI$=1\%, $\MJ$=2\%.]
  {
    \hspace{\HSPACE}
  	\includegraphics[clip=true,trim=100 380 220 140,height=\HEIGHT, width=\WIDTH]{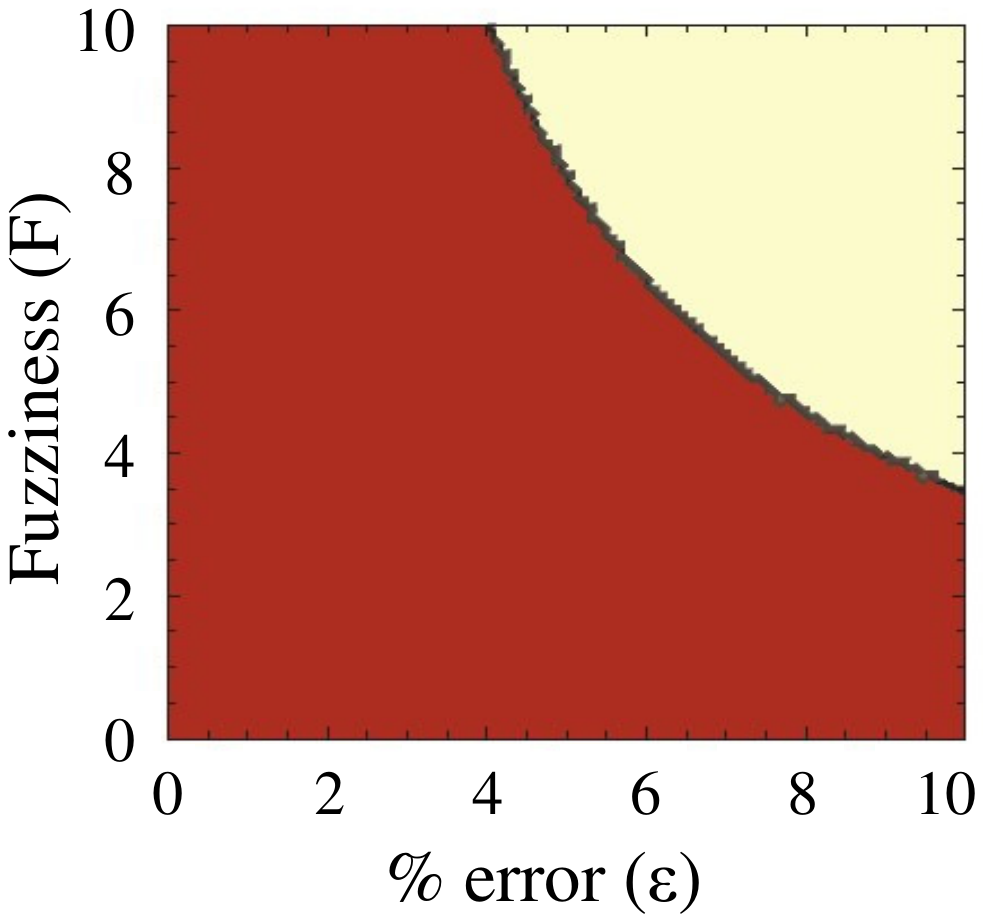}
  	\label{Expr:artificial:prob-cliff-0.01x0.02}
  }
  \subfloat[$\MI$=1\%, $\MJ$=4\%.]
  {
    \hspace{\HSPACE}
  	\includegraphics[clip=true,trim=100 380 220 140,height=\HEIGHT, width=\WIDTH]{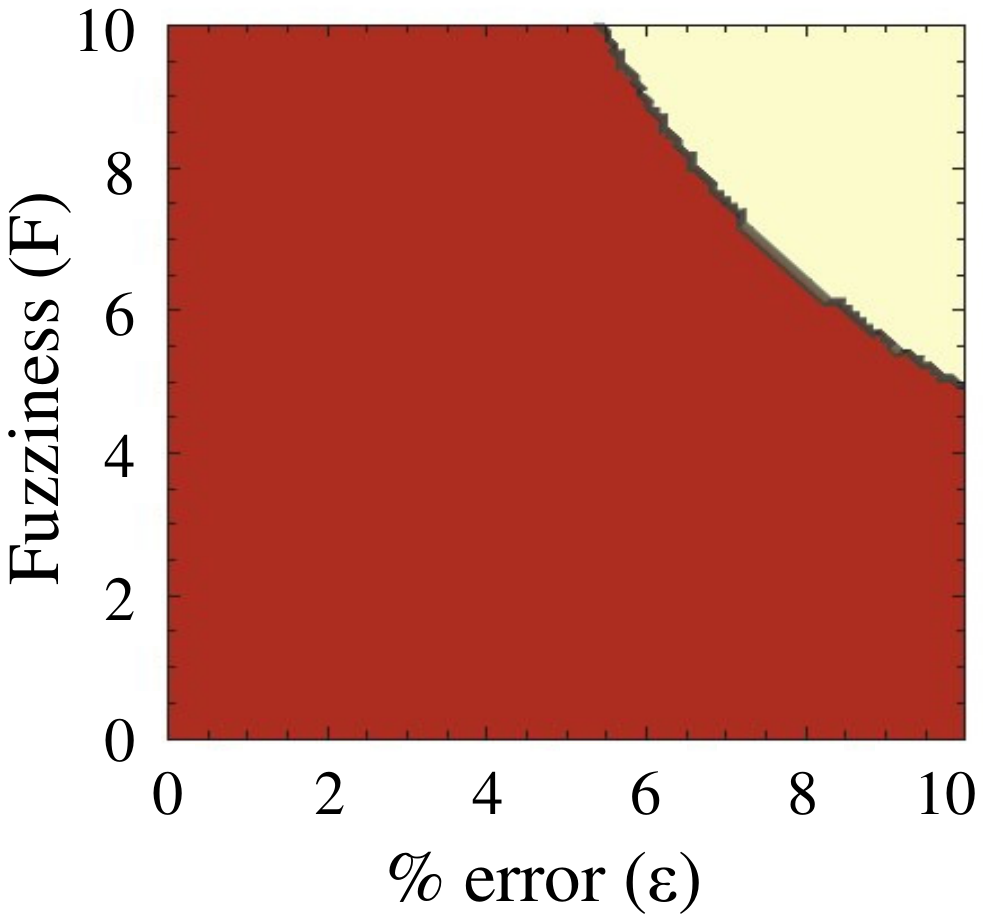}
  	\label{Expr:artificial:prob-cliff-0.01x0.04}
  }
  \subfloat[$\MI$=2\%, $\MJ$=1\%.]
  {
    \hspace{\HSPACE}
  	\includegraphics[clip=true,trim=100 380 220 140,height=\HEIGHT, width=\WIDTH]{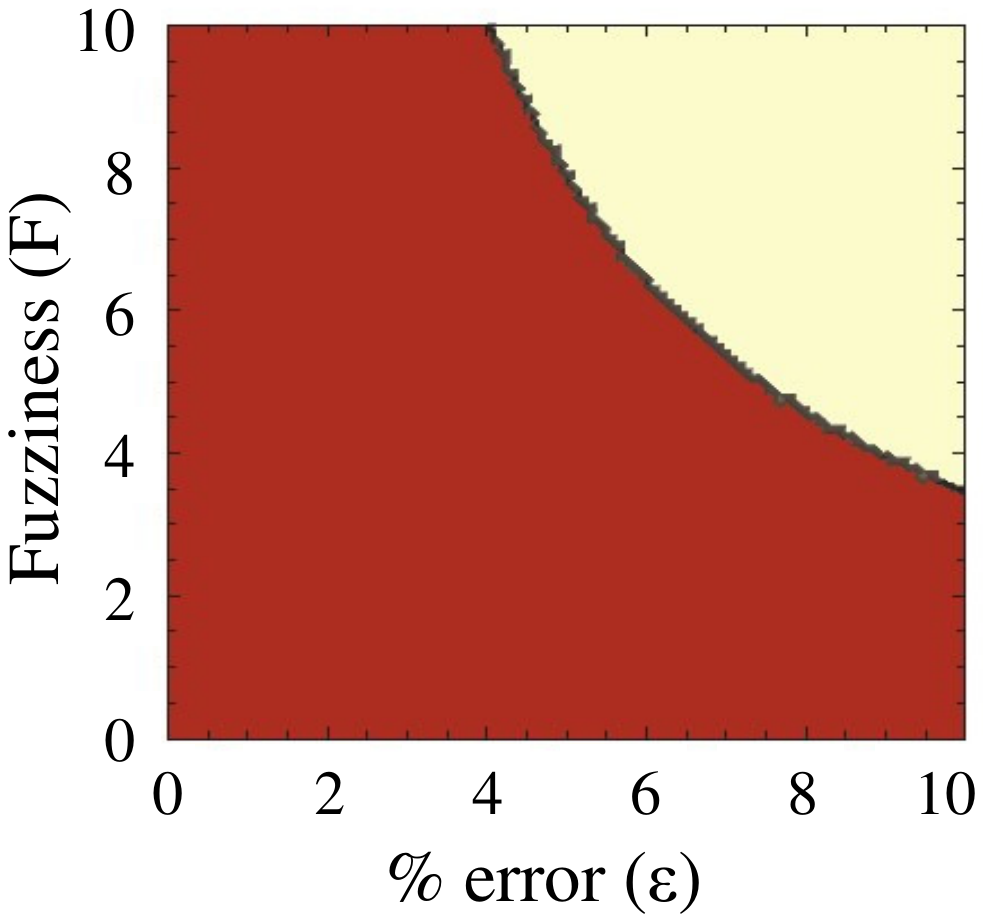}
  	\label{Expr:artificial:prob-cliff-0.02x0.01}
  }
  \subfloat[$\MI$=2\%, $\MJ$=2\%.]
  {
    \hspace{\HSPACE}
  	\includegraphics[clip=true,trim=100 380 220 140,height=\HEIGHT, width=\WIDTH]{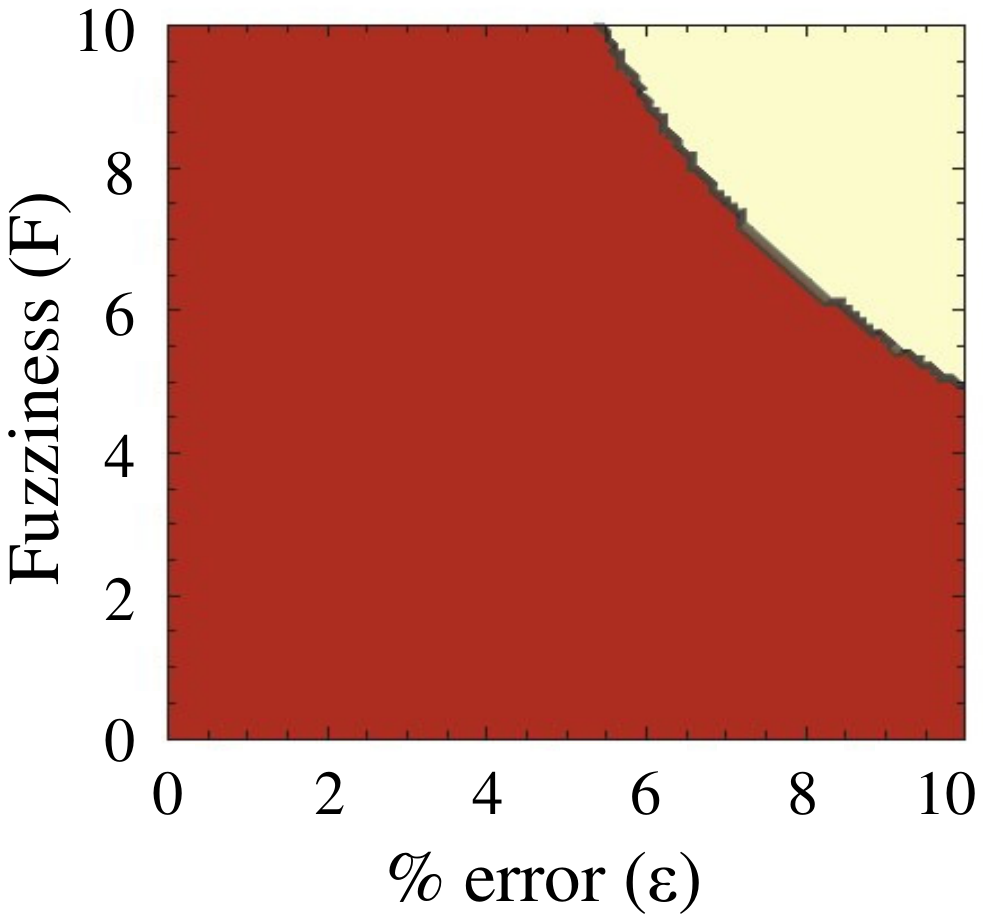}
  	\label{Expr:artificial:prob-cliff-0.02x0.02}
  }
  \subfloat[$\MI$=2\%, $\MJ$=4\%.]
  {
    \hspace{\HSPACE}
  	\includegraphics[clip=true,trim=100 380 220 140,height=\HEIGHT, width=\WIDTH]{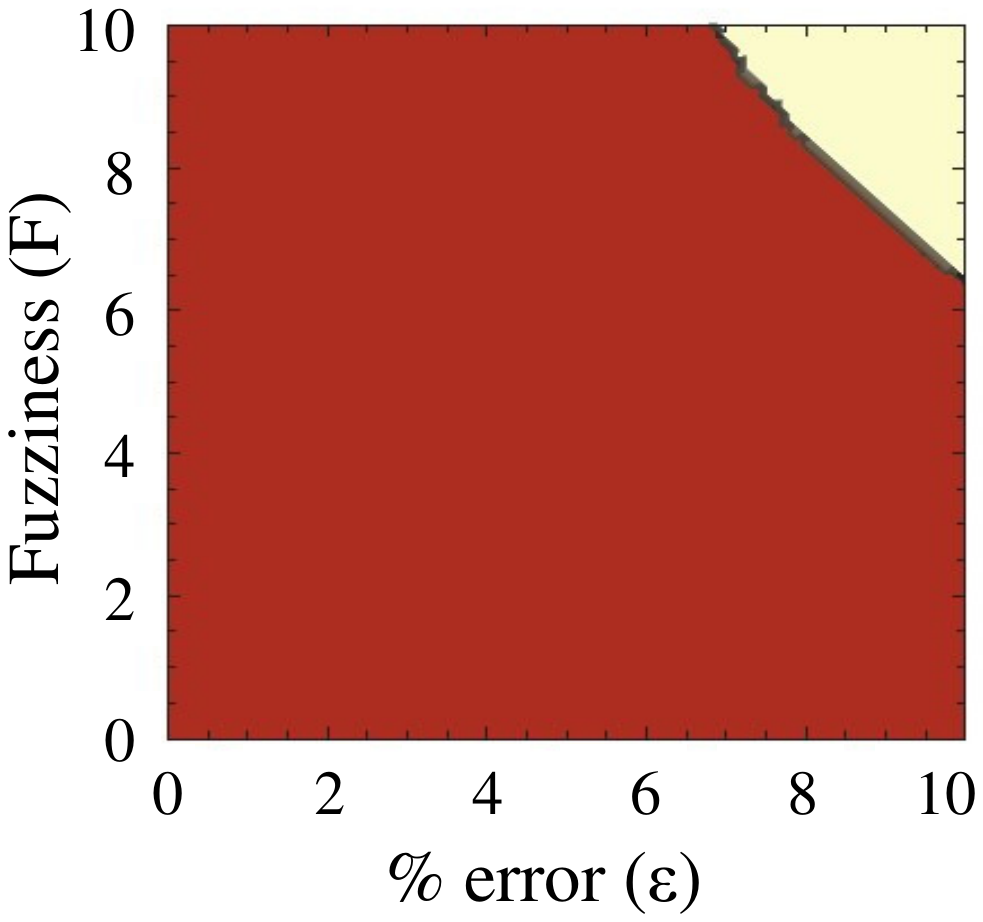}
  	\label{Expr:artificial:prob-cliff-0.02x0.04}
  }
  \subfloat[$\MI$=4\%, $\MJ$=1\%.]
  {
    \hspace{\HSPACE}
  	\includegraphics[clip=true,trim=100 380 220 140,height=\HEIGHT, width=\WIDTH]{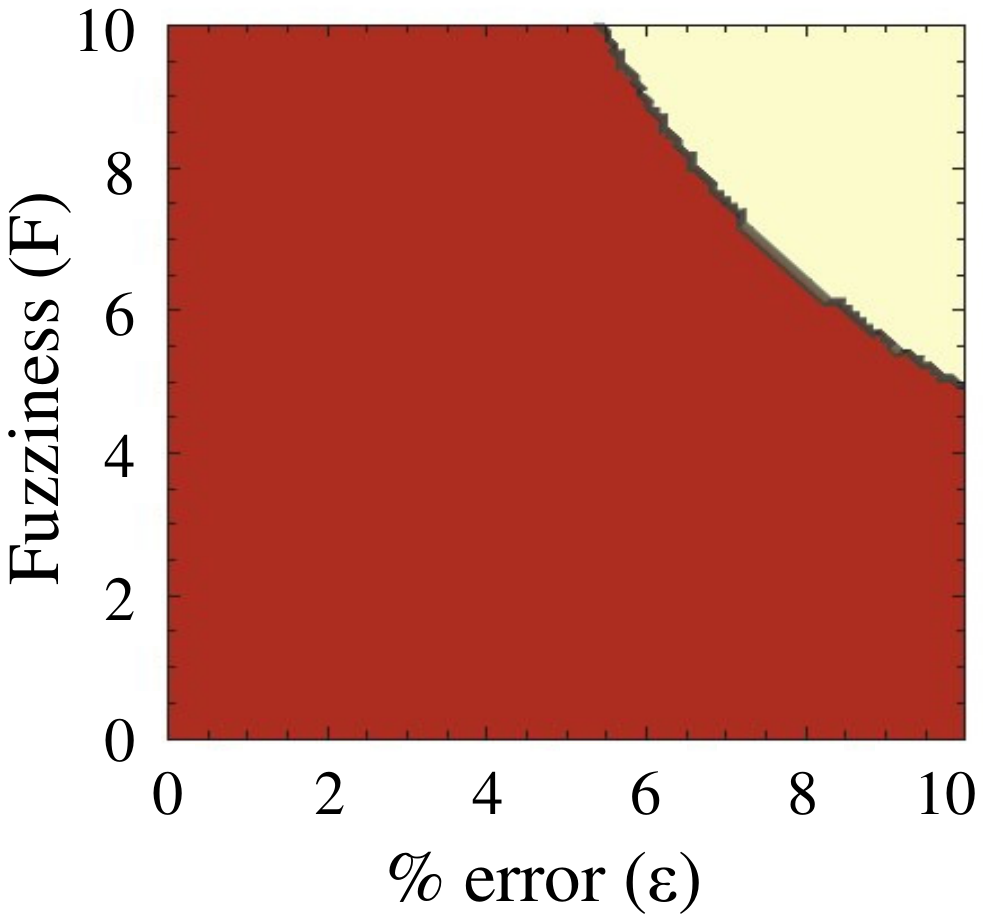}
  	\label{Expr:artificial:prob-cliff-0.04x0.01}
  }
  \subfloat[$\MI$=4\%, $\MJ$=2\%.]
  {
    \hspace{\HSPACE}
  	\includegraphics[clip=true,trim=100 380 220 140,height=\HEIGHT, width=\WIDTH]{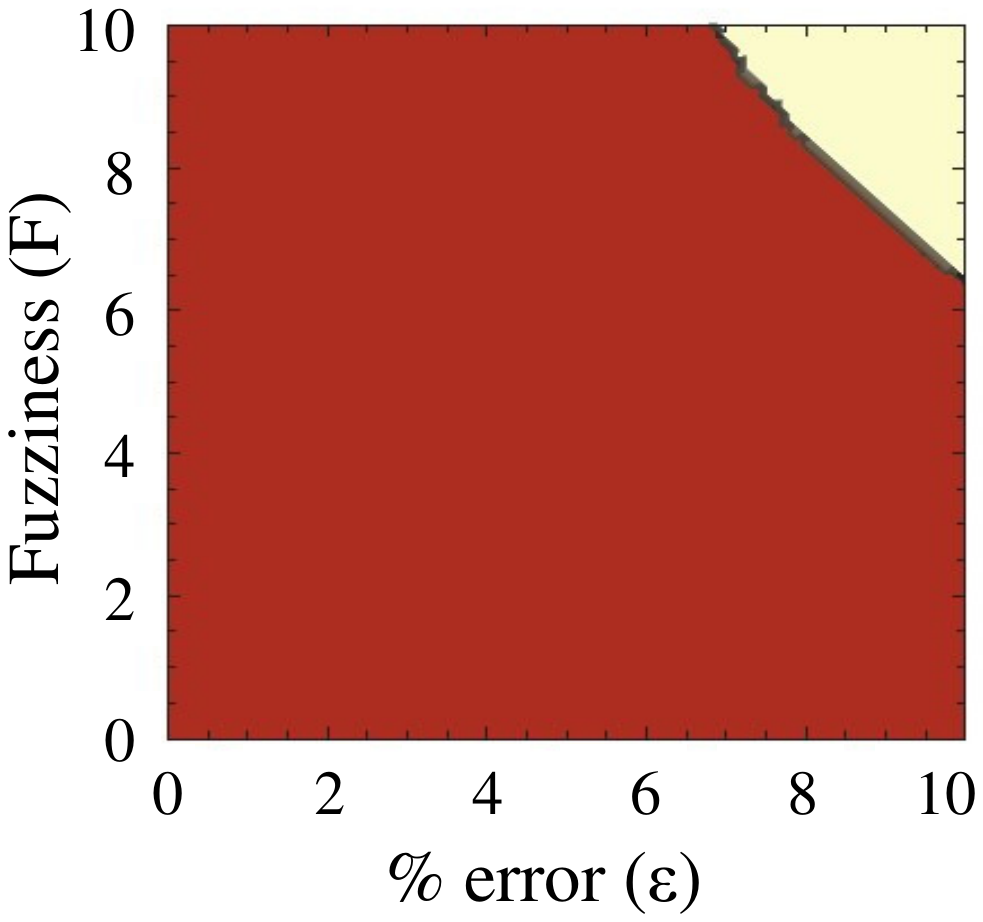}
  	\label{Expr:artificial:prob-cliff-0.04x0.02}
  }
  \subfloat[$\MI$=4\%, $\MJ$=4\%.]
  {
    \hspace{\HSPACE}
  	\includegraphics[clip=true,trim=100 380 220 140,height=\HEIGHT, width=\WIDTH]{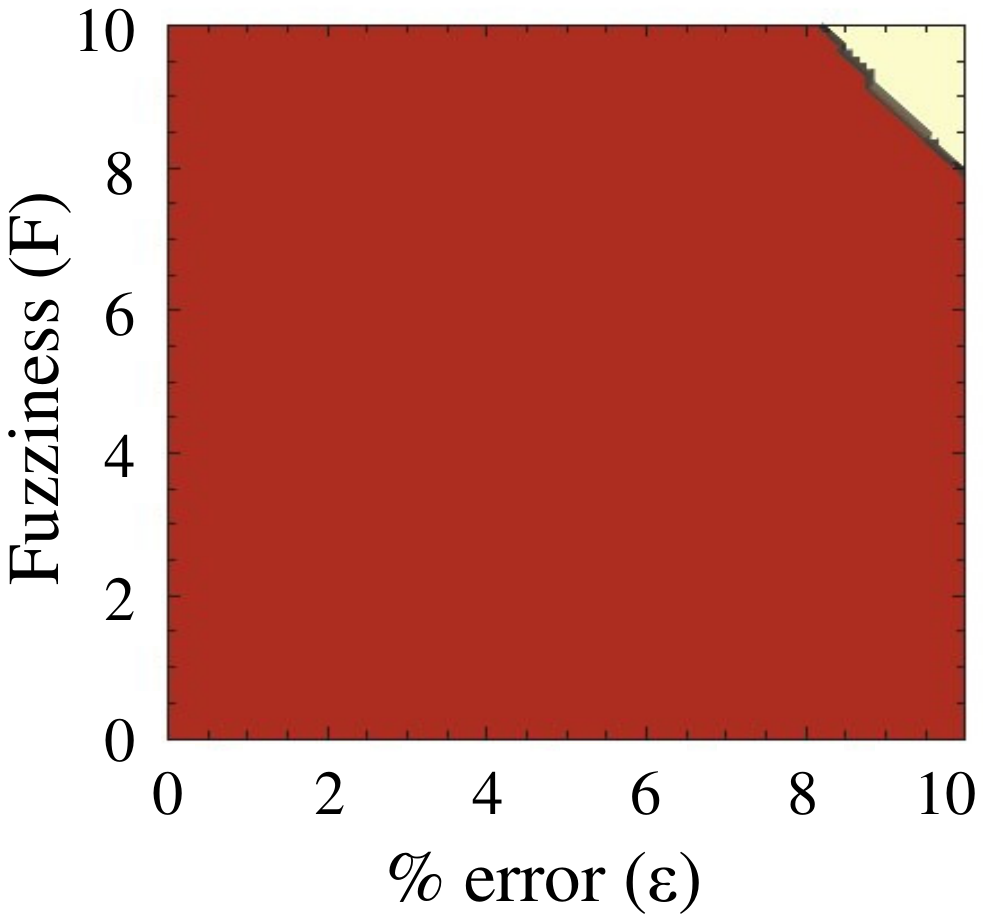}
  	\label{Expr:artificial:prob-cliff-0.04x0.04}
  }

  \subfloat[$\MI$=7\%, $\MJ$=7\%.]
  {
  	\includegraphics[clip=true,trim=100 240 220 280,height=\HEIGHT, width=\WIDTH]{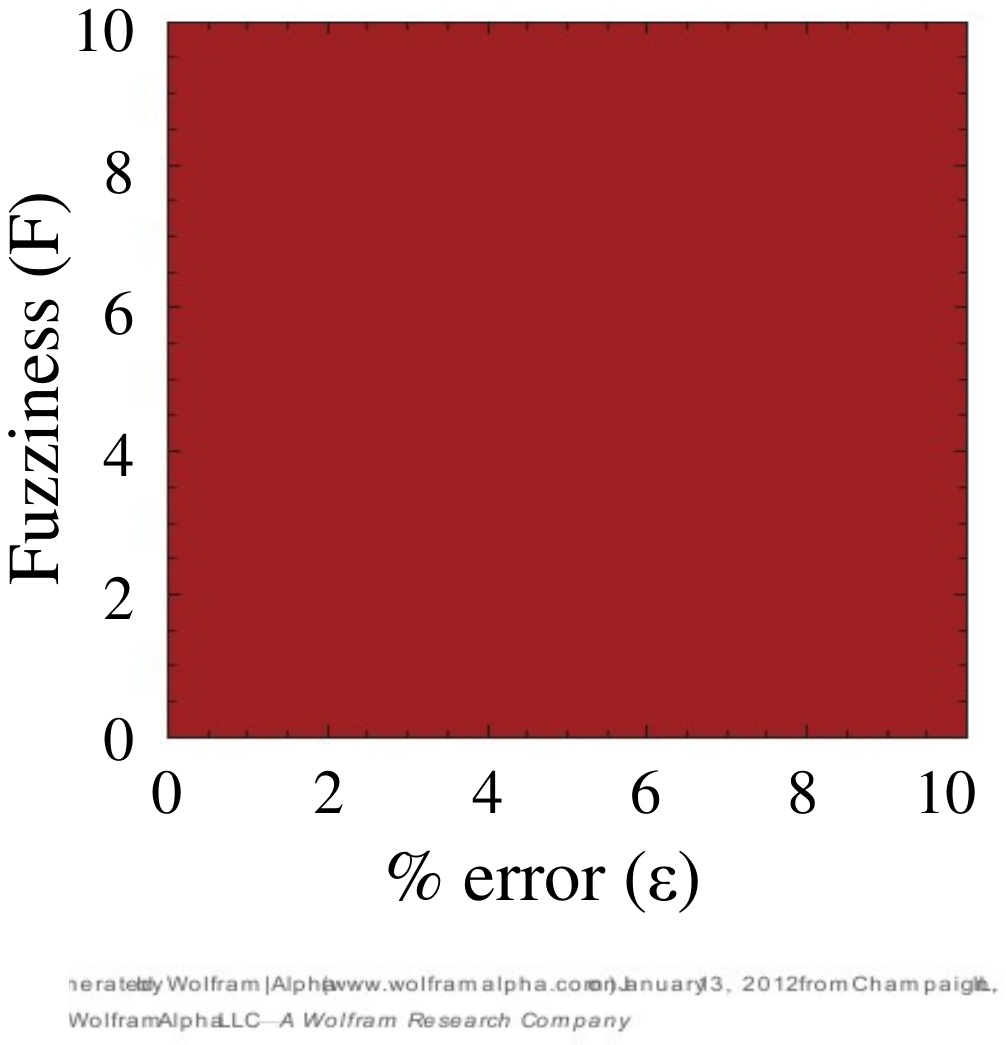}
  	\label{Expr:artificial:prob-cliff-0.07x0.07}
  }

  \caption{
  	Probability of an artifact \FLC\ for various $\MI$ and $\MJ$, in a matrix of $[1000 \times 1000]$.
  	The red colored sections (bottom left area) represent a probability of 0.0.
  	The light yellow colored sections (upper right area) represent a probability of 1.0.
  	An interesting fact is the existence of a ``{\CLIFF}'', where probabilities rapidly climb from 0.0 to 1.0,
    and its withdraw as $\MI$ and $\MJ$ increase.
  	From the figures, we see that in this case a {\FLC} of size greater than 0.5\% of the matrix size
  	has an insignificant probability to randomly appear.	  	
  }
  \label{Expr:artificial:prob-cliff-change-I-J}
\end{figure*}
\begin{figure*}
  \centering

  \subfloat[$\SW$=1\%, $\MF$=1.]
  {
    \hspace{\HSPACE}
  	\includegraphics[clip=true,trim=100 380 220 140,height=\HEIGHT, width=\WIDTH]{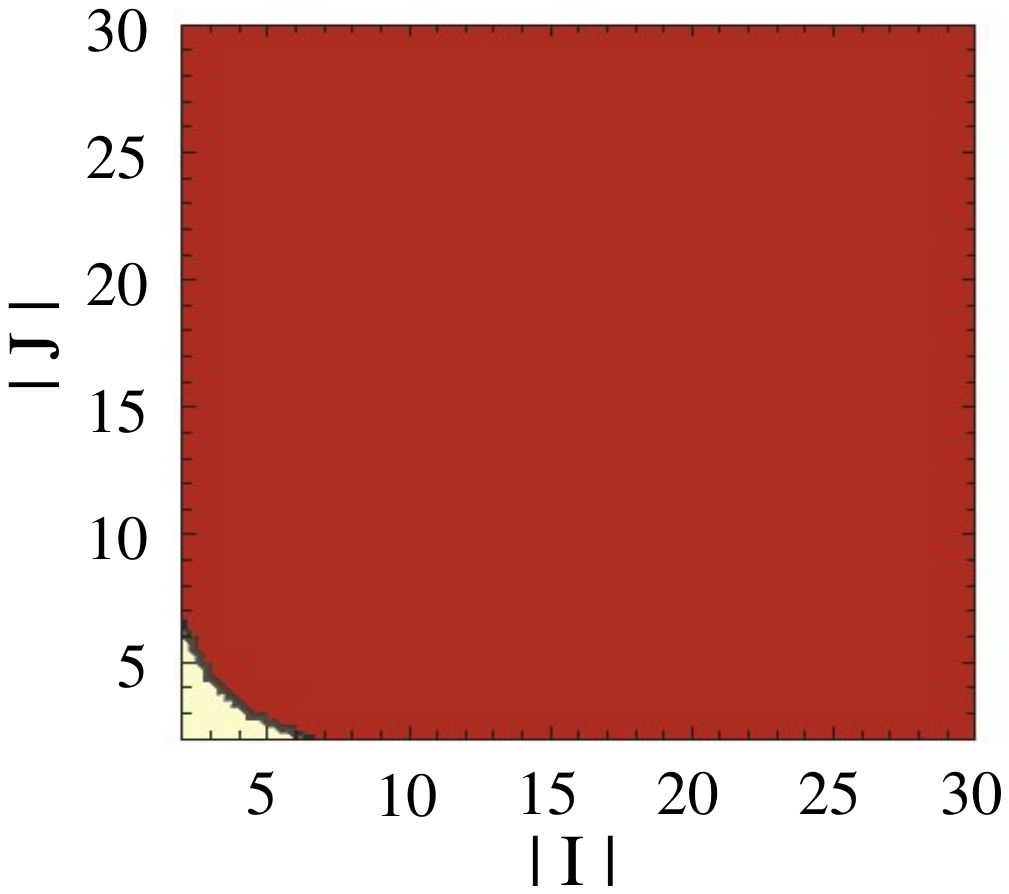}
  	\label{Expr:artificial:expr-f1-e1}
  }
  \subfloat[$\SW$=1\%, $\MF$=2.]
  {
    \hspace{\HSPACE}
  	\includegraphics[clip=true,trim=100 380 220 140,height=\HEIGHT, width=\WIDTH]{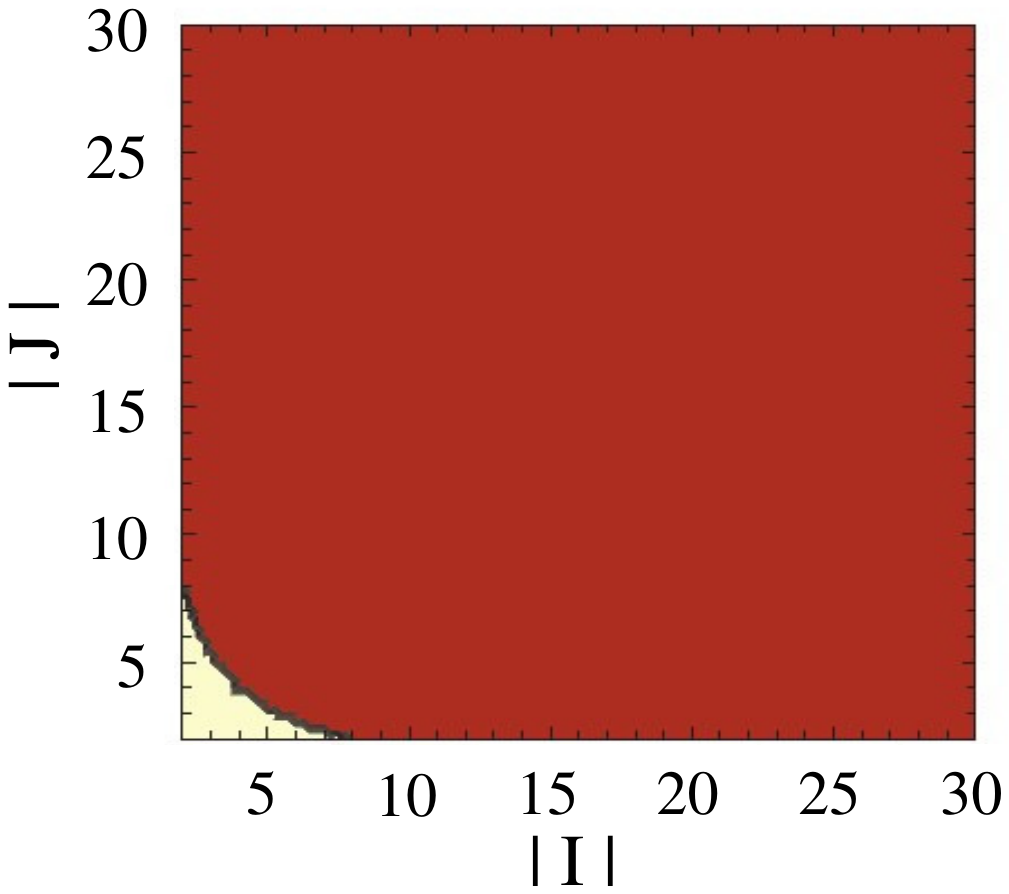}
  	\label{Expr:artificial:expr-f2-e1}
  }
  \subfloat[$\SW$=1\%, $\MF$=4.]
  {
    \hspace{\HSPACE}
  	\includegraphics[clip=true,trim=100 380 220 140,height=\HEIGHT, width=\WIDTH]{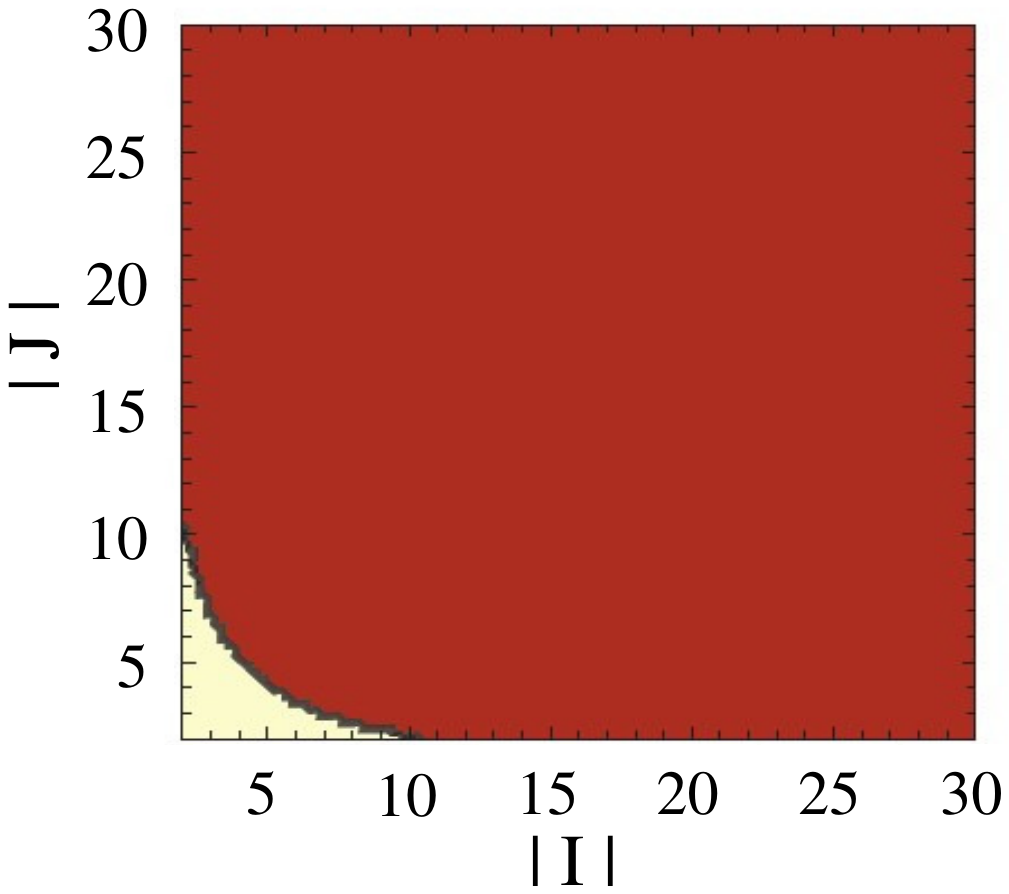}
  	\label{Expr:artificial:expr-f4-e1}
  }

  \subfloat[$\SW$=5\%, $\MF$=1.]
  {
    \hspace{\HSPACE}
  	\includegraphics[clip=true,trim=100 380 220 140,height=\HEIGHT, width=\WIDTH]{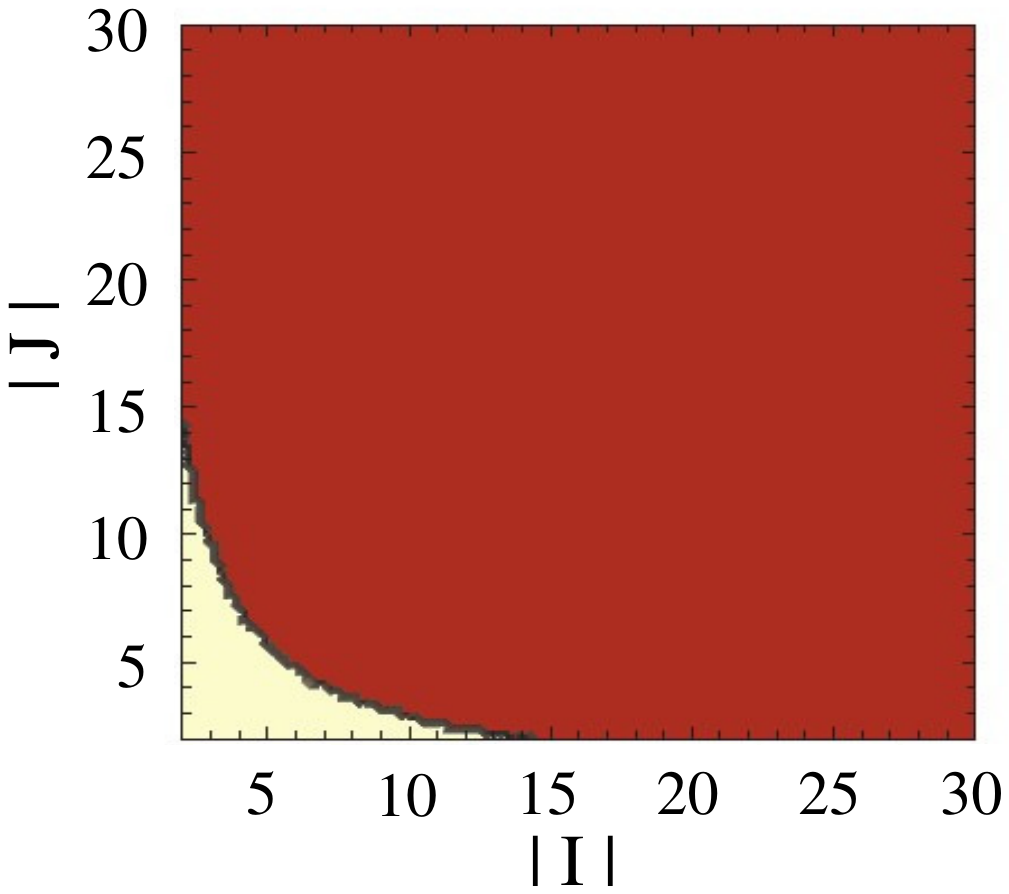}
  	\label{Expr:artificial:expr-f1-e5}
  }
  \subfloat[$\SW$=5\%, $\MF$=2.]
  {
    \hspace{\HSPACE}
  	\includegraphics[clip=true,trim=100 380 220 140,height=\HEIGHT, width=\WIDTH]{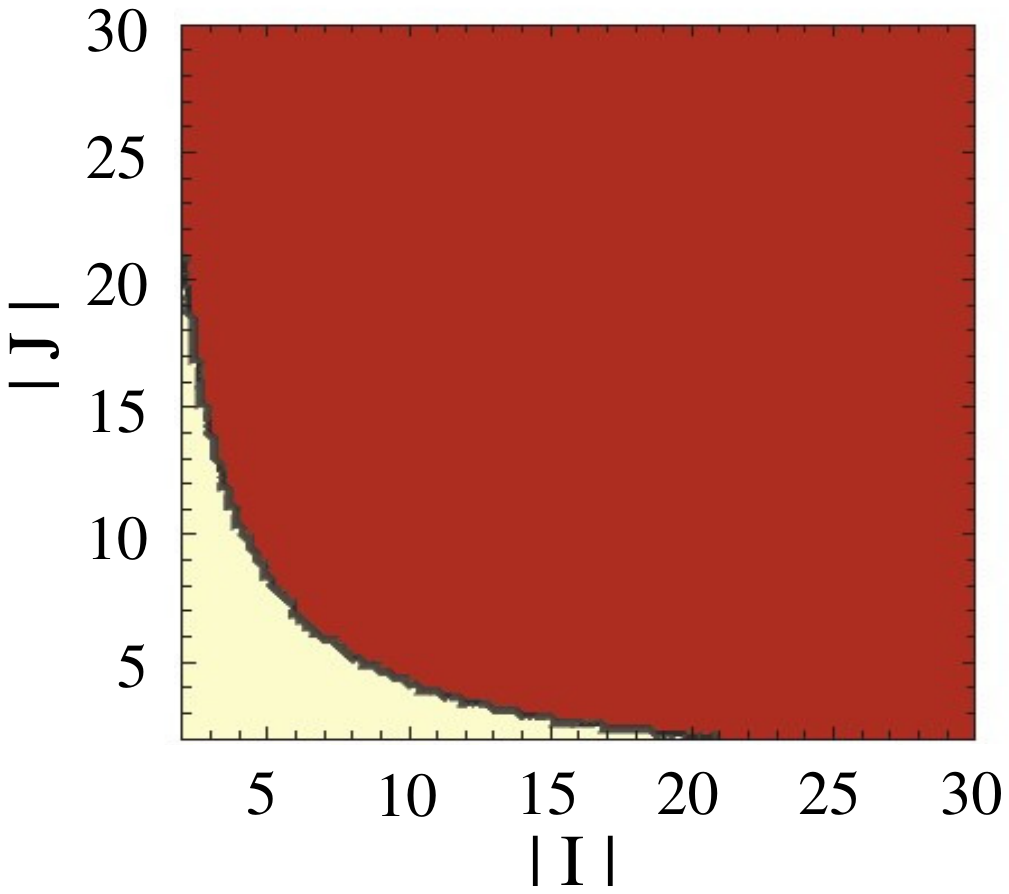}
  	\label{Expr:artificial:expr-f2-e5}
  }
  \subfloat[$\SW$=5\%, $\MF$=4.]
  {
    \hspace{\HSPACE}
  	\includegraphics[clip=true,trim=100 380 220 140,height=\HEIGHT, width=\WIDTH]{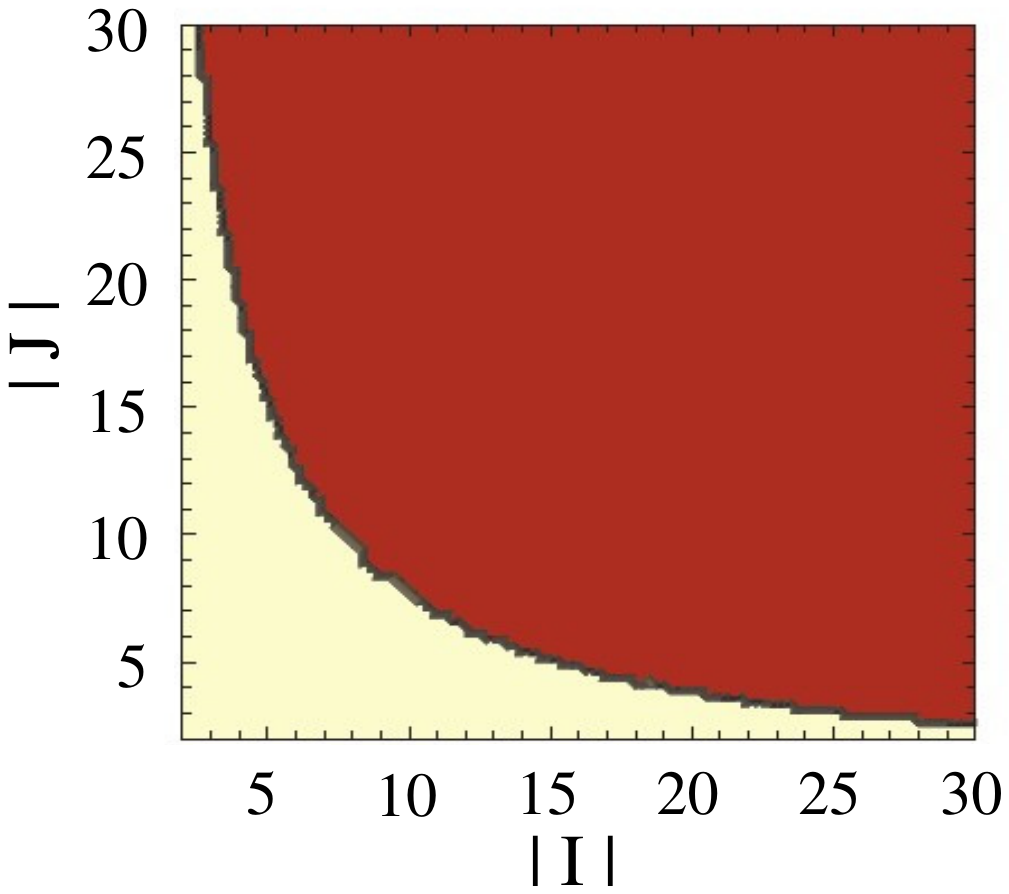}
  	\label{Expr:artificial:expr-f4-e5}
  }

  \caption{
  	Probability of an artifact \FLC\ for various $\SW$ and $\MF$, in a matrix of $[1000 \times 1000]$.
  	The red colored sections (upper right area) represent a probability of 0.0.
  	The light yellow colored sections (bottom left area) represent a probability of 1.0.
  	The figures show the existence of the ``{\CLIFF}'', where probabilities fall from 1.0 to 0.0,
  	and its withdrawal as $\SW$ and $\MF$ increase.
  }
  \label{Expr:artificial:prob-cliff-change-f-e}
\end{figure*}
The main conclusion based on the figures is that
{\FLC}s of small dimensions (i.e., clusters smaller than 0.5\% of the matrix size)
already have an insignificant probability of being caused by a random formation and presenting artifact patterns.
Thus, {\FLC}s representing a regulatory mechanism, which are naturally large in dimensions, have an insignificant probability of being noise.
Consequently, ordinary mining using practical dimensions has an insignificant probability of mining artifacts.

\subsubsection*{{\ExprArtificialDiscrSetSize}: Discriminating Set Size}
\label{Expr:artificial:discr-set-size}

\def \HEIGHT    {0.255\textheight} 

\nin
\THEOREM~\ref{proofs:theorem:d3} provides us with the following bound for the discriminating set:
$|\DS| \geq \log(4mn)/\log(1/3\MJT(2\MF+1))$,
where $\MJT$ specifies the ratio between the number of columns in an optimal {\FLC} and the number of matrix columns.
The fact that the bound depends on $\MJT$ is undesirable,
since this parameter is not part of the problem input and the user has no knowledge about it.
To get a sense of the magnitude of feasible values for $|\DS|$, we conducted the following experiment.
We first created random $[m \times n]$ matrices, with sizes ranging from $10^2$ to $10^5$
and values uniformly distributed in the range of $100$ to $1100$.
We set the dimensions of the cluster size to $\MI,\MJ \in \{0.3, 0.5, 0.8\}$.
Then, we generated a random \FLC\ of error $\SW$=1\%, fuzziness $\MF$=1, size $[\MI m \times \MJ n]$ and put it at a random location in the matrix overriding the existing values.
Then, a size - $k$ subset of the \FLC\ columns was chosen at random $100$ times, to check whether it is a discriminating  set according to \DEFINITION~\ref{proofs::discriminating}.
This process was repeated for $k$=1,$\ldots$ until reaching a value for $k$ which the subset successfully  discriminated in all of the $100$ trials.
We repeated the above procedure for various sizes of $\SFZ$ in order to examine the effectiveness of $\SFZ$ in reducing the size of $\DS$ (see \SUBSECTION~\ref{algorithm:impl-note-SFZ}).

\FIGURE~\ref{Expr:artificial:discr-set-size-vs-sfz} presents the results of extensive experimentation relating to the trade-off between the size of the discriminating column set $|\DS|$ as a function of $\log_2(4mn)$ for various $|\SFZ|$.
\begin{figure}
	\centering
	\includegraphics[clip=true,trim=70 96 182 92,height=\HEIGHT]{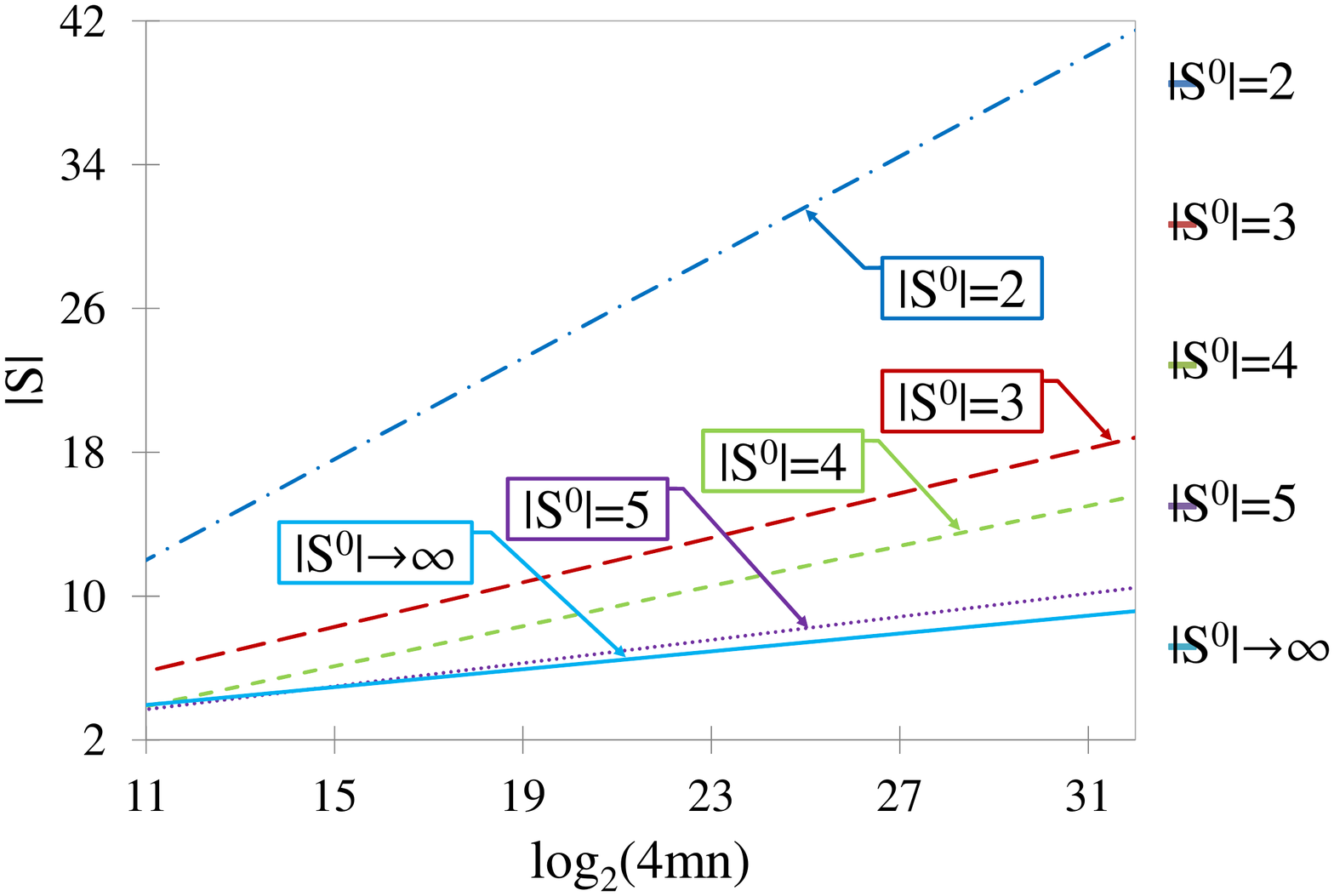}
	\caption{
		The size of the discriminating column set $|\DS|$ as a function of $\log_2(4mn)$ for various $|\SFZ|$.
		The lower line of $|\SFZ|$$\rightarrow$$\infty$ represents the case of mining lagged clusters
		with \emph{no} fuzziness ($\MF$=0).
	}
	\label{Expr:artificial:discr-set-size-vs-sfz}
\end{figure}
The following important observations can be made from \FIGURE~\ref{Expr:artificial:discr-set-size-vs-sfz}:
(1) for each $|\SFZ|$ value used, we obtain the linear relationship derived from \THEOREM~\ref{proofs:theorem:d3};
(2) the decrease in size of the discriminating set $\DS$ is proportional to the size of $\SFZ$, i.e., the larger $|\SFZ|$ used, the lower $|\DS|$ needed.
Setting $|\SFZ|$=3 seems to be the most effective in this case, as it offers a good balance between run-time reduction and the resultant limitation of the model (i.e., assuming $|\SFZ|$ non-fuzzy columns);
and (3) we obtain an easy-to-use, $\MJT$ free, formula for setting $|\DS|$. For example, using $|\SFZ|$=3 we obtain:
$|\DS| = 0.6197 \log_2(4mn) -1.0063 \approx 0.6 \log_2(4mn) -1$.

\subsubsection*{{\ExprArtificialDiscrProbabilities}: Discriminating Probability vs. Discriminating Set Size}
\label{Expr:artificial:discr-probabilities}

\nin
The previous experiment considered a set of size $|\DS|$ to be discriminating if it successfully discriminated in all $\LOOPS$ trials ($\LOOPS$=100).
In this experiment, we wish to explore the relationship between $|\DS|$ and its discriminating probability (i.e., in how many of the $\LOOPS$ trials did the set actually discriminate).
We do so by recording different sizes of $|\DS|$ and their ability to discriminate.
The experiment was conducted using the same methodology as {{\EXPERIMENT} II}, using $|\SFZ|$=3 as suggested.

\FIGURE~\ref{Expr:artificial:disct-size-hit} presents
the discriminating probability as a function of the discriminating set size $|\DS|$.
\begin{figure} [htb]
	\centering
	\includegraphics[clip=true,trim=66 66 64 72,height=\HEIGHT]{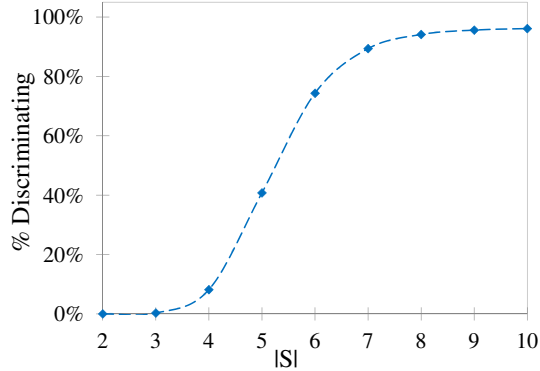}
	\caption{
		The discriminating probability as a function of the discriminating set size $|\DS|$.
		To facilitate reading, we only present the average results over $\MI,\MJ \in \{0.3, 0.5, 0.8\}$,
		as the results for the specific settings were of insignificant difference.
	}
	\label{Expr:artificial:disct-size-hit}
\end{figure}
The main finding from \FIGURE~\ref{Expr:artificial:disct-size-hit} is that even for small sizes of $|\DS|$,
a substantial discriminating probability is achieved (e.g., 89\% for $|\DS|$=7).
Since $|\DS|$ appears as an exponent in the estimated run-time, choosing a smaller $|\DS|$ will have a notable effect on the reduction of run-time, without having any major negative effects on the results.

\subsubsection*{{\ExprArtificialLowDiscrSizeMoreIterations}: Discriminating Set Size vs. Number of Iterations Needed}
\label{Expr:artificial:low-discr-size-more-iterations}

\nin
\THEOREM~\ref{proofs:theorem:rc} shows that the probability of the {\FLCA} algorithm to mine a \FLC\ is at least 0.5.
We refer to that probability as a ``hit rate''.
The hit rate depends on the discriminating probability $p$, of the discriminating set $\DS$,
and the number of iterations $\LOOPS$ being used: (hit rate) $= 1 -$ (miss rate) $= 1 -(1-p\MI\MJ^{|\DS|})^{\LOOPS}$.
Using \THEOREM~\ref{proofs:theorem:rc}, the number of iterations is $\LOOPS$=$2 \ln2 / \MI \MJ^{|\DS|}$,
resulting in a hit rate of: $1-0.25^p$, i.e.,
(hit rate) $= 1-(1-p\MI\MJ^{|\DS|})^{\LOOPS} = 1-(1-p\MI\MJ^{|\DS|})^\frac{2 \ln2}{\MI\MJ^{|\DS|}} = 1-(1-p\MI\MJ^{|\DS|})^\frac{p \cdot 2 \ln2}{p \cdot \MI\MJ^{|\DS|}} \geq 1-\frac{1}{e^{2p\ln2}} = 1-\frac{1}{2^{2p}} = 1-0.25^p$.\footnote{
\THEOREM~\ref{proofs:theorem:rc} uses \THEOREM~\ref{proofs:theorem:d3} discriminating sets of $p$=0.5 and thus results in a hit rate of 0.5.}

{\ExprArtificialDiscrProbabilities} implies
that using a discriminating set smaller than the one recommended by {\ExprArtificialDiscrSetSize} will not only exponentially \textit{decrease} the run-time, but will also ensure a reasonable discriminating probability.
However, a decrease in the discriminating set size results in a decrease in the hit rate.
Therefore, in order to improve the hit rate, an increase in the number of iterations is required.
In practice, by reducing the discriminating set size, it is possible to reduce the run-time by more than the increase needed to ensure the desired hit-rate.

We next present an analysis aimed at finding the best setting to achieve a minimum run-time.
As a base line, we use {\ExprArtificialDiscrSetSize} discriminating sets,
with a discriminating probability of $\approx$100\% and a size of 8.5 for matrices of size $[100 \times 100]$.
Such sets will yield a hit rate of $\approx$75\%.

Based on the average curve, shown in \FIGURE~\ref{Expr:artificial:disct-size-hit}, the discriminating probabilities of $|\DS|$=\{4, 5, 6, 7, 8, 9\} are $P_{|\DS|}$=\{8.2\%, 40.8\%, 74.3\%, 89.4\%, 94.1\%, 95.6\%\}, respectively.
In order to reach a hit rate of 75\%,
we need to compensate for the loss in the above discriminating probability by increasing the number of iterations.
Note that when the number of iterations $\LOOPS$ is multiplied by $C$=$1/p$, the resulting hit rate becomes:
	$1-(1-p\MI\MJ^{|\DS|})^{(\frac{1}{p} \cdot \LOOPS)}$ =
	$1-(1-p\MI\MJ^{|\DS|})^\frac{2\ln2}{p \MI\MJ^{|\DS|}}$ $\geq$
	$1-\frac{1}{e^{2\ln2}}$ =
	$1-\frac{1}{2^{2}}$ =
	$0.75$.
Therefore, by factoring the number of iterations
${\LOOPS}(|\DS|)$=$2 \ln{2} /(\MI \MJ^{|\DS|})$
by the inverse of the discriminating probability,
we obtain for $|\DS|$=\{4, 5, 6, 7, 8, 9\},
a ${C}_{|\DS|}$=\{1/8.2\%, 1/40.8\%, 1/74.3\%, 1/89.4\%, 1/94.1\%, 1/95.6\%\}=\{12.24, 2.45, 1.35, 1.12, 1.06, 1.05\},  respectively.
\FIGURE~\ref{Expr:artificial:discr-size-iterations} depicts
the ratio between ${C}(|\DS|)$$\times$${\LOOPS}(|\DS|)$ and the base line ${\LOOPS}(8.5)$ for various $|\DS| \in [4-9]$.
The ratio is independent of $\MI$ and equal to ${C}(|\DS|) \times {\MJ}^{(8.5-|\DS|)}$.
The lower the ratio, the better the performance as less iterations are required.
\begin{figure}
	\centering
	\includegraphics[angle=0, clip=true,trim=50 84 62 80,height=\HEIGHT]{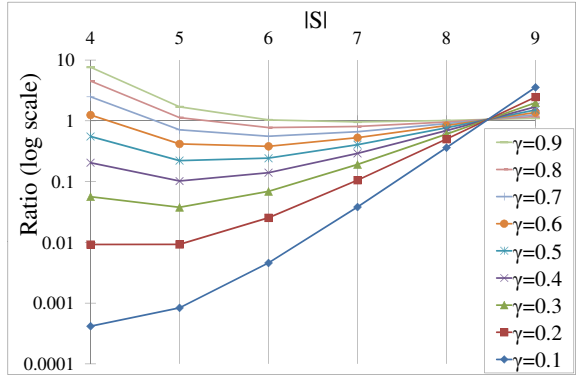}
	\caption{
		The ratio of iterations required for various $|\DS|$ to reach a 75\% hit rate base line.
		The lower the ratio, the better the performance (i.e., less iterations required).
	}
	\label{Expr:artificial:discr-size-iterations}
\end{figure}
The main finding of the experiment is the ability to use discriminating sets with lower discriminating probability, compensated by a larger number of iterations to achieve the same hit rate levels while having lower run-time.
An example from \FIGURE~\ref{Expr:artificial:discr-size-iterations} is the preferable use of $|\DS|$=5 for $\MJ \leq$ 0.5 and $|\DS|$=6 for $\MJ \geq$ 0.6 over setting $|\DS|$=8.

\subsubsection*{{\ExprArtificialRunTimeHitRate}: Run-time, Number of Iterations and Hit Rate}
\label{Expr:artificial::Run-Time-Hit-Rate}

\nin
\THEOREM~\ref{proofs:theorem:rc} states that for any
given discriminating set with a discriminating probability of 0.5 (see \THEOREM~\ref{proofs:theorem:d3}) and
for $\LOOPS \geq 2 \ln{2} /(\MI \MJ^{|\DS|})$ trials,
we are guaranteed to find a factor 2 optimal {\FLC} with a probability of at least $0.5$.
We report on the \textit{actual} performance of the algorithm in mining an optimal {\FLC} in terms of those three parameters.

The experiment was conducted
by creating a random matrix of size $[100 \times 100]$ and randomly placing random {\FLC}s of varying sizes (i.e., $\MI,\ \MJ \in \{0.3, 0.5, 0.8\}$) overriding the original values.
To obtain sets with a discriminating probability of 0.5,
we set $|\DS|$=5 with a discriminating probability of 40.8\% (see {\ExprArtificialDiscrProbabilities}).
While repeating the execution of the algorithm 10,000 times for each cluster size $\MI,\ \MJ \in$ \{0.3, 0.5, 0.8\}, we counted: 

(1) hit rate: how many times out of the 10,000 repetitions the algorithm managed to mine the planted cluster;
(2) iterations: how many iterations it took in practice to mine the optimal cluster;
and (3) run-time: how long (in minutes) it took to mine the optimal cluster.
The experiment was conducted using the platform:
Intel core i7 @ 2.00GHz CPU with 6GB RAM, Windows 7 64 bit. The algorithm was programmed in Java 7.0.
The results obtained are as follows.
\begin{itemize}

	\item \textbf{Hit Rate:}
	An actual average hit rate of 44.0\%, higher than the expected hit rate of 43.2\%
	for a discriminating set
	with a discriminating probability of 40.8\%.\footnote{
		Following {\ExprArtificialLowDiscrSizeMoreIterations} formula of: hit rate = $1-0.25^p$,
		a discriminating probability of $p$=40.8\%,
		results in an expected hit rate of 43.2\%.	
	}
		
	\item \textbf{Number of Iterations:}
	\begin{figure}
		\centering
		\includegraphics[clip=true,trim=62 90 80 100,height=\HEIGHT]{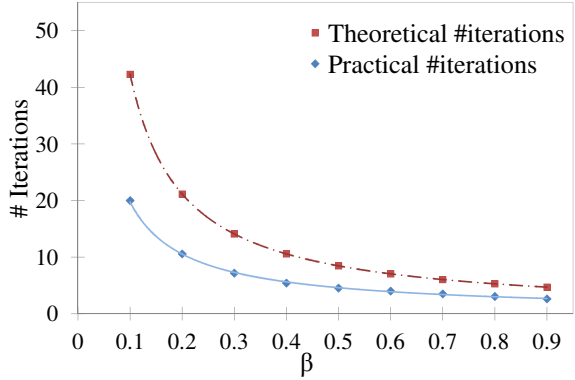}
		\caption{
			Number of iterations required to mine a \FLC\ vs. the theoretical bound.
			We present here only the results of $\MJ$=0.8,
			as the results for the other $\MJ$ values were insignificantly different.
			Since both $\MJ$ and $|\DS|$ were held fixed at $0.8$ and $5$, respectively,
			both theoretical and practical situations present a behavior of $\LOOPS$=${\cal O}(1/\MI)$.
		}
		\label{Expr:artificial::number-of-iter}
	\end{figure}	
	\FIGURE~\ref{Expr:artificial::number-of-iter} presents the actual number of iterations needed to
	mine the optimal {\FLC}
	in relation to the theoretical boundary.
	On average, the actual number of iterations needed is 49\% of the specified theoretical bound.
	
	\item \textbf{Run-time:}
		The run-time boundary, as specified by \SUBSECTION~\ref{subsection:algorithm:Run-Time} is:
		$t$ = ${\cal O}((m n)^2  / \MJ^{|\DS|})$.
    Fitting the actual run-time to an equation of type:
    $t$=$c /(\MI^x \MJ^y)$ ($t$ in ms), where $c=(m n)^2$,
    we obtain: $x=0.83$, $y=4.49$ and $c=107.2$.
    As expected, the power of $\MI$ is close to 1 and the power of $\MJ$ is close to 5 (we set $|\DS|$=5).
    These results are better than the theoretical bound due to $\MI, \MJ \leq 1.0$.

\end{itemize}

To summarize, on our test set the {\FLCA} algorithm manages to mine
{\FLC}s with a hit rate higher than the expected hit rate,
less than 50\% of the needed theoretical number of iterations,
and does so within a feasible run-time.

\subsubsection*{{\ExprArtificialErrFuzzEffect}: The Effect of Error and Fuzziness}
\label{Expr:artificial::Err-Fuzz-Effect}

\def \MyRNIA   {$1\!-\!RNIA$}

\nin
The main objective of the previous experiments was to
demonstrate properties of the {\FLC}ing model
while
focusing on the correctness of the theoretical bounds of \ALGORITHM~\ref{algorithm:FLC}.
In this experiment, we wish to examine the extent of changes in the mining results as a function of the error and fuzziness used.
To do so, we planted random {\FLC}s of specific error and fuzziness and set the miner's parameters of error and fuzziness to various values.

To measure how well the miner performed in each setting, we used the complement of the RNIA 
score~\cite{patrikainen2006comparing}, defined as follows.
Let $C_1$ and $C_2$ be {\FLC}s.
$RNIA(C_1, C_2) = (|U| - |I|)/|U|$,
where $U$ and $I$ are the matrix elements in the union and intersection of $C_1$ and $C_2$, respectively.
Hence, \MyRNIA$(C_1, C_2)$=$|I|/|U|$, achieves a score of $1$ when $C_1$ and $C_2$ are equal, and a score of $0$ when completely disjoint.

\def \HEIGHT    {0.2\textheight} 
\def \WIDTH	    {0.48\textwidth} 
\def \HSPACE    {-0pt}
\def \PLANTED   {Planted cluster: }

\begin{figure*}
  \centering

  \subfloat[\PLANTED $\SW$=0.1\%, $\MF$=4.]
  {
    \hspace{\HSPACE}
  	\includegraphics[clip=true,trim=64 64 61 63,height=\HEIGHT, width=\WIDTH]{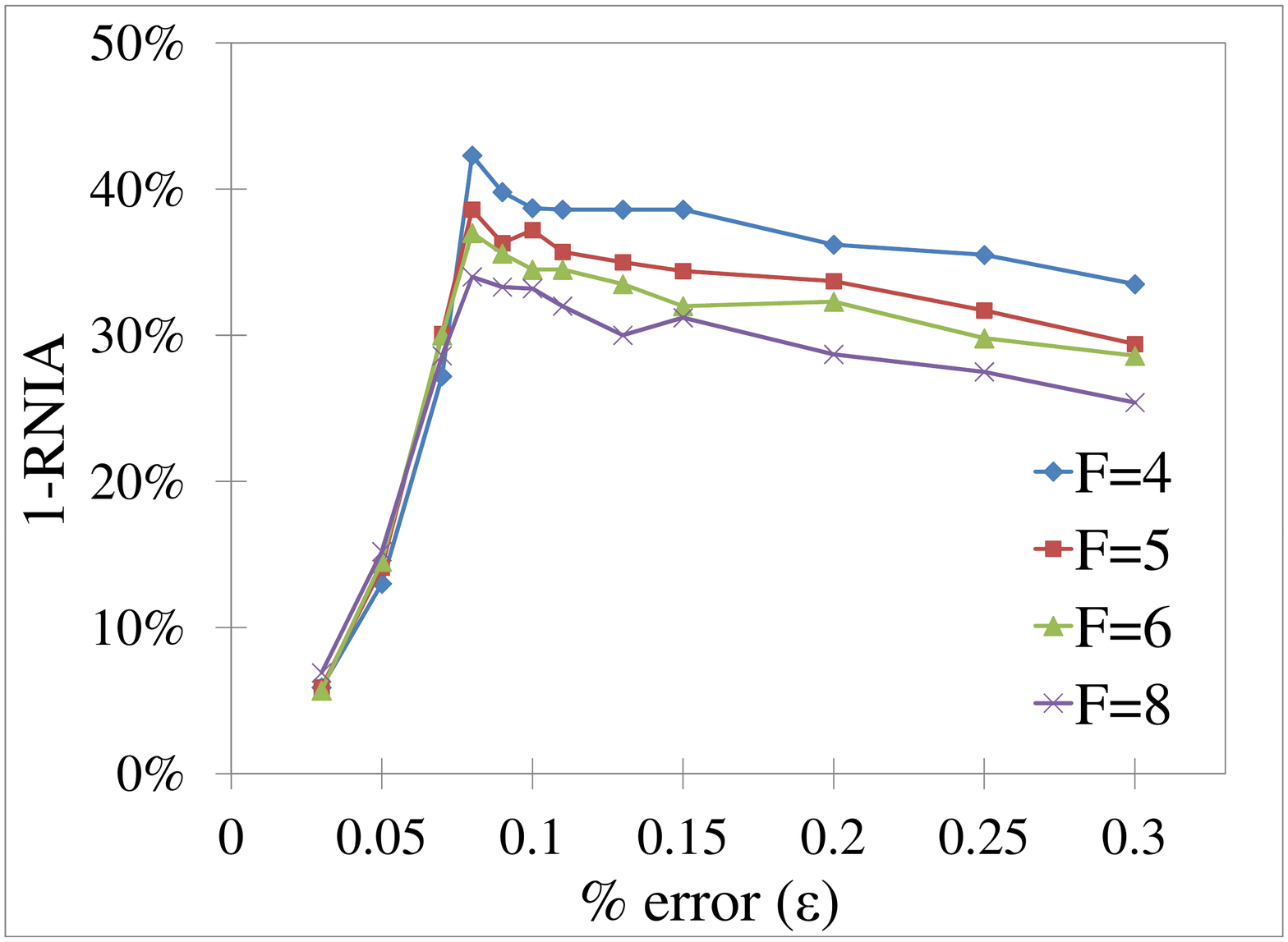}
  	\label{Expr:artificial:Expr816-Err-01-Fuz-4}
  }
  \subfloat[\PLANTED $\SW$=0.01\%, $\MF$=4.]
  {
    \hspace{\HSPACE}
  	\includegraphics[clip=true,trim=61 64 74 63,height=\HEIGHT, width=\WIDTH]{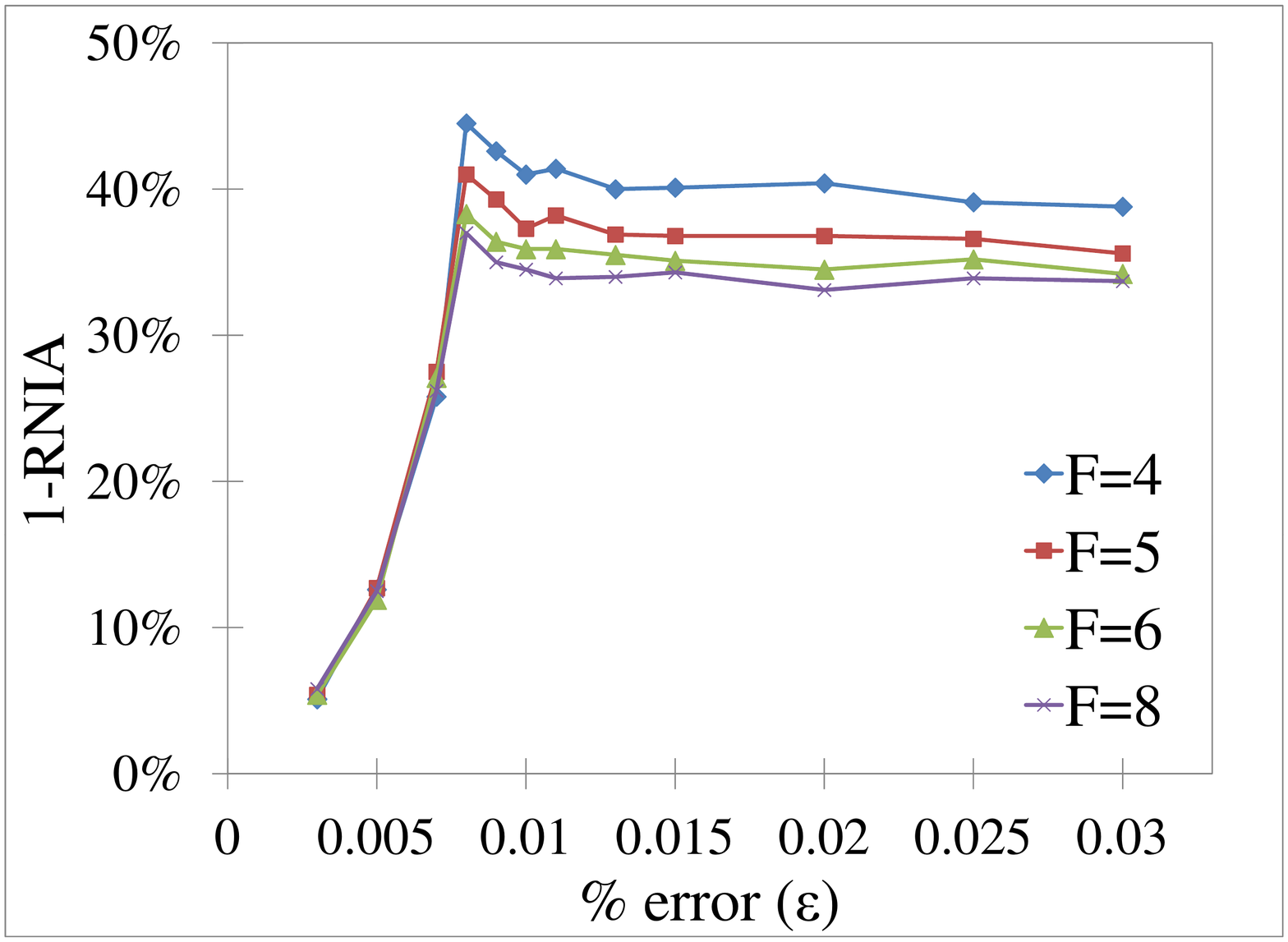}
  	\label{Expr:artificial:Expr816-Err-001-Fuz-4}
  }

  \subfloat[\PLANTED $\SW$=0.1\%, $\MF$=2.]
  {
    \hspace{\HSPACE}
  	\includegraphics[clip=true,trim=64 64 61 62,height=\HEIGHT, width=\WIDTH]{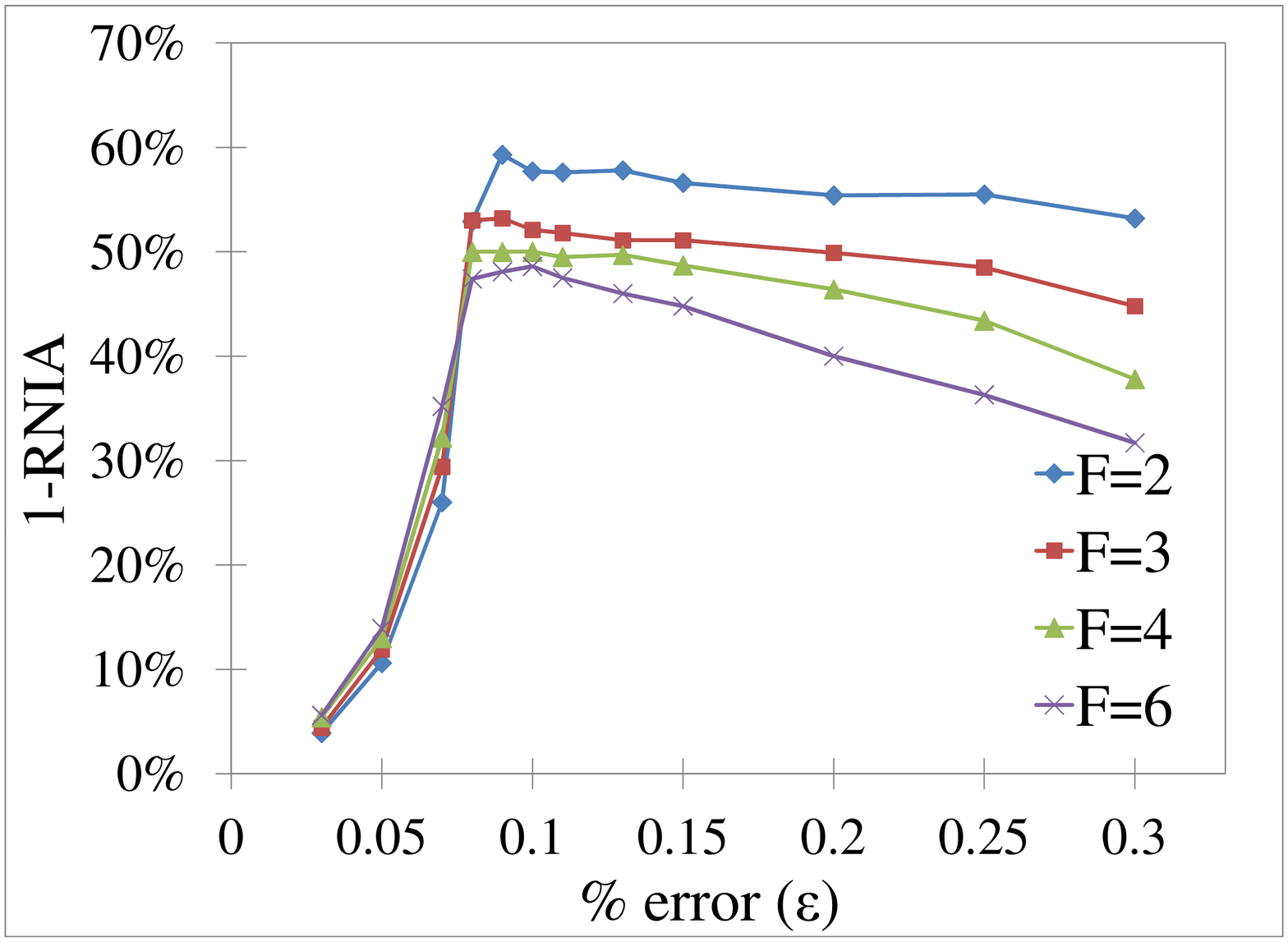}
  	\label{Expr:artificial:Expr816-Err-01-Fuz-2}
  }
  \subfloat[\PLANTED $\SW$=0.01\%, $\MF$=2.]
  {
    \hspace{\HSPACE}
  	\includegraphics[clip=true,trim=61 64 74 62,height=\HEIGHT, width=\WIDTH]{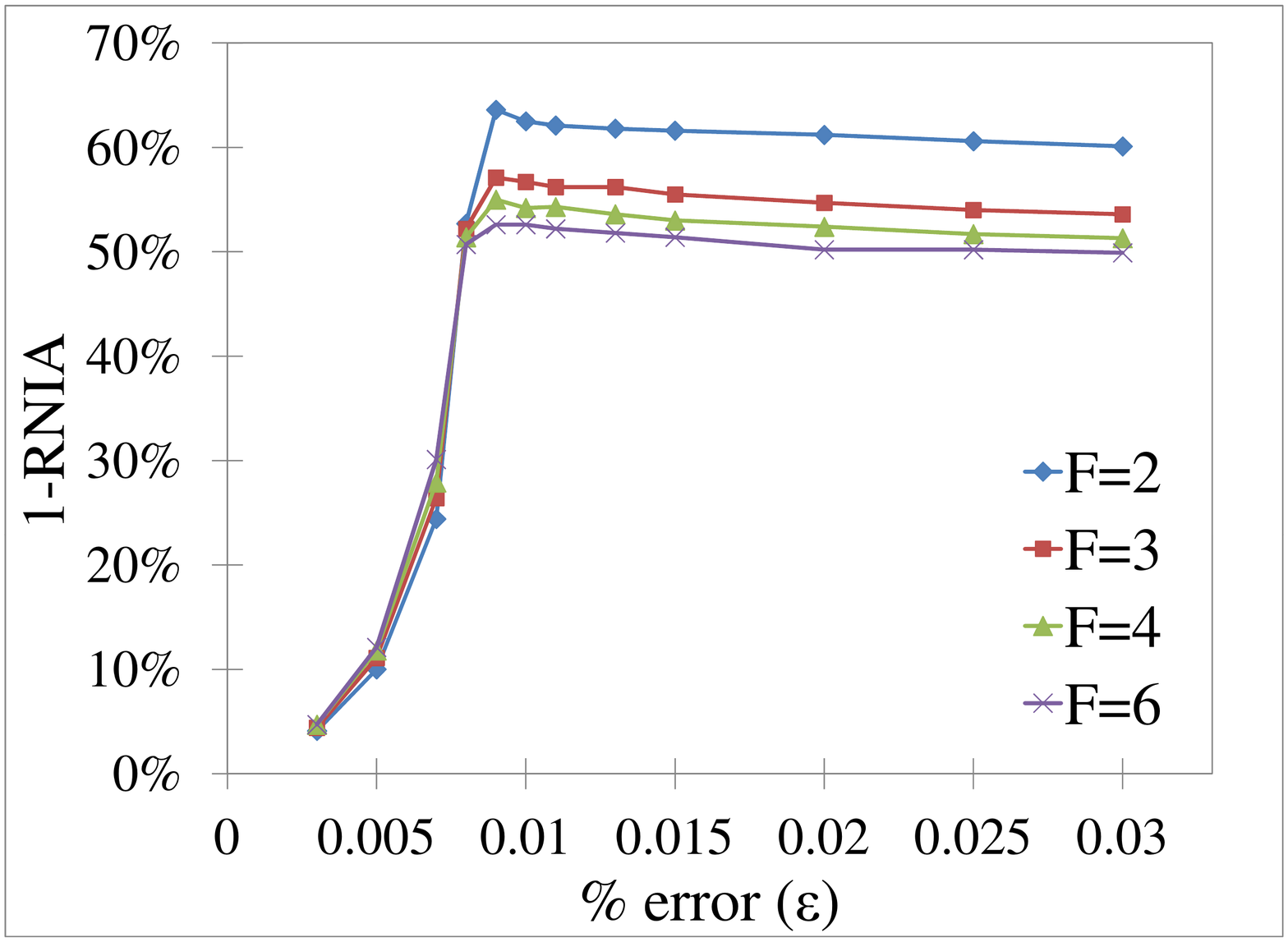}
  	\label{Expr:artificial:Expr816-Err-001-Fuz-2}
  }

  \caption{
  	The miner's capability to capture {\FLC}s as a function of the miner's settings of error and fuzziness
    (measured in terms of: \MyRNIA).
  }

  \label{Expr:artificial:Expr816}
\end{figure*}
\FIGURE~\ref{Expr:artificial:Expr816} depicts the mining performance (measured in terms of: \MyRNIA) of four different combinations of error and fuzziness of the planted clusters ($\SW\!\in\!\{0.01\%, 0.1\%\}$, $\MF\!\in\!\{2, 4\}$) as a function of the miner's configuration of error and fuzziness.
Obviously the cluster in each configuration can be mined only if the fuzziness used by the miner is greater than or equal to the maximal fuzziness allowed when constructing the planted cluster. Therefore the miner's fuzziness configuration was set to $\MF\ge2$ in the case where the planted cluster was of maximum fuzziness of $2$ and $\MF\ge4$ in the case of maximum fuzziness of $4$.
The figure presents the average score over $100$ trials for each of the four combinations of the planted clusters' parameters and for each configuration of the miner.
The graphs demonstrate that, as expected,
the best performances are achieved when the miner is set to
the fuzziness of the planted cluster and to an error within the surrounding of the planted cluster.
In addition, the higher the error or fuzziness to which the miner is set, the lower the performance achieved.
This is due to increasing noise being added to the mined clusters.
%
The charts in \FIGURE~\ref{Expr:artificial:Expr816} are significant as they establish the importance of introducing fuzziness into the lagged-pattern model.



\subsection{Experiments with Flight of Pigeon Flocks} \label{subsec:Expr:birds}

\def \hf {homing flight}
\def \Hf {Homing flight}
\def \HF {Homing Flight}
\def \ff {free flight}
\def \Ff {Free flight}
\def \FF {Free Flight}

\def \ALGFLC    {\FLCANB}
\def \ALGDBSCAN {DBSCAN}

\def \FOne   {{F$_1$}}
\def \FScore {{\FOne\ score}}

\def \HEIGHT    {0.2\textheight} 
\def \WIDTH	    {0.48\textwidth} 

In a second set of experiments, we examined the capability of the {\FLCA} algorithm to mine clusters from real-life data.
One key goal for these experiments was to demonstrate the extent of improvement achieved in terms of mining
coherency when transitioning from the lagged model to the fuzzy lagged model.
For that purpose, we used two real-life datasets containing GPS readings\footnote{
\label{subsec:Expr:birds:GPS:x-y-coordinates}
Of the GPS readings, only the $x$ and $y$ coordinates were used.
This is due to the error of the $z$-coordinate which is
much larger than those of the horizontal directions \cite{nagy2010hierarchical}.
}
of the flight of pigeon flocks  \cite{nagy2010hierarchical} (see a snapshot in  \FIGURE~\ref{Expr:pigeon-hf-interleaving}):
(1) \hf\ data, consisting of four different datasets recording the flights of pigeons from point A to point B;
and (2) \ff\ data, consisting of 11 different datasets recording the flights of pigeons around the home loft,
i.e., flight from point A back to point A.
Each dataset (four of \hf\ and 11 of \ff) represents a different \textit{flock} release, containing an average of nine individuals.
\begin{figure}
  \centering

  \subfloat[\hf] {
  	\includegraphics[clip=true,trim=58 67 62 70,height=\HEIGHT, width=\WIDTH]{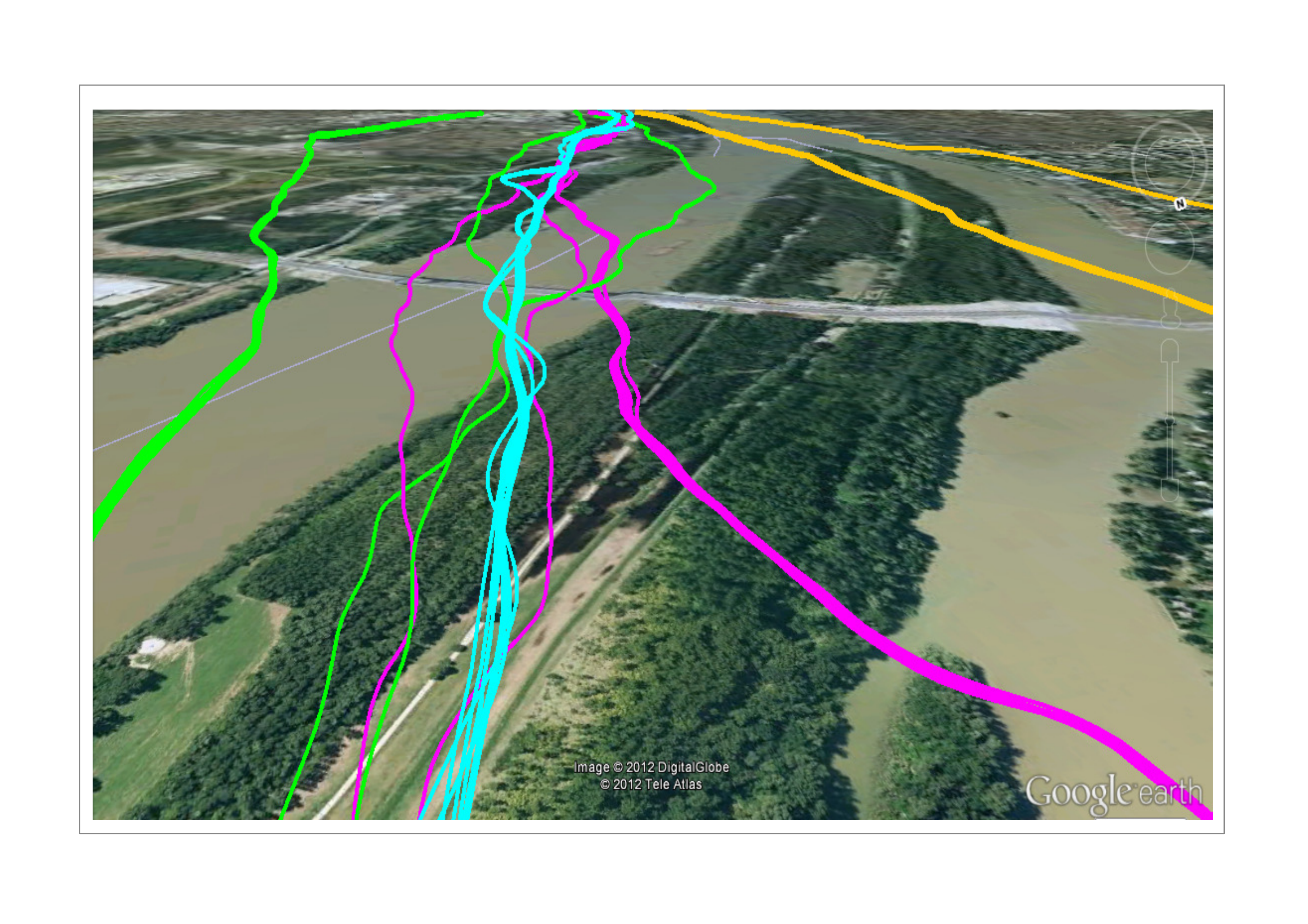}
  	\label{Expr:pigeon-interleaving-hf}
  }
  \subfloat[\ff\ and \hf] {
  	\includegraphics[clip=true,trim=88 427 474 50,height=\HEIGHT, width=\WIDTH]{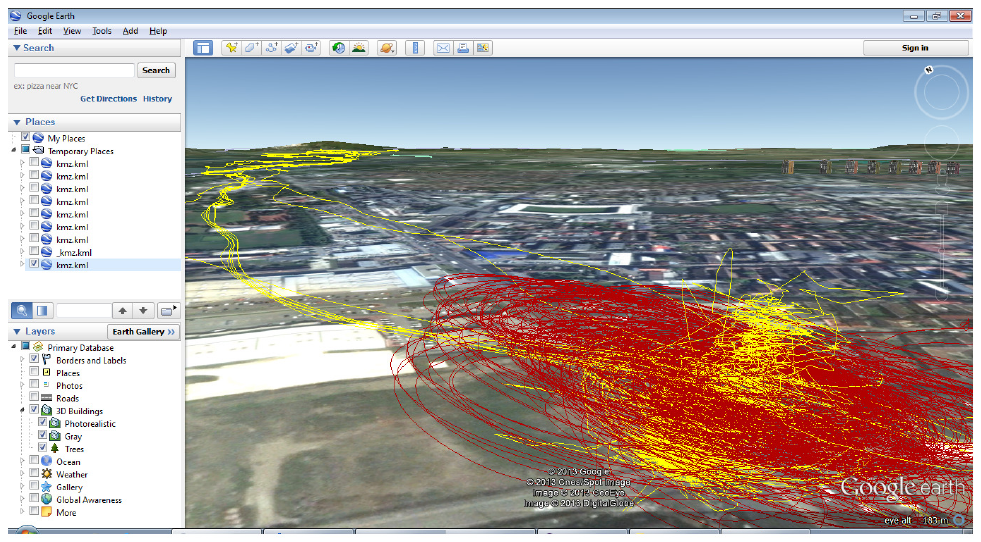}
    \label{Expr:pigeon-interleaving-hf-ff}
  }

  \caption{
   	Snapshot of the \hf\ and \ff\ datasets.
   	Each line represents a pigeon's trajectory.
   	Pigeons belonging to the same flock are painted in the same color.
   	The figures illustrate the presence of interleaving trajectories
   	and therefore the difficulty in mining clusters
   	which \emph{\textbf{only}} contain pigeons of the same flock.
   	These datasets present a serious challenge to mining algorithms
   (e.g., density-based algorithms such as \ALGDBSCAN~\cite{ester1996density},
   see \EXPERIMENT~\ref{subsubsec:Expr:birds:hf} and \ref{subsubsec:Expr:birds:hf.vs.ff}),
   as well as to humans (see \EXPERIMENT~\ref{subsubsec:Expr:birds:hf:humans}).
  }
  \label{Expr:pigeon-hf-interleaving}
\end{figure}

Generally speaking, a flock's flight formation is a lagged pattern where the lag is the distance between the fliers.
Nevertheless, clustering the pigeons according to their flock membership is not a trivial task.
The trajectories of the flock members depend on multiple parameters:
flier (e.g., physical ability, navigation capabilities, leader-follower relationships, threats)
and weather conditions (e.g., wind streams, temperature).
Many of these parameters change dramatically over time and space. The data are inherently noisy due to human error and equipment inaccuracy (e.g., GPS errors$/$inaccuracies$/$distortion, loss of signal, device failure).
A further complication in the dataset is that flight trajectories are spatially close and highly interleaved.
This is caused by the fact that flocks were all released (at different times) from a similar location heading to the same destination.
For example, the \hf\ pigeons followed the Danube river for about 15km until reaching their loft.
This lack of spatial differentiation imposes a great mining challenge, especially to density-based algorithms, which might mistakenly merge trajectories that belong to different flocks.
Therefore, mining such datasets for {\FLC}s is highly complex.

In the following experiments,
we consider a cluster to be accurate if \emph{all} the participating pigeons belong to the \emph{same} flock.
We note that it is unlikely to mine a cluster containing all pigeons in the flock as it is fairly common for pigeons
to deviate from their flock for a substantial period of the flight (e.g., during flight no.3, two birds broke away from the group
soon after release).
Thus, such deviating pigeons cannot be accurately clustered.

\subsubsection{Mixed Datasets: \HF\ and \FF} \label{subsubsec:Expr:birds:hf.vs.ff}

The goal of the experiment is to test the \textit{error} and \textit{fuzziness} impact on the precision and recall of the mined clusters.
To do so, we use a dataset compiled from mixed pairs of a \hf\ and a \ff\ dataset
(we compile $44$ different pairs of datasets which are the result of four {\hf}s and $11$ {\ff}s dataset combinations,
see example in \FIGURE~\ref{Expr:pigeon-interleaving-hf-ff}).
We ran the {\FLCA} algorithm on each pair of datasets with
various errors $\SW$$\in$[0.005\%--0.5\%] 
and fuzziness $\MF$$\in$[0--10] 
combinations, recording the {\FScore}\footnote{
	\textbf{\FOne} score (also known as \textbf{F-measure}) is defined as:
	\FOne $= 2\cdot (precision \cdot recall) / (precision+recall)$ \cite{van1979information}.
	In terms of Type-I and type-II errors:
	\FOne $= (2\cdot true\ positives) / (2\cdot true\ positives + false\ negatives + false\ positives)$.
} of the mined clusters.
In addition, we ran the \ALGDBSCAN\ algorithm~\cite{ester1996density} in order to compare its results to the \ALGFLC\ algorithm.
The \ALGDBSCAN\ algorithm has two main parameters:
(1) distance, denoted $Eps$, which represents the maximum neighborhood of a point;
and (2) density, denoted $MinPts$, which represents the minimum number of points within the neighborhood of a point.
The \ALGDBSCAN\ algorithm was run on each pair of datasets,
with various combinations of distance $Eps$$\in$[0.001--10000] and density $MinPts$$\in$[2--10000],
recording the \FScore\ of the mined clusters.

\def \FHeight  {5.0cm}

\def \HEIGHT    {0.2\textheight} 
\def \WIDTH	    {0.48\textwidth} 
\def \ALG	    {} 
\begin{figure}
  \centering

  \subfloat[\ALGFLC\ALG: \FOne\ vs. \%error ($\SW$).] { 
  	\includegraphics[clip=true,trim=68 90 114 104,height=\HEIGHT, width=\WIDTH]{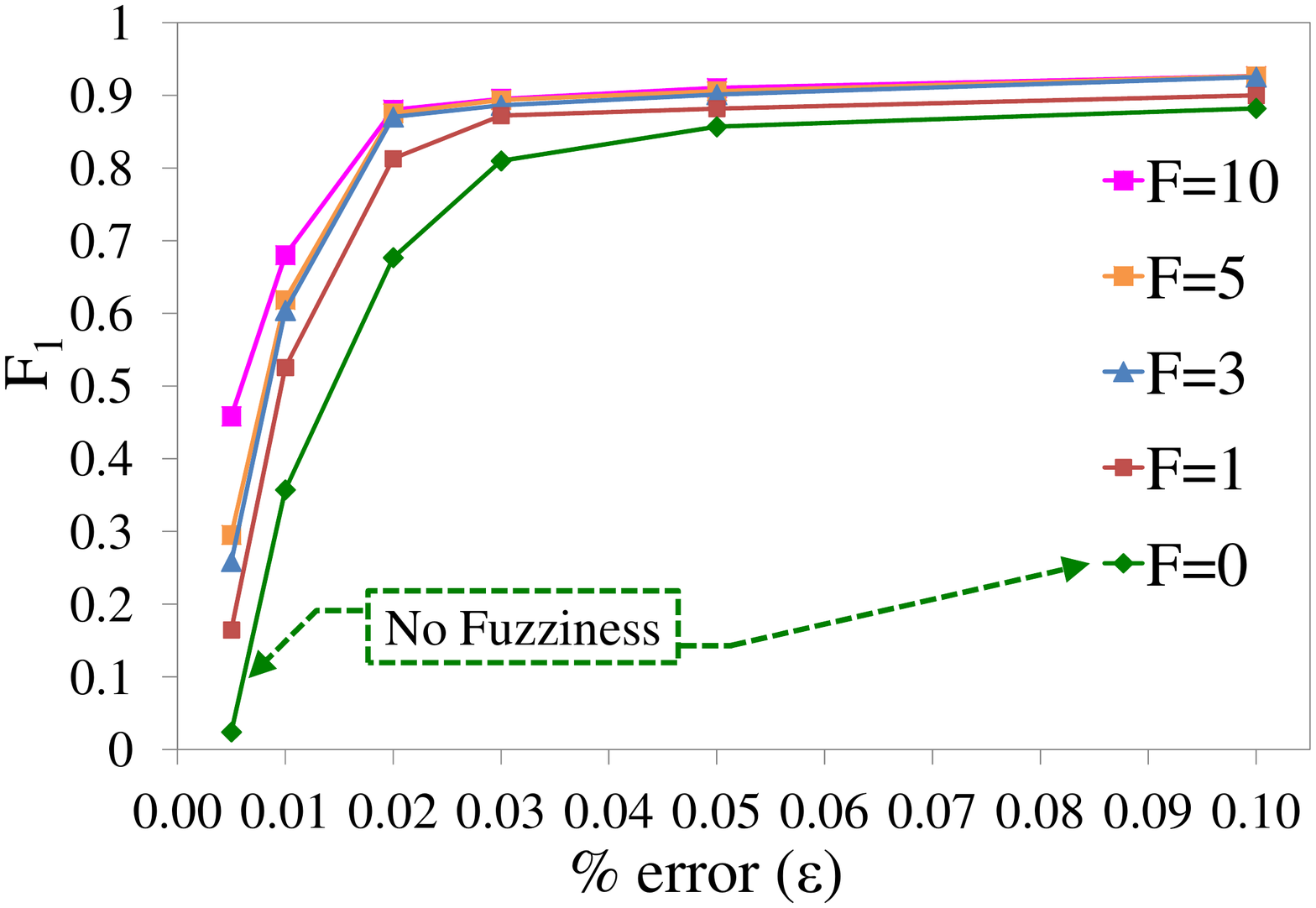}
  	\label{Expr:pigeon-hf-ff-err-impact}
  }
  \subfloat[\ALGDBSCAN\ALG: \FOne\ vs. distance ($Eps$).] { 
  	\includegraphics[clip=true,trim=64 60 88 79,height=\HEIGHT, width=\WIDTH]{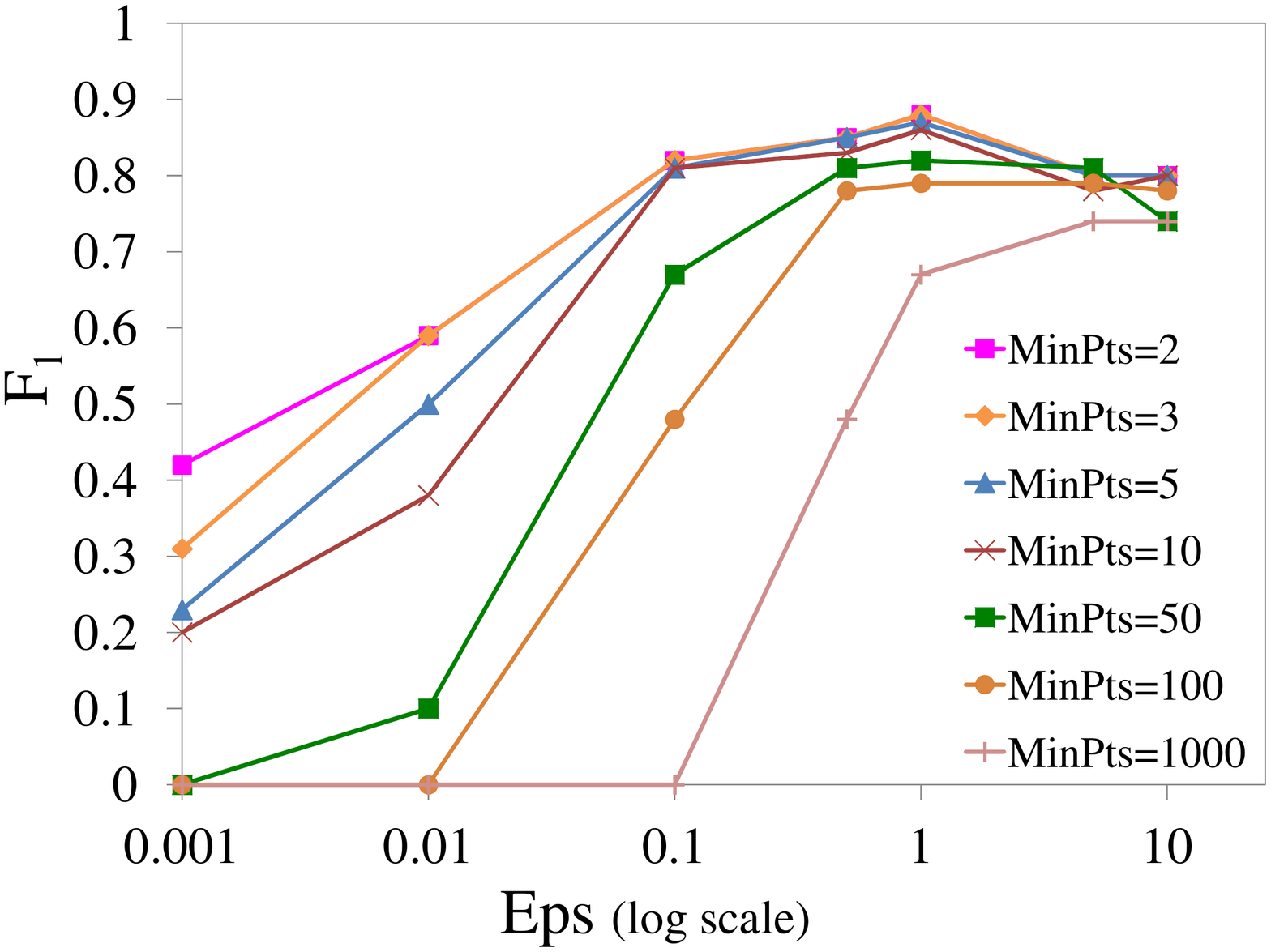}
  	\label{Expr:pigeon-hf-ff-dbscan-eps-impact}
  }


  \subfloat[\ALGFLC\ALG: \FOne\ vs. fuzziness ($\MF$).] {
  	\includegraphics[clip=true,trim=60 88 100 104,height=\HEIGHT, width=\WIDTH]{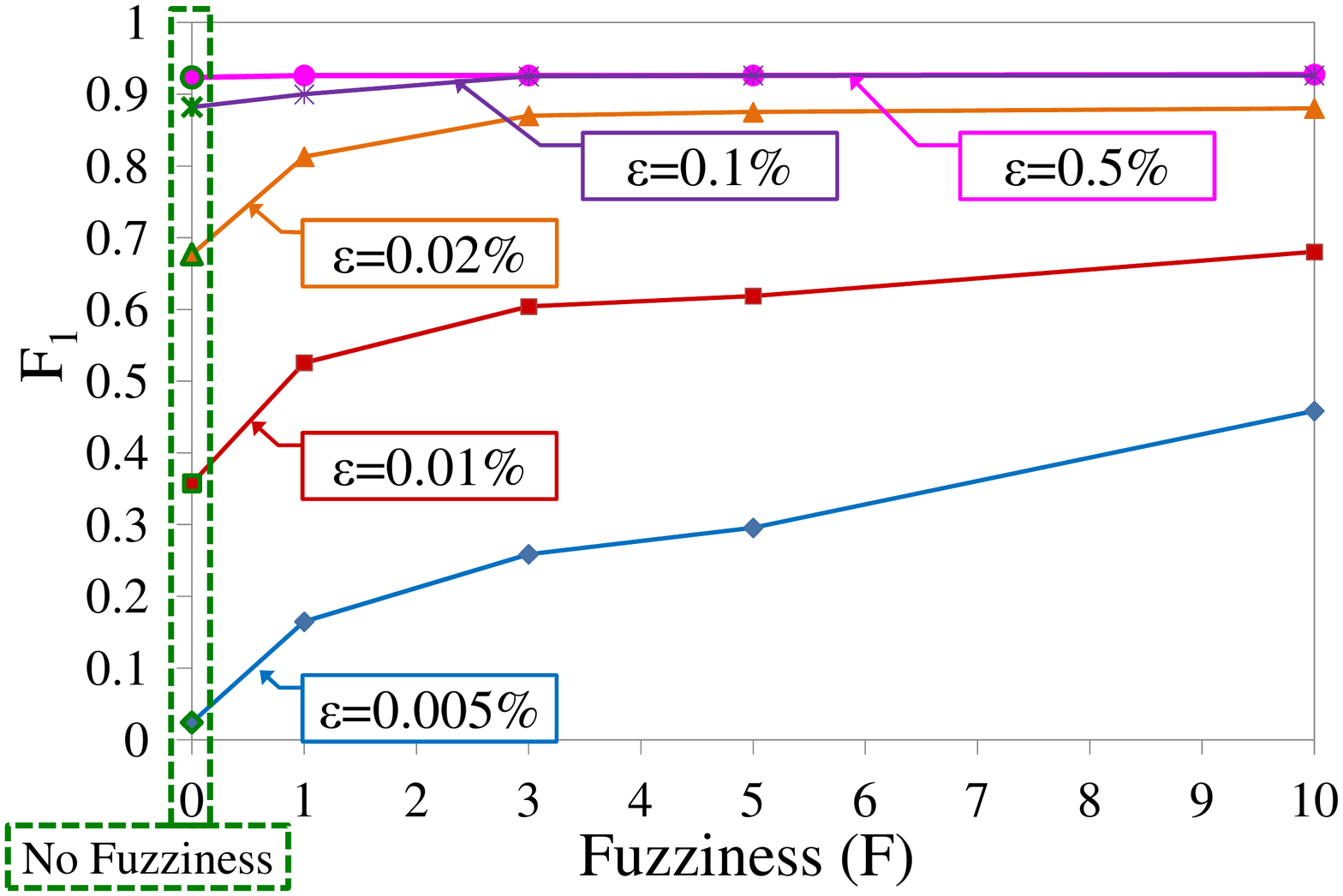}
	\label{Expr:pigeon-hf-ff-fuzz-impact}
  }
  \subfloat[\ALGDBSCAN\ALG: \FOne\ vs. density ($MinPts$).] {
  	\includegraphics[clip=true,trim=64 58 88 79,height=\HEIGHT, width=\WIDTH]{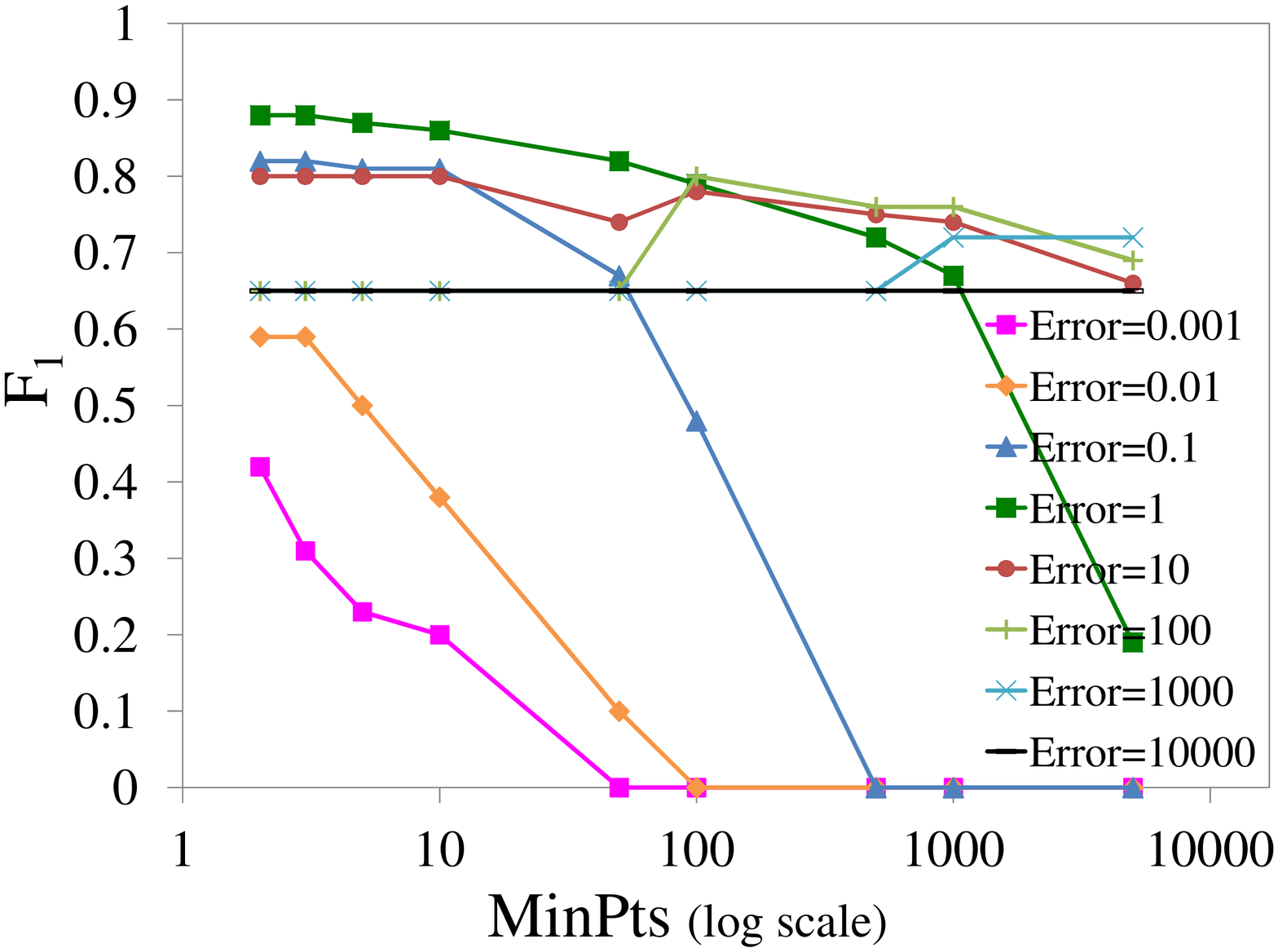}
  	\label{Expr:pigeon-hf-ff-dbscan-minpts-impact}
  }

  \caption{
    \FScore\ comparison of the \ALGFLC\ and \ALGDBSCAN\ algorithms.
    On the left, \FIGURE~\ref{Expr:pigeon-hf-ff-err-impact} and \FIGURE~\ref{Expr:pigeon-hf-ff-fuzz-impact}
    depict the \FScore\ as obtained by the \ALGFLC\ algorithm as a function of the error ($\SW$) and fuzziness ($\MF$), respectively.
    On the right, \FIGURE~\ref{Expr:pigeon-hf-ff-dbscan-eps-impact} and \FIGURE~\ref{Expr:pigeon-hf-ff-dbscan-minpts-impact}
    depict the \FScore\ as obtained by the \ALGDBSCAN\ algorithm as a function of the distance ($Eps$) and density ($MinPts$), respectively.
  }
  \label{Expr:pigeon-hf-ff}
\end{figure}
\FIGURE~\ref{Expr:pigeon-hf-ff-err-impact} and \ref{Expr:pigeon-hf-ff-fuzz-impact} depict the \FScore\ of the \ALGFLC\ algorithm
as a function of the error ($\SW$) and fuzziness ($\MF$), respectively.
$\MF$=0 is in fact the case of mining lagged clusters with \emph{no} fuzziness.
As expected, any increase in $\SW$ or $\MF$ results in an increase in the \FOne\ score
as more data points are reachable from the cluster's seed.
An important finding obtained from the figures is that for relatively low errors,
a significant increase in the \FScore\ is recorded when fuzziness is used.
For example, for $\SW$=0.005\%, we obtain for $\MF$=\{0, 1, 3, 5, 10\}
a score of \FOne=\{0.024, 0.164, 0.259, 0.295, 0.458\},
which reflects an increase of a factor of \textbf{$\times$}=\{1, 7, 10, 12, 19\}, respectively.
Figures~\ref{Expr:pigeon-hf-ff-dbscan-eps-impact} and \ref{Expr:pigeon-hf-ff-dbscan-minpts-impact}
depict the equivalent performance of \ALGDBSCAN\ for the same settings (i.e., \FScore\ as a function of the distance ($Eps$) and density ($MinPts$), respectively) used for generating
figures~\ref{Expr:pigeon-hf-ff-err-impact} and \ref{Expr:pigeon-hf-ff-fuzz-impact}.
Comparison of the \ALGFLC\ and \ALGDBSCAN\ algorithms reveals the stability
of setting the \ALGFLC\ parameters vs. the sensitivity of configuring the \ALGDBSCAN\ parameters
(surveyed in \cite{berkhin2006survey,han2001spatial}).
In addition, even when considering the best configuration for the \ALGDBSCAN\ algorithm,
the \ALGFLC\ algorithm still outperforms the best \FScore\ achieved with \ALGDBSCAN.
The comparison of \FIGURE~\ref{Expr:pigeon-hf-ff-err-impact} and \FIGURE~\ref{Expr:pigeon-hf-ff-dbscan-eps-impact}
reveals the difference in the error (distance) behavior between the \ALGFLC\ and \ALGDBSCAN\ algorithms, respectively
(note:
the \ALGDBSCAN\ uses an Euclidean distance measure ($L_2$ norm),
while the \ALGFLC\ uses the Manhattan distance measure ($L_1$ norm)).
While the \ALGFLC\ algorithm maintains a high \FScore\ as the error increases,
the \ALGDBSCAN\ algorithm results in mining futile clusters of \FOne=0.66 containing the entire dataset
(the datasets used contain two classes with an equal number of members. Therefore, a cluster containing the entire dataset,
will have a recall=1.0, precision=0.5 and thus \FOne=0.66). 
The comparison of \FIGURE~\ref{Expr:pigeon-hf-ff-fuzz-impact} and \FIGURE~\ref{Expr:pigeon-hf-ff-dbscan-minpts-impact} reveals that
while the fuzziness parameter of the \ALGFLC\ algorithm steadily increases the achieved \FScore,
the effect of the density parameter of the \ALGDBSCAN\ algorithm is not conclusive and depends on the adjacent error value.

The latter finding strengthens the importance and necessity of the fuzzy lagged model.
The mining of \textit{non-fuzzy} {\LC}s achieves a substantially lower \FScore\ in comparison to the mining of {\FLC}s.
Indeed, an increase in the \FOne\ measure can also be achieved by increasing the error $\SW$;
however, this is dangerous as any increase in $\SW$ substantially increases the chance of mining artifacts.
In addition, the increase in error may not always achieve a high \FScore.
For example, datasets with large spatial distances between the data points would require an error so large that it might cover the entire dataset, which in turn results in futile clusters.
On the other hand, the use of fuzziness as part of the model, enables mining accurate and coherent clusters without increasing the allowable error.
In addition, as illustrated in \FIGURE~\ref{Expr:pigeon-hf-ff},
a score close to $1$ for \FOne\ can be
obtained even when considering moderate fuzziness.

\subsubsection{\HF\ Dataset} \label{subsubsec:Expr:birds:hf}

In order to examine the algorithm's ability to properly classify each pigeon to its flock,
we merged all four \hf\ datasets resulting in a matrix of size $[37 \times 13892]$,
comprising 13892 GPS readings of 37 pigeons.

\label{subsubsec:Expr:birds:hf:humans}
To demonstrate the difficulty of clustering the dataset into flocks (i.e., specifying how many flocks are present)
we conducted the following experiment.
We asked $25$ people, of differing sex, age, occupation and nationality,
to specify how many flocks they could identify in the dataset.
For that purpose, we enabled them to use Google Earth to view the pigeons'
trajectories (see \FIGURE~\ref{Expr:pigeon-hf-human} for sample snapshots of the user view).
The subjects could use all functionalities within Google Earth
(e.g., view the trajectories from different angles, enlarge, and so on), and were given $5$-minute to reach an answer.
The average answer was $6.5$, with a standard deviation of $3.2$. Only $16\%$ of the subjects gave the correct answer
(i.e., $4$ flocks).
The results indicate that the dataset cannot be trivially mined.
\def \FHeight  {4.0cm}
\begin{figure*}
  \centering
  \subfloat[] {
  	\includegraphics[clip=true,trim=56 90 81 102,height=\FHeight]{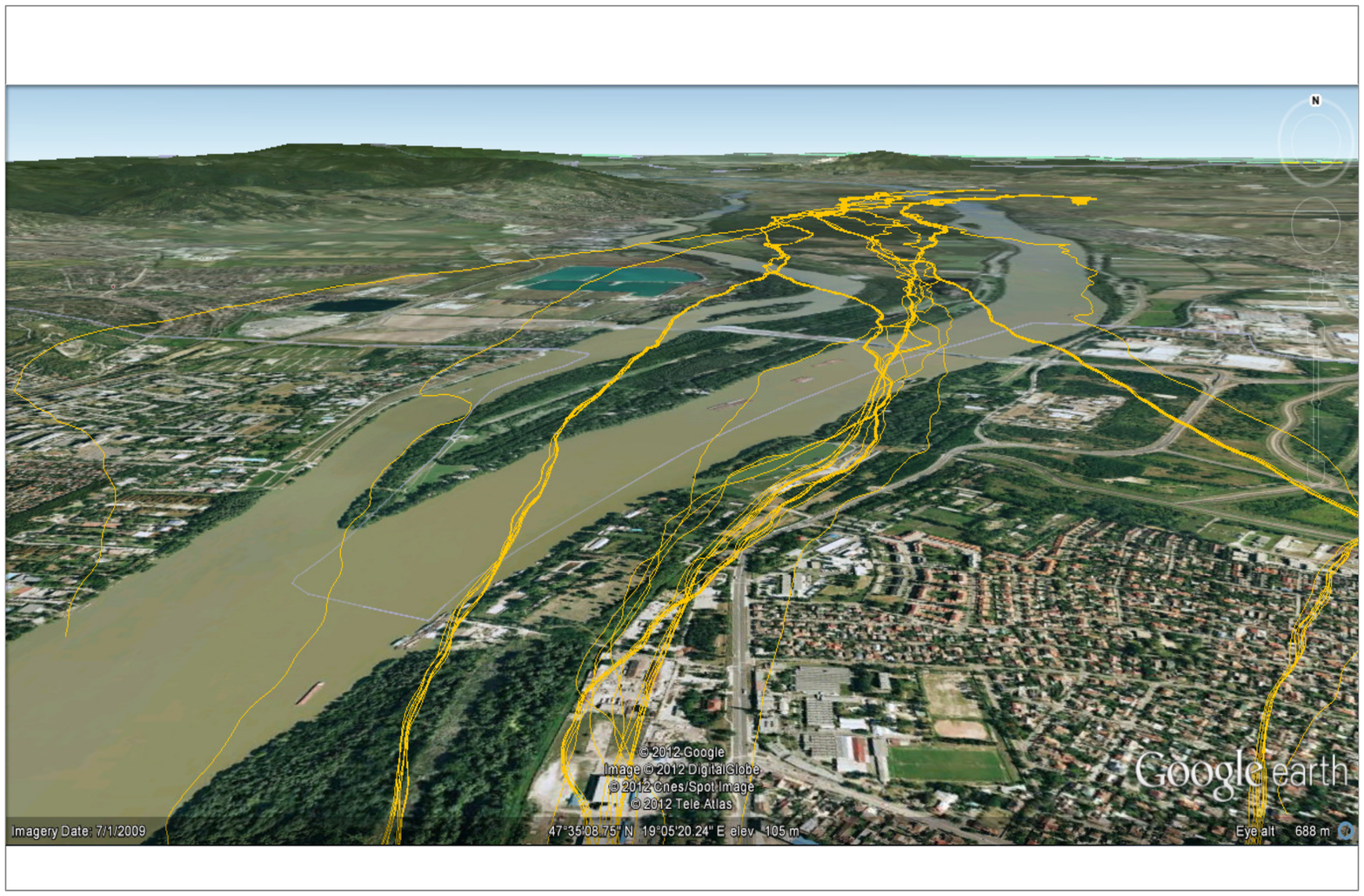}
  }
  \subfloat[] {
  	\includegraphics[clip=true,trim=56 90 80 102,height=\FHeight]{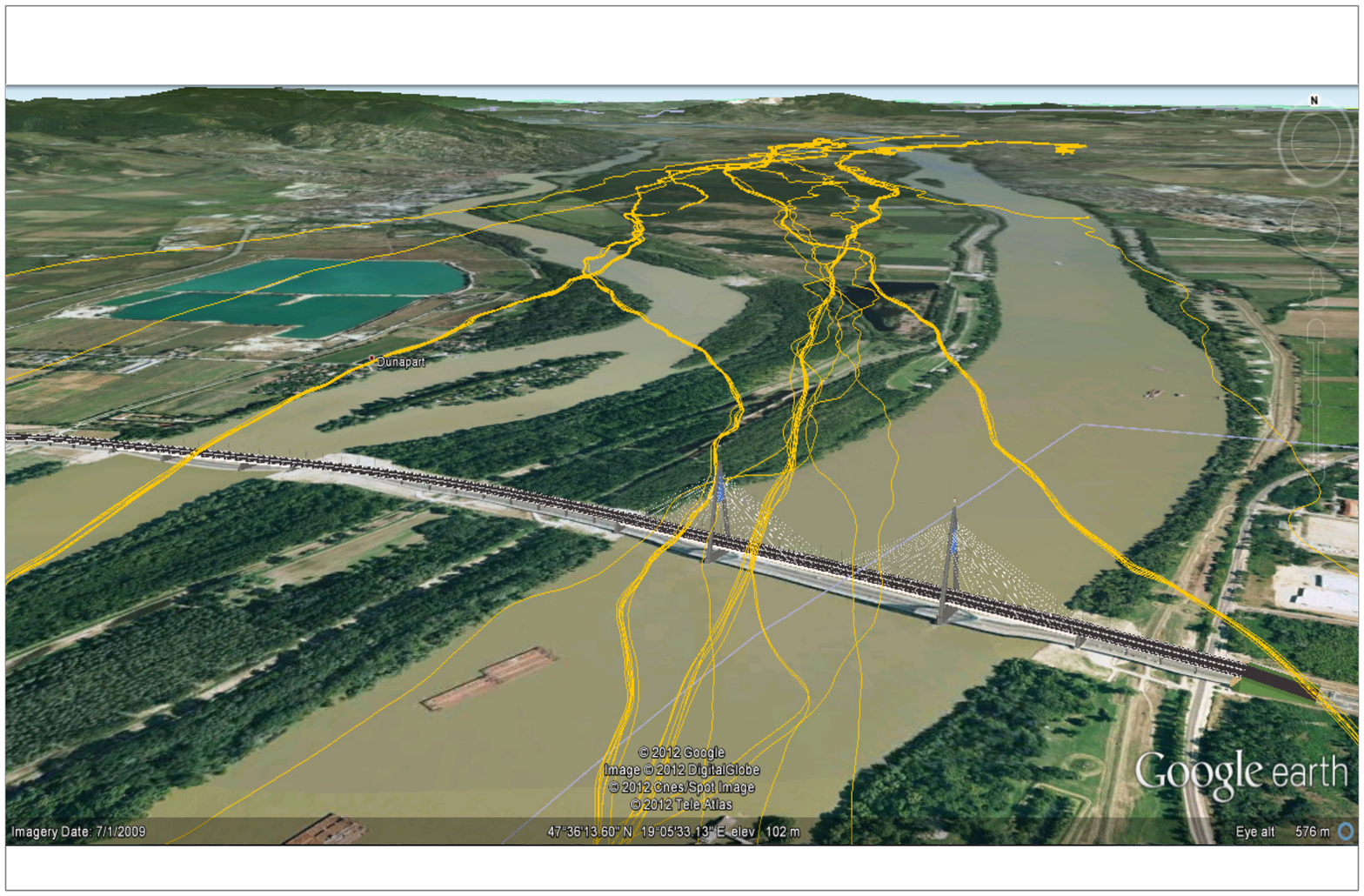}
  }
  \caption{
    Snapshots of a user view of the \hf\ dataset.
    The user was asked to specify the number of flocks (the correct answer is four).
  }
	\label{Expr:pigeon-hf-human}
\end{figure*}

%
%
\def \HEIGHT    {0.2\textheight} 
\def \WIDTH	    {0.48\textwidth} 
\begin{figure*}
	\centering

    \subfloat[Entropy vs. fuzziness.]
    {
	   \includegraphics[clip=true,trim=64 62 78 70,height=\HEIGHT, width=\WIDTH]{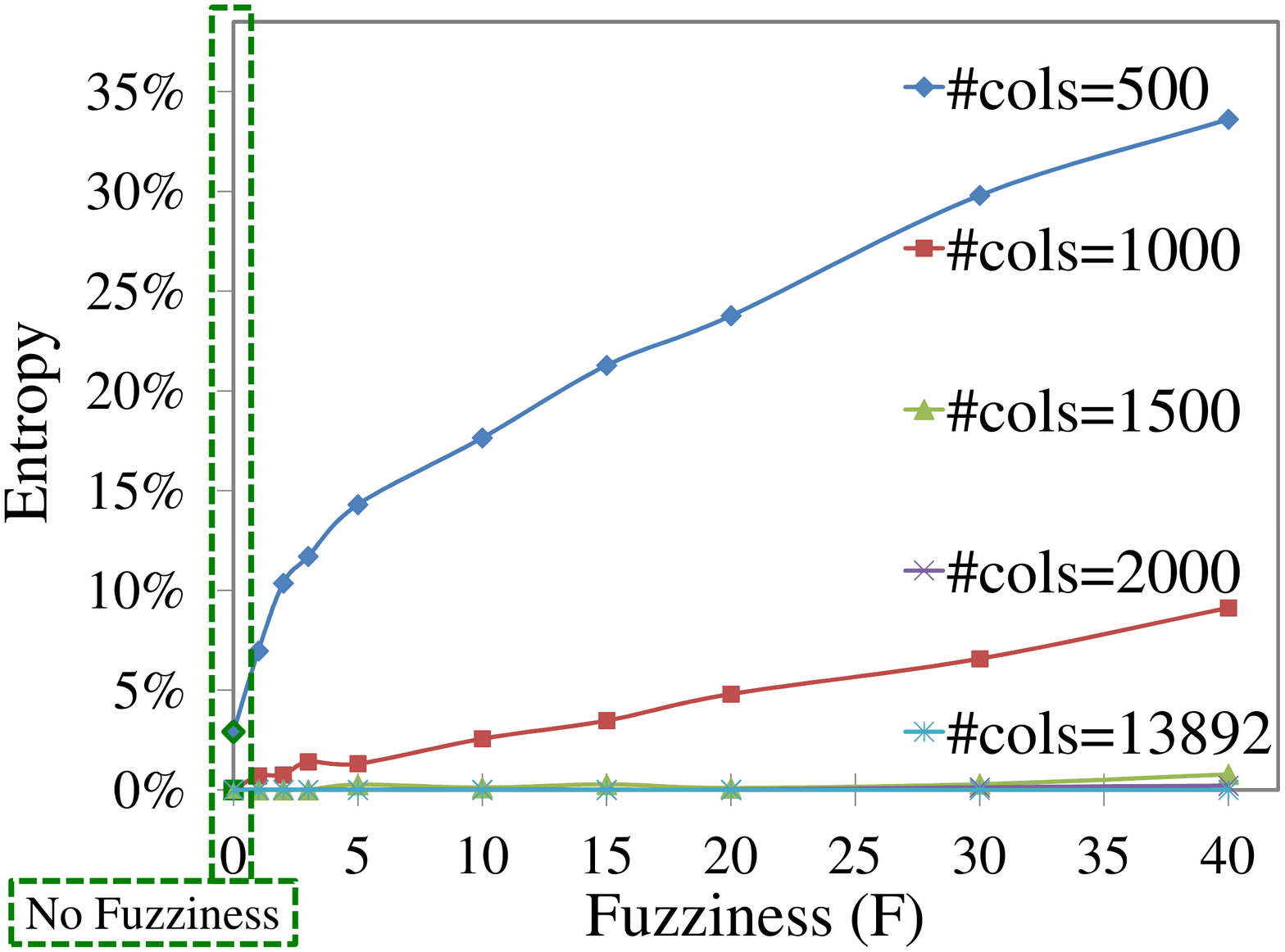}
       \label{Expr:pigeon-hf-confusions:entropy}
    }
    \subfloat[Inter-flock vs. fuzziness.]
    {
	   \includegraphics[clip=true,trim=70 62 78 70,height=\HEIGHT, width=\WIDTH]{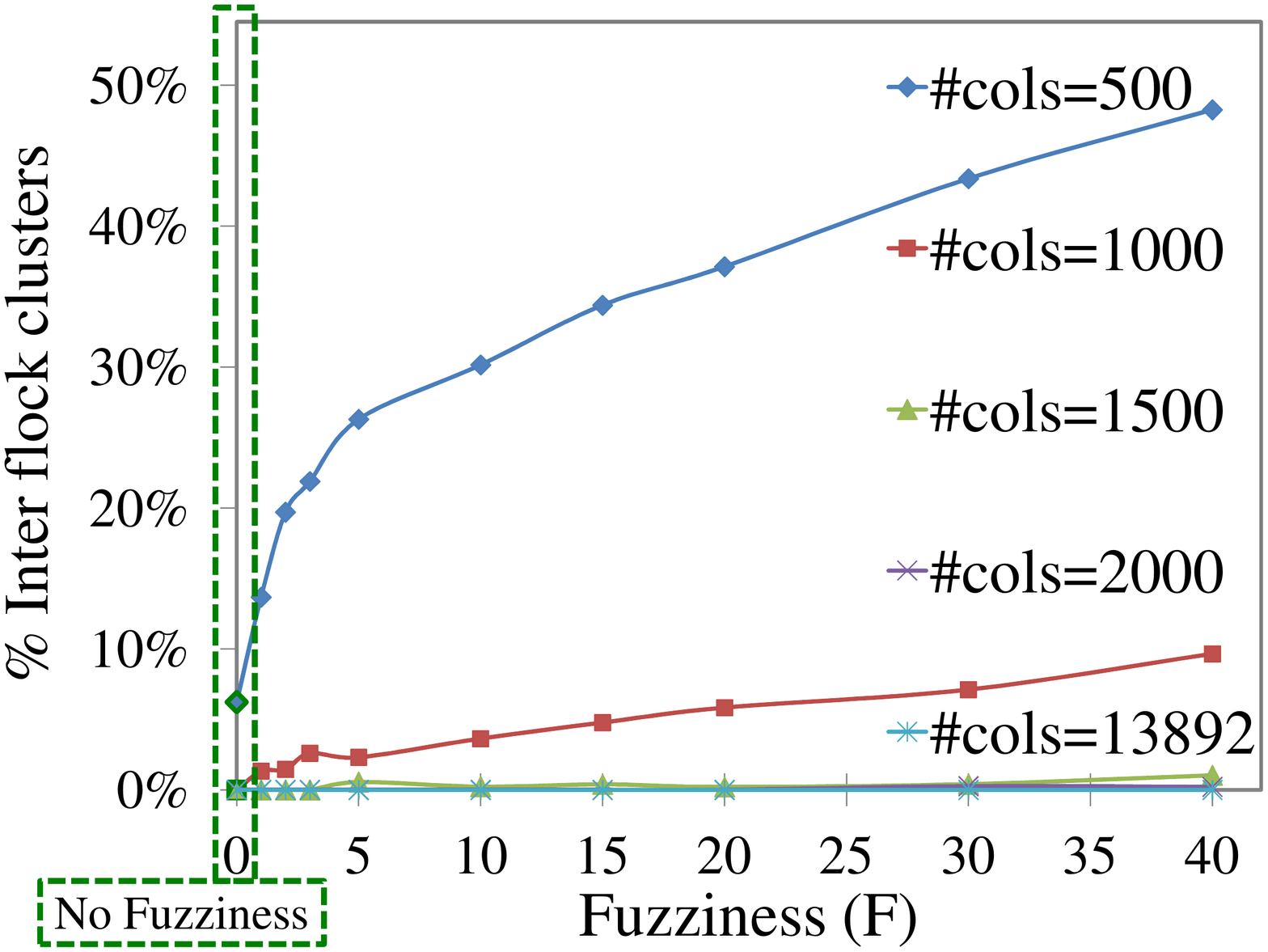}
       \label{Expr:pigeon-hf-confusions:inter-flock}
    }
	\caption{
        Confusions of the mined clusters.
        \FIGURE~\ref{Expr:pigeon-hf-confusions:entropy} presents the entropy,
        while \FIGURE~\ref{Expr:pigeon-hf-confusions:inter-flock} presents the
		percentage of inter-flock clusters, i.e., clusters containing pigeons of different flocks.
		The lower the percentage, the higher the accuracy of the cluster.
        \REM {
		The probability of an inter-flock cluster
		for small number of columns (i.e., less than 1000, which is 7\% of the dataset columns)
		is high.
		Due to the interleaving of the pigeons' trajectories, using a small number of columns does not supply enough
		data to discriminate between pigeons belonging to different flocks.
		On the other hand, using a large enough dataset (i.e., more than 10\% of the dataset columns),
		results in an insignificant probability of mining an inter-flock cluster,
		even with respect to a growing fuzziness.
        }
	}
	\label{Expr:pigeon-hf-confusions}
\end{figure*}
%
To examine how many flocks the {\FLCA} algorithm would specify, we conducted the following experiment.
We ran the {\FLCA} algorithm with
an error set to $\SW$=0.005\% (following \FIGURE~\ref{Expr:pigeon-hf-ff-fuzz-impact},
this error setting would enable the examination of the fuzziness effect)
and various values of fuzziness $\MF$$\in$\{0, 1, 2, 3, 5, 10, 15, 20, 30, 40\} for 20,000 trials.
In addition, we wished to examine whether the use of partial knowledge (i.e., partial dataset),
which necessarily reduces the overall mining run-time,
would preserve the quality of the mining results.
To do so,
each of the above settings was run on various datasets comprising
4\%, 7\%, 11\%, 14\% and 100\% of the dataset's columns (500, 1000, 1500, 2000 and 13892 of dataset's columns, respectively).
%
\FIGURE~\ref{Expr:pigeon-hf-confusions} depicts the accuracy of the mined {\FLC}s.
\FIGURE~\ref{Expr:pigeon-hf-confusions:entropy} presents the entropy of the mined clusters
as a function of the fuzziness $\MF$ and the number of columns.
The entropy of a cluster $C$ is computed as:
$H(C) = - \sum\nolimits_{i=1}^k p(i|C) \cdot \log(p(i|C))$
for $k$ class labels in cluster $C$.\footnote{
Due to the fact that classes are generally of the same size (membership-wise), no problem of imbalanced biasing arises.
}
Ideally, a cluster $C$ should contain objects of only one class and thus, have a zero entropy.
For a set of clusters, we take the average entropy weighted by the number of objects per cluster.
For readability, we normalize the entropy to the range of 0\% to 100\% by dividing by the maximum entropy,
i.e., $H(C)/\log(k)$ \cite{assent2007dusc,sequeira2004schism}.
\FIGURE~\ref{Expr:pigeon-hf-confusions:inter-flock} presents the percentage of inter-flock clusters,
i.e., clusters which contain pigeons from different flocks, as a function of the fuzziness $\MF$ and the number of columns.
As the results demonstrate,
due to the interleaving of the pigeons' trajectories, using a small number of columns does not supply enough
data to discriminate between pigeons belonging to different flocks.
The probability of an inter-flock cluster for small number of columns (i.e., less than 1000, which is 7\% of the dataset columns)
is high.
On the other hand, when using a large enough number of columns
(i.e., more than 10\% of the dataset columns)
the probability of mining an inter-flock cluster is insignificant, even with respect to a growing fuzziness.
Although we do not claim the generality of this approach,
this latter finding is important in the aspect of run-time, as also the use of a partial dataset yields promising results.
Moreover, we can use a post-process procedure which merges clusters that share common objects
(i.e., clusters that had at least one pigeon in common were merged).
Thus, with a high probability, the merged clusters will represent the different flocks.
\FIGURE~\ref{Expr:pigeon-hf-nb-flock} presents the number of flocks (as yielded by the post-process stage)
as a function of the fuzziness $\MF$ and the number of columns.
\begin{figure}[htb]
	\centering
	\includegraphics[clip=true,trim=56 60 68 70,height=5cm]{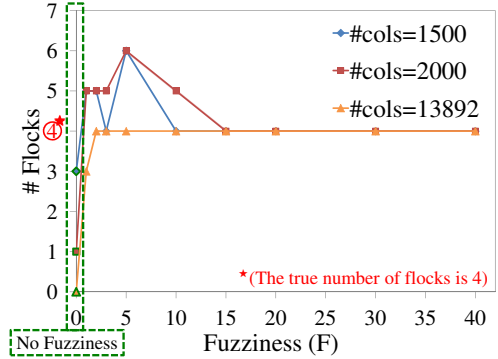}
	\caption{
	  Number of flocks (as yielded by the post-process stage)
	  as a function of the fuzziness and the number of columns.
		We notice that for $\MF$=0 (i.e., using the {\LC}ing model) we do not obtain the correct answer (i.e., $4$),
		regardless of the number of columns being used.
		On the other hand, when using the {\FLC}ing model,
		we quickly converge to the correct result, as the value of $\MF$ increases.
		Furthermore, we observe a quick convergence to the correct answer as the number of columns increases 		
		(e.g., for 13892 columns, we already obtain the correct result when using $\MF$=2).
	}
	\label{Expr:pigeon-hf-nb-flock}
\end{figure}
We notice that for $\MF$=0 (i.e., using the {\LC}ing model) we do not obtain the correct answer (i.e., four), regardless of the number of columns being used.  On the other hand, when using the {\FLC}ing model, we quickly converge to the correct result, as the value of $\MF$ increases.  Furthermore, we observe a quick convergence to the correct answer as the number of columns increases (e.g., for 13892 columns, we already obtain the correct result when using $\MF$=2).
As observed from the figure, the use of fuzziness led to the correct finding (four flocks) regardless of the number of columns used. Both the number of columns and the value of $\MF$ have a positive effect on the speed of convergence to the correct finding. In particular, with a large number of columns, only a very small level of fuzziness needs to be considered.
In contrast, using the \emph{non-}fuzzy model yielded on average only one group (one flock).
As we next show, this is due to poor mining results as reflected by the coverage of $\MF$=0 in \FIGURE~\ref{Expr:pigeon-hf-coverage}.
Worth mentioning in this context
is that human subjects gave an answer of (on average) 6.5 flocks.

A by-product of the post-process merging stage is the actual flock coverage, i.e., how many of the flock members have been covered by the mined clusters.
\FIGURE~\ref{Expr:pigeon-hf-coverage} depicts the
average flock coverage (over the numbers of columns$\in$\{1500, 2000, 13892\}) for the \hf\ dataset as a function of the fuzziness $\MF$.
\begin{figure}[htb]
	\centering
	\includegraphics[clip=true,trim=60 62 68 70,height=5cm]{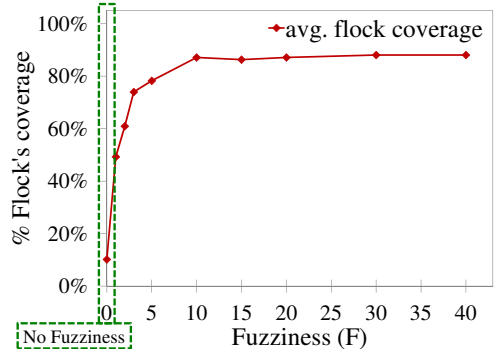}
	\caption{
		Average flock coverage vs. fuzziness for the \hf\ dataset
		(we present only the average as the graphs for the various settings were insignificantly different).
		Compared to the non-fuzzy model, using the fuzzy model notably improves the coverage results.
		Even with the use of $\MF$=1, a substantial improvement is achieved in comparison to traditional methods that
		do not take the fuzziness into account (i.e., the regular {\LC}ing approach).
		In this example, the improvement is from a coverage of 10\% (with $\MF$=0)
		to a coverage of 50\% with $\MF$=1.
		The use of higher fuzziness, e.g., $\MF$=40, brings the coverage up to $\sim$90\%.
	}
	\label{Expr:pigeon-hf-coverage}
\end{figure}
The results show a significant improvement in the accuracy and completeness of the mining process when using
the fuzzy model ($\MF$$\geq$1) in comparison to the non-fuzzy model ($\MF$=0).
Even the use of a fuzziness of a single column (i.e., $\MF$=1) has a notable impact of \textbf{$\times$}5 on the coverage.
The use of high $\MF$, provides coverage of $\sim$90\% of the flocks' members.

To compare the performance of the \ALGFLC\ algorithm, we ran the \ALGDBSCAN\ algorithm~\cite{ester1996density}
with various combinations of distance $Eps$$\in$[0.0001--10000] and density $MinPts$$\in$[2--500000].
\begin{figure}[htb]
	\centering
    \includegraphics[clip=true,trim=0 0 0 0 ,height=5.0cm]{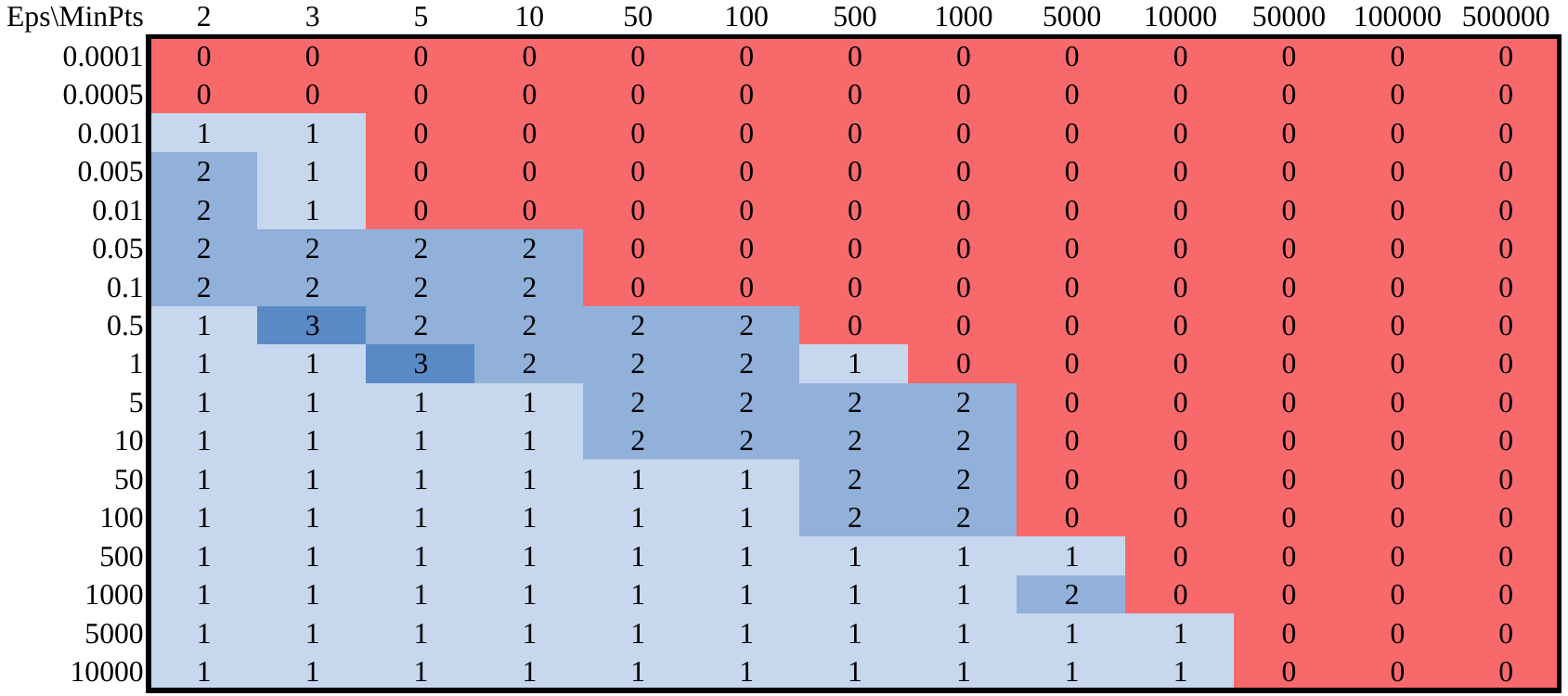}
	\caption{
	  Number of flocks as obtained by the \ALGDBSCAN\ algorithm~\cite{ester1996density}
	  as a function of the distance ($Eps$) and density ($MinPts$) as yielded by the post-process merging stage.
      Actual number of flocks is four.
	}
	\label{Expr:pigeon-hf-dbscan-nb-flock}
\end{figure}
\FIGURE~\ref{Expr:pigeon-hf-dbscan-nb-flock} depicts the
number of flocks as obtained by the \ALGDBSCAN\ algorithm
(as yielded by the post-process stage) as a function of the distance and density used.
For the vast majority of the settings, the \ALGDBSCAN\ algorithm either does not find any clusters (upper right portion -- red color)
or the clusters that it does find contain the entire dataset (lower left portion -- blue color) and are therefore futile.
Only for a negligible number of settings
does the \ALGDBSCAN\ algorithm manage to report three flocks, which even then is not the correct answer of four.
We note that the \ALGDBSCAN\ algorithm results as presented in \FIGURE~\ref{Expr:pigeon-hf-dbscan-nb-flock}
are based on a post-process merging stage, as the results without such a stage were considerably inferior.
On the same range of settings as in \FIGURE~\ref{Expr:pigeon-hf-dbscan-nb-flock},
the obtained average number of flocks without the post-processing stage was $87$ with a standard deviation of $463$,
where less than 0.5\% of the settings yield the correct answer of $4$.

To summarize, mining this dataset is not a trivial task.
This is evidenced by the poor classification results of the
human subjects, the \ALGDBSCAN\ algorithm and the non-fuzzy lagged method.
Furthermore, when using a non-fuzzy mining model, the obtained results are characterized by a low \FOne\ score and low coverage.
On the other hand, the {\FLCA} algorithm performed well in mining {\FLC}s on various trajectories.
It achieved a high \FOne\ score and high coverage, doing so with only a small number of artifact (inter-flock) clusters.
Although prior domain knowledge can be useful in configuring the miner's parameters, such knowledge is not mandatory.
\SUBSECTION~\ref{subsection:algorithm:Proofs} provides default values for setting
the parameters $|\DS|$ and $\LOOPS$ (see \THEOREM~\ref{proofs:theorem:d3} and \ref{proofs:theorem:rc}, respectively).
In order to choose an appropriate value of error, one can adopt any of the methods suggested
for the non-fuzzy {\LC}ing model \cite{shaham2011sc}, e.g., gradual increase, starting from a relatively small error.
Setting the minimum cluster dimensions (i.e., $\MI$ and $\MJ$) to relatively small values
would suffice for mining accurate clusters (e.g., $\MI$=2 and $\MJ$=10\% as depicted by \FIGURE~\ref{Expr:pigeon-hf-confusions}).
When setting the fuzziness, the use of small values            (e.g., $\MF$=1,2 as depicted by \FIGURE~\ref{Expr:pigeon-hf-nb-flock} and \ref{Expr:pigeon-hf-coverage}) already effects a considerable improvement in the clustering results over the non-fuzzy ones.
%
In conclusion, the significant improvement in mining presented by the {\FLCA} algorithm, in comparison to the non-fuzzy algorithm,
demonstrates the importance of including the fuzzy aspect in the model.
The {\FLCA} algorithm can thus be used as a classifier in this domain.


\section{Related Work} \label{sec:Related Work}

With the vast amount of routinely collected data, the need for clustering as a mining tool emerges in many fields:  biology, physics, economics and computer science are but a short list of domains with a wealth of research in this direction \cite{jain1999data,laxman2006survey}.
A typical mining problem is the extraction of patterns from a dataset, where the rows represent objects, the columns represent attributes and the data entries are the measurements of the objects over the attributes \cite{jain1999data,jiang2004cag}.

Simple mining techniques look for a fully dimensional cluster: a subset of the objects over \textit{all} attributes (or vice versa)
\cite{jiang2004cag,plerou1999universal,erdal2004tsa, shapira2009index}.
These techniques have several inherent vulnerabilities, e.g., difficulty in handling the common presence of irrelevant, noisy or missing attributes and inaccuracy due to the ``curse of dimensionality'' \cite{bellman1966dp,moise2009subspace,shi2011coid,beyer1999nearest,kriegel2009clustering}.
All these may be counter-productive as they increase background noise \cite{jiang2004cag,madeira2004bab}.

Cheng and Church \cite{cheng2000biclustering}, in their seminal work in the field of gene expression data, introduced a mining technique which focus on mining biclusters (also known as {\CCS}s or co-regulations): a \emph{subset} of the objects over a \emph{subset} of the attributes. Their approach was followed by many researchers
(see surveys by \cite{madeira2004bab,tanay2005bas,moise2009subspace,berkhin2006survey,jiang2004cag,kriegel2009clustering}),
using various models (additive vs. multiplicity, axis alignment, rows over columns preferment, cluster scoring function, overlapping, etc.), and applying various algorithmic strategies:
greedy \cite{cheng2000biclustering,ayadi2011bicfinder},
divide-and-conquer \cite{hartigan1972direct},
projected clustering \cite{lonardi2006fbr,procopiuc2002mca},
exhaustive enumeration \cite{tanay2002dss},
spectral analysis \cite{kluger2003sbm,takacs2010spectral},
CTWC \cite{getz2000ctw},
bayesian networks \cite{barash2002context},
etc.

Due to the importance of datasets having a temporal nature (i.e., sequences of time series), specific efforts have been directed at utilizing the continuous nature of time as a natural order \cite{jiang2003iec,barjoseph2002naa,jiang2004mcg,mollerlevet2005cos}, surveyed by \cite{roddick2002survey,warren2005clustering}.
%
In particular, some have considered the delay (lag) between the object's behaviors and suggested different approaches for mining {\LC}s. These include dynamic-programming and hierarchical-merging (with pruning) \cite{yin2007mining,wang2010efficiently}, polynomial time Monte-Carlo strategies to mine {\LC}s which encompass the optimal {\LC} \cite{shaham2011sc}, and the reduction to some finite alphabet \cite{gonccalves2013heuristic}.
Despite the efficiency of these approaches in mining {\LC}s, they become ineffective when the lagged pattern is fuzzy.
This is due to their underlying assumption of non-noisy (i.e., fixed) lags.

Existing methods that are inherently designed for mining {\FLC}s can be categorized into three types, each imposing a different design limitation.
%
The first is a group of methods designed for mining \emph{pairs} of sequences using variants of the edit distance measures,
such as the Longest Common Sub Sequence (LCSS) measure \cite{vlachos2002discovering}
and others \cite{yi1998efficient,chen2005robust,chen2004marriage},
surveyed in \cite{ishikawa2010data}. 
However, post-processing merging requires a combinatorial solution which is both time consuming and heavily dependent on the closeness of the merit function \cite{jiang2004cag,shapira2009index}.
Furthermore, pairs of objects might lack the transitivity characteristic, e.g., two stocks may appear to be correlated, while in fact the correlation is to the index which dominates them in volatile trading days \cite{shapira2009index}. Investments based on this in nonvolatile times may lead to poor results.
Moreover,
pairs might simply be merged as a mere aggregation of noise and not due to some hidden regulatory mechanism \cite{beyer1999nearest,shaham2011sc}.
%
The second type uses a space reduction approach to some finite alphabet \cite{girardin2009quantifying,giannotti2007trajectory,alvares2007model,pelekis2010unsupervised,pelekis2010clustering,pelekis2009clustering,ji2005identifying}, e.g., each trajectory coordinate is approximated to a grid cell.
The main limitation of such methods is the reduction magnitude.
On one hand, coarse abstraction using a small alphabet may lead to greater errors and finer clusters being missed. On the other hand, using a large alphabet will have
a dramatic influence on the run-time as it is exponentially dependent on the alphabet size.
%
Finally, there are methods that assume sequentiality of the cluster's columns,
e.g., flock mining \cite{lee2007trajectory,benkert2008reporting}.

\label{Related-Work:density-based}
A popular technique for mining clusters of trajectories is the \emph{density-based} approach.
This approach uses the spatial closeness between data points to associate them into clusters \cite{gudmundsson2007efficient,hwang2005mining,pelekis2009clustering,al2007dimensionality}.
A well known representative of this technique is the DBSCAN algorithm \cite{ester1996density}
(followed by derivative algorithms of a non-temporal
\cite{palma2008clustering,ankerst1999optics,ng1994efficient,zhou2007discovering,ma2004adaptive,tang2011exploring}
and temporal
\cite{birant2007st,sander1998density,kisilevich2010event,lee2007trajectory,tang2011exploring}
nature,
surveyed in
\cite{berkhin2006survey,han2001spatial}).
The disadvantage of the density-based approach is its sensitivity to noise (e.g., signal distortion), outliers (e.g., erroneous GPS measurements), missing values (in real-life, devices may be voluntarily disconnected by their owners, or be subject to machine failures or lost signal),
related objects which are spatially distant (e.g., a roaming group where members are far apart from each other) and
crossing trajectories of unrelated objects (e.g., trajectories of different groups interleave).
These algorithms will find the data used in \SUBSECTION~\ref{subsec:Expr:birds} challenging
as trajectories of different groups interleave.
This may cause clusters to contain inter-group trajectories, and thus, fail classification.

We note that all the works cited above were also unable to find substantial previous reference to the \FLCP\
and that state-of-the art algorithms for this problem are either non-fuzzy \cite{wang2010efficiently,shaham2011sc}
or limited to data points which are spatially close \cite{ester1996density}.

On the application side, the last few years have witnessed an increasing interest in {\FLC}ing.
This is attributed to the dramatic increase in location-aware devices (e.g., cellular, GPS, RFID).
Such devices leave behind \ST\ electronic trails.
A dataset of such trajectories is of a \textit{fuzzy lagged} nature.
Commercial services such as Foursquare, Google Latitude, Microsoft GeoLife, and Facebook Places, use such data to maintain location-based social networks (LBSN), later used for personal marketing purposes.
Another use of such data is to extract patterns
(see surveys in \cite{koperski1996spatial,han2001spatial,antunes2001temporal})
which may suggest Places Of Interest (POI) \cite{palma2008clustering,kisilevich2010event,kisilevich2010novel,girardin2009quantifying,girardin2008digital}.
This is mostly used for the purposes of
tourism \cite{girardin2008leveraging,girardin2008digital,girardin2009quantifying,palma2008clustering,asakura2007analysis},
urban planning \cite{girardin2008digital,girardin2009quantifying,gudmundsson2008movement},
crowd control \cite{girardin2009quantifying,gudmundsson2008movement},
traffic management \cite{palma2008clustering,pelekis2010unsupervised,pelekis2010clustering,gudmundsson2008movement} 
and behavioral sciences \cite{forsyth2009group,wang2006efficient,lauw2005mining}. 

\section{Discussion, Conclusions and Future Work} \label{sec:Conclusions}

The importance of clustering is unquestionable and has been thoroughly discussed and demonstrated in cited prior work.
Similarly, the extensive {\CC} literature includes many examples of the benefit of mining {\CCS}s as opposed to traditional approaches.
The {\FLC} model generalizes the {\LC} model, enabling the inclusion of an additional important dimension, a \textbf{fuzzy aspect}, in the regulatory paradigm.
The results reported in the previous section
not only corroborate the algorithm's ability to efficiently mine relevant and accurate {\FLC}s,
but also illustrate the importance of including fuzziness in the lagged-pattern model.
With the fuzziness dimension, a significant improvement is achieved in both \emph{coverage} and \emph{\FOne} measures in comparison to using the regular {\LC}ing model.
One important strength of the new model relates to the chance of mining artifacts. 
In order to enlarge the dimensions of the mined clusters, 
traditional non-fuzzy methods tend to increase the error which in turn increases the risk of mining artifacts.
The fuzzy model provides the user with the capability of keeping a low level of error, while improving the achieved performance, without introducing artifacts.
%

As proved in \SUBSECTION~\ref{subsection:NP-completeness},
the complexity of mining {\FLC}s is {\NPC} for most
interesting optimality measures. 
Thus the importance of the algorithm presented in this paper lies in promising the probability of mining an optimal {\FLC} and a theoretical bound to the polynomial number of iterations it will take. In addition, the algorithm demonstrates a set of important capabilities such as handling noise, missing values, anti-correlations and overlapping patterns. Moreover, even if lagged clusters with no fuzziness at all need to be mined, 
the {\FLCA} algorithm has a better run-time in comparison to former algorithms inherently designed for such cases \cite{yin2007mining,wang2010efficiently} (including the Monte-Carlo based algorithms \cite{shaham2011sc}).
It is notable that due to the Monte-Carlo nature of the \FLCA\ algorithm, its iterations (and therefore, the mined clusters) are independent of each other.
The algorithm can thus be implemented to take advantage of parallel computing or special hardware in a straightforward manner.

The experiments using an artificial environment (reported in the previous section) reveal actual performance which is far better, in terms of accuracy and efficiency, than the theoretical bounds.
In addition, they supply default values for the various configurable parameters of the algorithm, releasing the user from this burden.
%
When used on real-life datasets, the {\FLCA} algorithm was demonstrated to mine precise, coherent and relevant {\FLC}s in a practicable run-time and with almost no artifacts.
This is in contrast to inferior results obtained by using a non-fuzzy model and
despite the fact the datasets were large, highly noisy, contained many missing values, and were rich in overlapping clusters.
In addition, the {\FLCA} algorithm presented classification capabilities which were superior to the ones presented by the non-fuzzy lagged model, those of human subjects and to the \ALGDBSCAN\ algorithm.
This encouraging result is important in the sense of model validation and suggests great potential for mining {\FLC}s in many other fields of science, business, technology and medicine.

As in the non-fuzzy lagged model, the ability of the {\FLCA} algorithm to mine {\LC}s offers important functionalities such as forecasting. However, when mining {\FLC}s, one may have to choose between possibly intersecting columns (see \SUBSECTION~\ref{algorithm:impl-note-Japaneses-Bridges-problem-solution}).
One can utilize the intersecting mechanism to place weights on the matrix 
columns so as to enable the mining of more ``recent'' clusters. 
It is reasonable to assume that the latest (up-to-date) columns will contribute more to the accuracy of the forecast than old and possibly irrelevant ones.
We believe there is far more that can be developed in this aspect in terms of future research.

	\bibliographystyle{abbrv}
	\bibliography{_NOT_SEND_FOR_SUBMISSION/Main}



\section*{Author Biographies}
\leavevmode

\def \WIDTH	    {60pt}  
\def \VSPACEA   {-10pt}  
\def \VSPACEB   {50pt}  
\def \VSPACEC   {60pt}  

\vbox{%
\begin{wrapfigure}{l}{80pt}
{\vspace*{\VSPACEA}\fbox{
\includegraphics[angle=0, clip=true,trim= 30 97 420 72 ,width=\WIDTH]{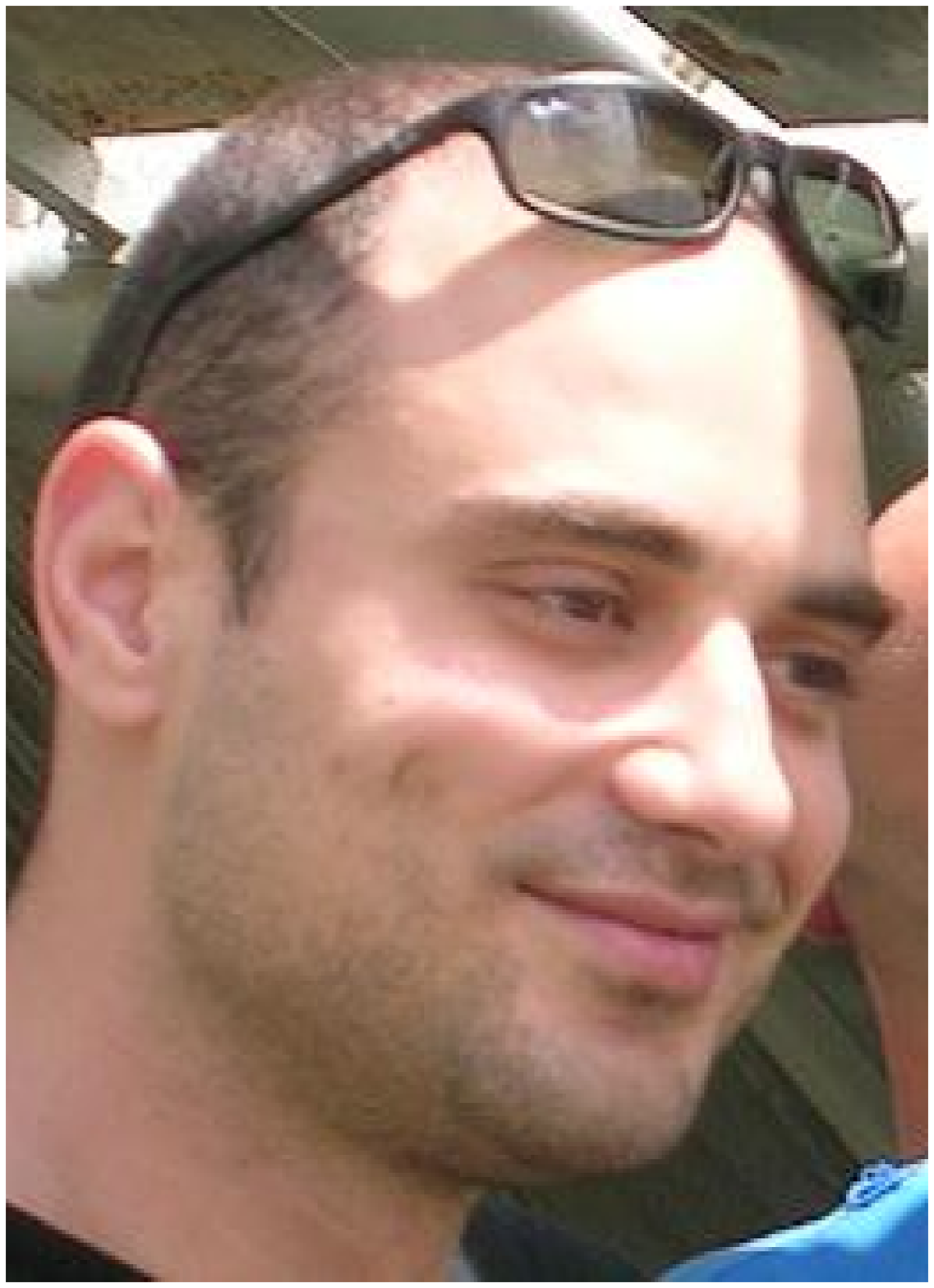}
}\vspace*{\VSPACEB}}%
\end{wrapfigure}
\noindent\small
{\bf Eran Shaham}
In 1998, Eran Shaham received his B.Sc. degree in Mathematics and Computer Science from Ben-Gurion University, Israel.
From 1999 to 2001, he worked at Parametric Technology Corporation (PTC), Israel.
In 2004, he received his M.Sc. degree in Computer Science from Ben-Gurion University, Israel.
From 2005 to 2008, he worked at the IBM Haifa Research Lab, Israel.
He is currently a Ph.D. student at the Department of Computer Science, Bar-Ilan University, Israel.
His research interests include data mining in general and its lagged aspects in particular.
\vadjust{\vspace{\VSPACEC}}
}

\vbox{%
\begin{wrapfigure}{l}{80pt}
{\vspace*{\VSPACEA}\fbox{
\includegraphics[angle=0, clip=true,trim= 140 196 480 200,width=\WIDTH]{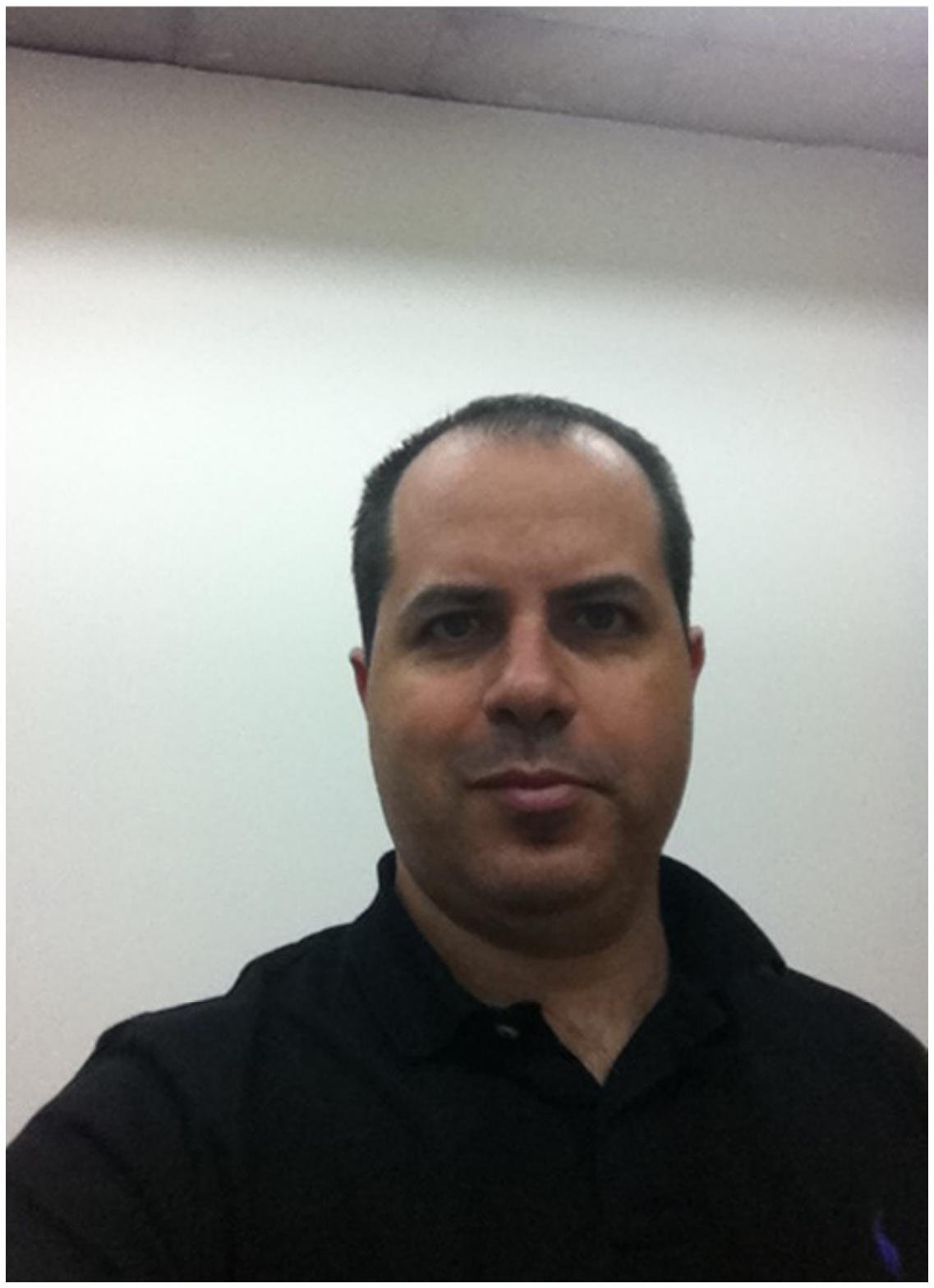}
}\vspace*{\VSPACEB}}%
\end{wrapfigure}
\noindent\small
{\bf David Sarne}
David Sarne is a senior lecturer in the Computer Science department in Bar-Ilan University, Israel.
He received a B.Sc., M.Sc., and a Ph.D. degree in Computer Science from Bar-Ilan University, Israel.
During 2005-2007 he was a post-doctoral fellow at Harvard University.
His research interests include economic search theory, market mechanisms for forming cooperation (mechanism design) and multi-agent systems.
\vadjust{\vspace{\VSPACEC}}
}

\vbox{%
\begin{wrapfigure}{l}{80pt}
{\vspace*{\VSPACEA}\fbox{
\includegraphics[angle=0, clip=true,trim= 37 117 435 72,width=\WIDTH]{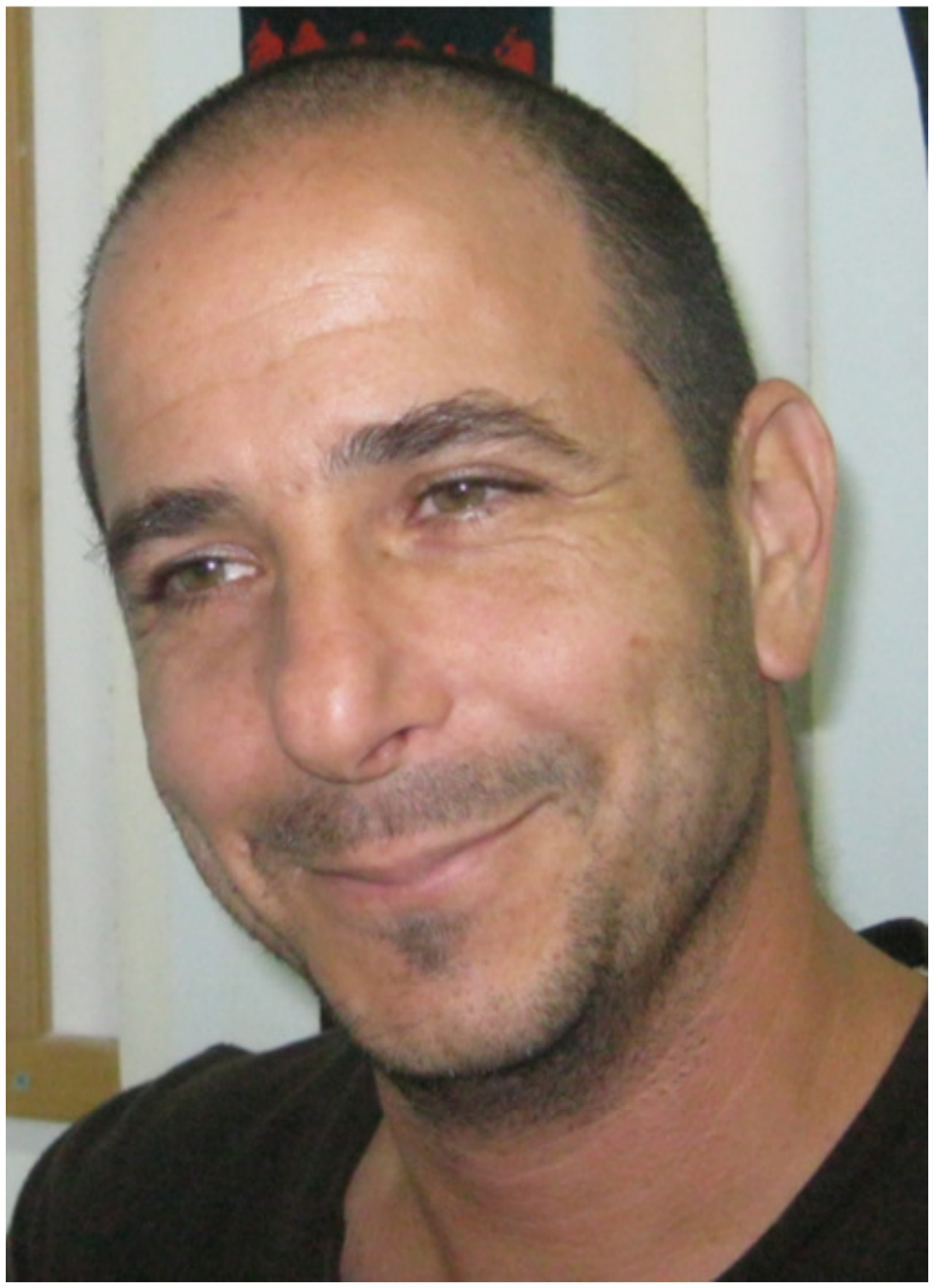}
}\vspace*{\VSPACEB}}%
\end{wrapfigure}
\noindent\small
{\bf Boaz Ben-Moshe}
Boaz Ben-Moshe is a faculty member in the Department of Computer in Ariel University, Israel. He received the B.Sc., M.Sc., and Ph.D. degrees in Computer Science from Ben-Gurion University, Israel. During 2004-2005 he was a post-doctoral fellow at Simon Fraser University, Vancouver, Canada.
His main research areas are: Computational Geometry and GIS algorithms. His research includes Geometric data compression, Optimization of wireless networks, Computing visibility graphs, and Vehicle routing problems.
In 2008 he has founded the Kinematics and Computational Geometry Laboratory with Dr. Nir Shvalb, see: \url{http://www.ariel.ac.il/sites/kcg}.
} 



\appendix \section*{}
\label{sec:Appendix}

%
\begin{table}
\renewcommand\thetable{}
\caption{
	Notations used and their meaning.
}
\label{appendix:notations}
\centering
\begin{tabular}{||l|l||} 
  \noalign{\vspace{2pt}} \cline{0-1} \noalign{\vspace{4pt}} 
	{Notation}&{Meaning} \\	
  \noalign{\vspace{4pt}} \cline{0-1} \noalign{\vspace{2pt}} \cline{0-1} \noalign{\vspace{4pt}}
	    $m$   & number of rows \\
			$n$   & number of columns \\
	    $X$   & real number matrix of size $m \times n$ \\
	    $I$   & a subset of the rows, i.e., $I \subseteq m$  \\
	    $T$   & the corresponding lags of the rows in $I$ ($|T|=|I|$)  \\
	    $J$   & a subset of the columns, i.e., $J \subseteq n$ \\
	    $\MF$   & maximal fuzziness degree  \\
	    $(I,T,J,\MF)$   & a {\FLC} of matrix $X$ \\
	    $\EF_{i,j}$   & the fuzzy alignment of object $i$ to sample $j$,\\
	          & i.e., $-\MF \leq \EF_{i,j} \leq \MF$, for all $i\in I$ and $j\in J$ \\
	    $G_i$   & a latent variable indicating object $i$'s regulation strength\\
	    $H_j$   & a latent variable indicating the regulatory intensity of sample $j$  \\
	    $\eta$   & relative error  \\
	    $A$   & $X$ logarithm transformation, i.e., $A_{i,j}=\log(X_{i,j})$  \\
	    $\varepsilon$   & $\eta$ logarithm transformation, i.e., $\varepsilon=\log(\eta)$  \\
	    $R_i$   & $G_i$ logarithm transformation, i.e., $R_i=\log(G_i)$  \\
	    $C_j$   & $H_j$ logarithm transformation, i.e., $C_j=\log(H_j)$  \\
	    $\TF(I,J)$   & objective function of a cluster \\
	    $\SWTF(I,J)$   & an error of a {\FLC} \\
	    $\MI$   & minimum number of the rows, expressed as a fraction of $m$ \\
	    $\MJ$   & minimum number of the columns, expressed as a fraction of $n$ \\
	    $\Dp$   & discriminating row ($\Dp \in I$) \\
	    $\Ds$   & discriminating column ($\Ds \in J$) \\
	    $\DS$   & discriminating column set ($\DS \subseteq J$) \\
	    $\SFZ$   & a subset of $\DS$ having zero fuzziness over all cluster's rows \\
	    $\LOOPS$   & number of iterations the {\FLCA} algorithm runs \\

  \noalign{\vspace{4pt}} \cline{0-1} 
\end{tabular}
\end{table}

	\correspond{Eran Shaham, Department of Computer Science,
							Bar-Ilan University, Ramat-Gan, 52900 Israel.
						 Email: \href{mailto:erans@macs.biu.ac.il}{erans@macs.biu.ac.il}.}
	\label{lastpage}	
\end{document}